\documentclass[lettersize,journal]{IEEEtran}

\usepackage{amsmath,amsfonts,bm}

\newcommand\numbereq{\addtocounter{equation}{1}\tag{\theequation}}







\def\va{{\bm{a}}}

\def\vu{{\bm{u}}}

\def\vw{{\bm{w}}}


\def\mY{{\bm{Y}}}

\DeclareMathAlphabet{\mathsfit}{\encodingdefault}{\sfdefault}{m}{sl}
\SetMathAlphabet{\mathsfit}{bold}{\encodingdefault}{\sfdefault}{bx}{n}

\def\gA{{\mathcal{A}}}
\def\gB{{\mathcal{B}}}

\def\gD{{\mathcal{D}}}
\def\gE{{\mathcal{E}}}
\def\gF{{\mathcal{F}}}
\def\gG{{\mathcal{G}}}

\def\gI{{\mathcal{I}}}

\def\gL{{\mathcal{L}}}
\def\gM{{\mathcal{M}}}
\def\gN{{\mathcal{N}}}
\def\gO{{\mathcal{O}}}

\def\gX{{\mathcal{X}}}



\def\sS{{\mathbb{S}}}

\newcommand{\E}{\mathbb{E}}
\newcommand{\Prob}{\mathbb{P}}

\newcommand{\Var}{\mathrm{Var}}

\newcommand{\norm}[1]{\mbox{$\left\lVert #1 \right\rVert$}}

\usepackage{amsmath,amsfonts}
\usepackage{amsthm}
\usepackage{cite}
\usepackage{url}
\usepackage{xcolor}  
\usepackage{hyperref}
\usepackage{enumitem}
\usepackage[font=small]{caption}
\usepackage{array}
\usepackage{subcaption}
\usepackage{textcomp}
\usepackage{stfloats}
\usepackage{url}
\usepackage{verbatim}
\usepackage{graphicx}
\usepackage{booktabs}
\usepackage{titletoc}
\usepackage[linesnumbered,ruled,vlined]{algorithm2e}
\allowdisplaybreaks
\hyphenation{op-tical net-works semi-conduc-tor IEEE-Xplore}
\definecolor{lightred}{rgb}{1, 0.7, 0.7}
\definecolor{lightgreen}{rgb}{0.7, 1, 0.7}

\newcommand{\com}[1]{\tiny$\pm$#1}

\begin{document}

\title{Communication-Efficient and Differentially Private Vertical Federated Learning with Zeroth-Order Optimization}

\author{Jianing Zhang,~\IEEEmembership{Student Member,~IEEE,} Evan Chen,~\IEEEmembership{Student Member,~IEEE,} Dong-Jun Han,~\IEEEmembership{Member,~IEEE,} Chaoyue Liu,~\IEEEmembership{Member,~IEEE,} Christopher G. Brinton,~\IEEEmembership{Senior Member,~IEEE}
\thanks{J. Zhang, E. Chen, C. Liu, and C. Brinton are with the Elmore Family School of Electrical and Computer Engineering, Purdue University, West Lafayette, IN 47907, USA. Email: \{zhan4670, chen4388, cyliu, cgb\}@purdue.edu.}%
\thanks{D.-J. Han is with the Department of Computer Science and Engineering, Yonsei University, Seoul 03722, Republic of Korea. Email: djh@yonsei.ac.kr.}
}



\theoremstyle{plain}
\newtheorem{theorem}{Theorem}[section]
\newtheorem{result}[theorem]{Result}

\newtheorem{proposition}[theorem]{Proposition}
\newtheorem{lemma}[theorem]{Lemma}
\newtheorem{corollary}[theorem]{Corollary}
\theoremstyle{definition}
\newtheorem{assumption}[theorem]{Assumption}
\newtheorem{definition}[theorem]{Definition}
\theoremstyle{remark}
\newtheorem{remark}[theorem]{Remark}
\newtheorem{fact}{Fact}
\maketitle

\begin{abstract}
Vertical Federated Learning (VFL) enables collaborative model training across feature-partitioned devices, yet its reliance on device-server information exchange introduces significant communication overhead and privacy risks. Downlink communication from the server to devices in VFL exposes gradient-related signals of the global loss that can be leveraged in inference attacks. Existing privacy-preserving VFL approaches that inject differential privacy (DP) noise on the downlink have the natural repercussion of degraded gradient quality, slowed convergence, and excessive communication rounds. In this work, we propose {\tt DPZV}, a communication-efficient and differentially private ZO-VFL framework with tunable privacy guarantees. Based on zeroth-order (ZO) optimization, {\tt DPZV} injects calibrated scalar-valued DP noise on the downlink, significantly redu cing variance amplification while providing equivalent protection against targeted inference attacks. Through rigorous theoretical analysis, we establish convergence guarantees comparable to first-order DP-SGD, despite relying solely on ZO estimators, and prove that {\tt DPZV} satisfies $(\epsilon,\delta)$-DP. Extensive experiments demonstrate that {\tt DPZV} consistently achieves a superior privacy–utility tradeoff and requires fewer communication rounds than existing DP-VFL baselines under strict privacy constraints ($\epsilon \leq 10$). 
\end{abstract}

\begin{IEEEkeywords}
Vertical Federated Learning, Differential Privacy, Distributed Optimization
\end{IEEEkeywords}

\section{Introduction}


Vertical Federated Learning (VFL) has emerged as a compelling paradigm for distributed training over edge and wireless systems where devices hold complementary features for the same data samples~\cite{hardy2017private,chen2020vafl,castiglia2023flexible}. Such vertical feature partitioning naturally arises in many applications. For example, in wireless sensor networks, spatially-distributed nodes may each collect local environmental measurements treated as features in a learning task~\cite{wang2018edge,shi2016edge,park2019wireless,guo2022nonparametric}. In other distributed setups, devices may only be capable of storing a portion of the features due to strict memory budgets or security concerns~\cite{yu2020low,liu2024vertical}. In these settings, VFL enables flexible model deployment by allowing learning to be distributed across devices with only a server/fusion center storing and processing the full model.

In most VFL systems, different devices holding partial features exchange intermediate gradients or embeddings with the server for aggregation~\cite{liu2020backdoor, chen2020vafl,weng2020practical}. This is in contrast to conventional (horizontal) FL \cite{mcmahan2017communication, wang2018edge, 10.1109/MWC.011.2000501,9276464}, where each device holds a full feature vector for a subset of the samples, and updates are exchanged on the full model. The result is a different communication pattern in VFL, where information exchange typically occurs with respect to smaller submodel partitions rather than through full model updates. Still, communication efficiency is a pressing issue in VFL, influenced heavily by the feature dimensionality and training procedure~\cite{castiglia2023less,liu2024vertical}.

Beyond efficiency concerns, VFL’s reliance on transmitting intermediate results introduces privacy concerns. Recent studies have shown that adversaries can exploit communicated information to infer sensitive attributes (feature inference attacks)~\cite{jin2021cafe,ye2024feature} or even recover labels (label inference attacks)~\cite{fu2022label,zou2022defending} and leverage them in e.g., backdoor attacks~\cite{lee2025cooperative}. Under the commonly adopted honest-but-curious threat model, these risks are particularly pronounced on the downlink, where gradient-related signals directly encode label-dependent information. Therefore, protecting downlink communication is essential for safeguarding privacy in VFL.

Unfortunately, existing mitigation measures for these privacy concerns often harm model performance and further exacerbate the communication burden of VFL. Similar to conventional FL, VFL methods often protect gradients by injecting calibrated noise vectors using differential privacy (DP) mechanisms~\cite{chen2020vafl,wang2024unified,xie2024improving,xu2019hybridalpha}. However, such noise inevitably degrades gradient quality, leading to slower convergence and thus requiring more communication rounds to reach a target accuracy. Consequently, privacy protection and communication efficiency are tightly coupled in DP-enhanced VFL (DP-VFL): stronger privacy requirements for a fixed target model quality typically translate into heavier communication costs.

\subsection{Zeroth-Order VFL}
Zero-order (ZO) optimizers~\cite{nesterov2017random, fang2022communication} offer a compelling foundation for building private and efficient solutions to VFL~\cite{zhang2021desirable,wang2024unified}. Migrating from first-order gradient descent-based methods to zero-order VFL (ZO-VFL) is motivated by two key observations. First, ZO methods eliminate explicit gradient transmission in the backward phase, substantially reducing information leakage and providing a stronger baseline defense against label inference attacks~\cite{zhang2021desirable}. Second, ZO enables optimization using only function evaluations. As a result, the backward communication in ZO-VFL consists of scalar-valued feedback rather than high-dimensional gradients. This significantly limits the dimensionality of information transmitted and potentially revealed per downlink interaction.

However, ZO-VFL alone does not fully resolve privacy concerns. Despite the absence of explicit gradients, malicious devices can still approximate gradients from perturbation-based scalar feedback, leaving ZO-VFL vulnerable to inference attacks. Moreover, while recent work \cite{wang2024unified} has quantified an inherent differential privacy provided by ZO methods, these levels are typically insufficient when strong DP guarantees are needed \cite{gupta2024inherent}. Importantly, these DP levels are also not adjustable, limiting their practicality in real-world deployments that require controllable privacy guarantees.

These limitations lead to the central question of this work:
\begin{itemize}
\item[ ] \textbf{\textit{How can VFL obtain communication efficiency advantages as in ZO-VFL while preserving controllable privacy levels as in DP-VFL?}}
\end{itemize}


\noindent To answer this question, we observe that achieving communication efficiency in DP-VFL is fundamentally tied to obtaining a favorable privacy–accuracy tradeoff, since improved convergence behavior directly translates into fewer communication rounds needed to reach a target accuracy. However, accomplishing this goal faces two key challenges. First, \textit{existing approaches for enforcing differential privacy in VFL commonly inject noise into the forward embeddings}~\cite{chen2020vafl,xie2024improving}, which are typically high-dimensional. As a result, the injected noise induces substantial variance amplification and significantly slows convergence, particularly under tight privacy budgets. Second, \textit{ZO optimization methods rely on inherently noisy gradient estimators and are generally known to converge more slowly than first-order methods}~\cite{ghadimi2013stochastic}. When combined with additional DP noise, this effect can be further amplified, potentially leading to severe performance degradation and an increased number of communication rounds.

\subsection{Summary of Contributions}
In this work, we propose {\tt DPZV}, a VFL methodology which achieves communication efficiency and controllable differential privacy concurrently. Instead of perturbing the forward embeddings, our method injects scalar-valued DP noise into the backward information. Intuitively, injecting scalar noise preserves the directional structure of the ZO gradient estimator and introduces far less distortion than vector-valued perturbations, enabling more accurate updates and faster convergence under the same privacy budget. We show in Sec.~\ref{sec:privacy} that this design achieves the same level of protection against targeted label and feature inference attacks as forward perturbations, while significantly reducing noise dimensionality. 

Through both theoretical analysis and extensive experiments, we demonstrate that our approach provides faster convergence than existing methods under a fixed privacy budget, incurring the lowest total communication cost to reach a target accuracy. More specifically, our contributions are as follows:
\begin{itemize}
[leftmargin=*]
    \item {\tt DPZV} is the first VFL methodology that enables tunable differential privacy while maintaining communication efficiency. Unlike existing DP-VFL methods that inject high-dimensional noise into forward embeddings, {\tt DPZV} perturbs only the scalar-valued backward information. This design preserves privacy guarantees while substantially reducing variance in the gradient estimation, leading to faster convergence and fewer required communication rounds. By eliminating gradient transmission, {\tt DPZV} is particularly well suited for privacy-critical and communication-constrained deployments (Sec. \ref{sec:method}).
    
    \item We provide a rigorous theoretical analysis establishing both convergence and privacy guarantees for \texttt{DPZV}.
    Specifically, we show that \texttt{DPZV} achieves a convergence rate to a stationary point on the same order as first-order DP methods, despite using less precise ZO estimators. This result demonstrates that the conventional gap in convergence speed between ZO and first-order methods diminishes in the DP-VFL setting. Furthermore, we prove that our method satisfies $(\epsilon, \delta)$-DP, confirming its ability to provide an adjustable privacy control mechanism while retaining communication efficiency (Sec. \ref{sec:convergence}, \ref{sec:privacy}).

    \item Through experiments on four benchmark datasets, we verify that {\tt DPZV} obtains robust convergence performance under strict privacy budgets. In particular, while the baseline algorithms experience a steep performance degradation and/or increase in communication rounds as the privacy level increases, we find that {\tt DPZV} consistently maintains faster convergence to a target accuracy across all tasks, underscoring an 
    elevated privacy-utility-cost tradeoff. In addition to a lower total communication cost, we find that {\tt DPZV} reduces the required memory overhead for training large models (Sec. \ref{sec:exp}).
\end{itemize}

\begin{figure*}[t]
\centering
\includegraphics[width=0.98\linewidth]{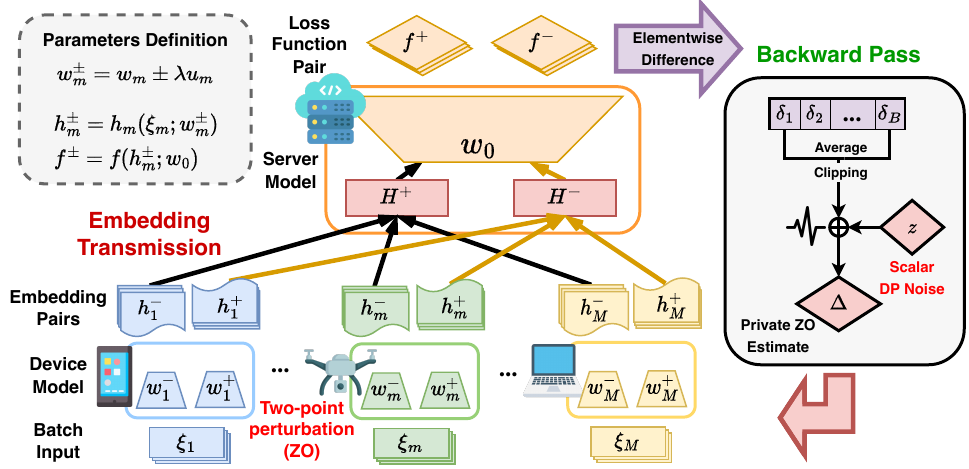}
    \caption{
Overview of the training procedure in {\tt DPZV}. Each device perturbs its local model parameters in two random directions to generate a pair of embeddings, which are then transmitted to the server. The server computes the corresponding function evaluations and applies an elementwise difference to approximate the zeroth-order (ZO) gradient. To ensure differential privacy, scalar-valued Gaussian noise is injected into the aggregated ZO estimate. Unlike traditional vector-valued noise in standard DP algorithms, scalar noise is significantly smaller in norm, thereby preserving model utility even under stringent privacy budgets.
    }
    \label{fig:visualize}
    \vspace{-10pt}
\end{figure*}

\section{Related Work}

\subsection{Vertical Federated Learning}
VFL enables collaborative training across entities (e.g., organizations or, in our case, wireless devices) with vertically partitioned features. Early VFL frameworks focused on simple device-side models such as logistic regression and linear models~\cite{hardy2017private}. These methods prioritized simplicity, but lacked expressiveness for complex tasks. To address this limitation, larger device-side models like deep neural networks (DNNs) were adopted~\cite{chen2020vafl, castiglia2023flexible, xie2024improving}.

A key challenge in VFL is the communication overhead incurred during training. One popular mechanism for reducing communication overhead is by allowing for multiple local updates in-between aggregations. In this respect, \cite{liu2022fedbcd} introduced {\tt FedBCD},  which allows devices to perform multiple gradient iterations in VFL before synchronization. Similarly, {\tt Flex-VFL}~\cite{castiglia2023flexible} proposed a flexible strategy offering varying local update counts per party, constrained by a communication timeout. {\tt VIMADMM}~\cite{xie2024improving} adopted an ADMM-based approach to enable multiple local updates in VFL. On the other hand, Asynchronous VFL methods (e.g., {\tt FDML}~\cite{hu2019fdml}, {\tt VAFL}~\cite{chen2020vafl}) decouple coordination, allowing devices to update models independently, thus improving scalability. However, in first-order methods, the backward pass through neural networks typically imposes communication overhead, while our ZO-based approach significantly reduces the cost associated with backward propagation.

Privacy guarantees are another critical challenge for VFL adoption. Some VFL architectures use crypto-based privacy-preserving techniques such as Homomorphic Encryption (HE)~\cite{cheng2021secureboost}, but lack formal assurances. In contrast, DP provides rigorous mathematical protection. Key DP-based methods include {\tt VAFL}~\cite{chen2020vafl}, which injects Gaussian noise into device embeddings during forward propagation to achieve Gaussian DP, and {\tt VIMADMM}~\cite{xie2024improving}, which perturbs linear model parameters with bounded sensitivity, ensuring DP guarantees for convex settings. Our work overcomes challenges in developing such methods for the ZO setting.

\subsection{Zeroth-Order Optimization}  
Recent research has explored ZO optimization within VFL to accommodate resource-constrained devices with non‐differentiable models and to reduce gradient leakage. Early work like {\tt ZOO-VFL} \cite{zhang2021desirable} adopted a naive ZO approach throughout VFL training but provided no DP guarantees. {\tt VFL-CZOFO}~\cite{wang2024unified} introduced a cascaded hybrid optimization method, combining zeroth-order and first-order updates, which leveraged intrinsic noise from ZO for limited privacy. However, its DP level was not adjustable, resulting in insufficient protection.

More recently, {\tt MeZO}~\cite{malladi2023fine} proposed a memory-efficient ZO algorithm. Building upon these ideas, {\tt DPZero}~\cite{zhang2024dpzero} and {\tt DPZO}~\cite{tang2024private} introduced private ZO variants offering baseline privacy features. However, these methods are designed for centralized settings and cannot be directly extended to the VFL paradigm. Extending beyond previous efforts to combine ZO optimization with VFL, we further integrate a controllable differential privacy mechanism, achieving an elevated trade-off between privacy and model performance. 

\section{Preliminaries}
\label{sec:background_and_motivation}
\subsection{System Model for VFL}
We consider a VFL framework with one server and $M$ devices. In VFL, data is vertically partitioned between devices, with each device holding different features of the data. Suppose we have a dataset $\gD$ with $D$ samples: $\gD=\{\xi_i|i=1,2,\dots,D\}$. Each data sample $\xi_i$ can be partitioned into $M$ portions distributed throughout all devices, where the data sample on machine $m$ with ID $i$ is denoted as $\xi_{m,i}$, hence $\xi_i = [\xi_{1,i}, \xi_{2,i}, \ldots, \xi_{M,i}]^\top$. The server is numbered as machine $0$ and holds the label data $\mY = \{y_i|i=1,2,\dots,D\}$.

The devices and the server aim to jointly train a machine learning model parameterized by $\vw$. The global model comprises local models on each party parameterized by $w_m$, with $m= 0,1,\dots, M$ being the ID of the machine. To protect privacy, devices do not communicate with each other regarding data or model parameters. Instead, all devices communicate directly with the server regarding their local model's outputs, which we term as local embeddings. If we denote the local embedding of device $m$ as $h_{m,i}:=h(w_m;\xi_{m,i})$, the objective of the VFL framework can be seen as minimizing the following function:
\begin{align}\label{eq:defition}
     F(\vw;\gD,\mY):=\frac{1}{D}\sum_{i\in[D]}\gL(w_0, h_{1,i}, h_{2,i},\dots, h_{M,i};y_i)
\end{align}
 where $\gL$ is the loss function for a datum $\xi_i$ and its corresponding label $y_i$. For the simplicity of notation, we define the loss function w.r.t a specific model parameter and datum as \begin{align}\label{eq:f}
      f(\vw;\xi_i):=\gL(w_0, h_{1,i}, h_{2,i},\dots, h_{M,i};y_i)
 \end{align}
\textbf{Memory-Efficient Zeroth-Order Optimizer.}
We introduce the two-point gradient estimator~\cite{shamir2017optimal}, which will serve as our zeroth-order gradient estimator throughout this paper. 
Let $u$ be uniformly sampled from the Euclidean sphere $\sqrt{d}\sS^{d-1}$ and $\lambda>0$ be the smoothing factor. For a single datum $\xi$ sampled from the whole dataset $\gD$, the two-point gradient estimator is defined as
    \begin{align}\label{eq:zero}
      g(x;\xi) = \frac{f(x+\lambda u;\xi)-f(x-\lambda u;\xi)}{2\lambda}u
    \end{align}

To further reduce the memory overhead on device, we adopt the {\tt MeZO} methodology~\cite{malladi2023fine} for our ZO Optimization, which requires only the same memory as the model itself for training. We investigate the memory reduction of this estimator in Section \ref{sec:exp}.

\subsection{Differential Privacy and DP-SGD}
DP provides a principled framework for protecting individual data in statistical analysis and machine learning. Formally, we adopt the standard $(\epsilon, \delta)$-DP definition:

\begin{definition}[$(\epsilon, \delta)$-Differential Privacy] A randomized algorithm $\gM: \gX^n \rightarrow \Theta$ is said to satisfy $(\epsilon, \delta)$-differential privacy if for any neighboring datasets $X, X' \in \gX^n$ that differ by only one individual's data, and for any subset of outputs $E \subseteq \Theta$, we have \begin{align} \Prob[\gM(X) \in E] \le e^\epsilon \Prob[\gM(X') \in E] + \delta. \label{eq:DP}\end{align} \end{definition}
One of the most widely adopted private training algorithms is Differentially Private Stochastic Gradient Descent (DP-SGD), which operates by clipping an intermediate result and injecting Gaussian noise during each model update. Specifically, 
given a minibatch $B_t$ at iteration $t$, DP-SGD performs the following steps:
\begin{enumerate}
\item Clip individual gradients: \\ $\bar{g}_i = \nabla \ell(\theta_t, x_i), \quad \tilde{g}_i = \bar{g}_i / \max\left(1, \frac{\|\bar{g}_i\|_2}{C}\right)$

\item Add Gaussian noise: \\ $ \tilde{g}_t = \frac{1}{|B_t|} \left( \sum_{i \in B_t} \tilde{g}_i + \mathcal{N}(0, \sigma^2 C^2 \mathbf{I}) \right)$

\item Update model: $\theta_{t+1} = \theta_t - \eta_t \tilde{g}_t$
\end{enumerate}

\subsection{Challenges and Motivation}
Existing approaches~\cite{chen2020vafl, xie2024improving} integrate DP with first order VFL by injecting DP noise in the forward embedding. In centralized or horizontally federated learning, DP-SGD achieves privacy by adding noise to the backward gradients. However, extending these strategies to ZO optimization introduces new challenges. First, \textit{due to the distributed nature of VFL, DP guarantees must be enforced during the downlink process between the cloud and devices, so as to prevent malicious devices from mounting label inference attacks.} In both cases, the noise must match the dimensionality of the protected vector (embedding or gradient), often resulting in high-dimensional noise. This situation is further exacerbated by the second factor: in ZO methods, gradients are approximated through multiple forward evaluations, making them inherently noisy. Adding a \textit{high-dimensional} noise vector as in standard DP-SGD further amplifies this inaccuracy. 

A key advantage of adopting ZO optimization, however, lies in its communication efficiency in the backward pass: only a scalar value, rather than a full gradient vector, needs to be exchanged. This observation enables \textit{a more efficient privacy mechanism—injecting scalar noise rather than high-dimensional noise}. Building on this insight, we propose Differentially Private Zeroth-Order Vertical Federated Learning ({\tt DPZV}), a framework that achieves tunable differential privacy through calibrated scalar noise injection. This design allows for a favorable privacy-utility trade-off in ZO-based VFL, particularly under tight privacy budgets. Unlike first-order VFL methods, which typically add noise to forward embeddings, {\tt DPZV} introduces noise in the backward pass, targeting the most exploited leakage pathway in VFL: label inference from gradients. A broader discussion of privacy implications is provided in Section \ref{sec:privacy}.

\section{Methodology}
\label{sec:method}
In this section, we describe the overall training procedure of {\tt DPZV}.  Based on \eqref{eq:defition}, the objective is to collaboratively minimize the global objective function across all devices. The devices hold disjoint features of the same data records, and the server holds the label data. The training procedure proceeds in two iterative steps.

\subsection{Device Update and Forward Communication}
The sampled devices compute local information and transmit it to the server. We define the device-server communication as \textit{forward communication}.

\textbf{Training Procedure.} Each device $m$ maintaining its own local communication round $t_m$, while the server tracks a global round $t$. Whenever the server receives information from a device, it increments $t$. Upon receiving an update from the server, the device synchronizes by setting $t_m = t$, capturing the latest state. This asynchronous mechanism allows devices to progress without waiting for stragglers, improving throughput and minimizing idle time. The server maintains a copy of the latest local embeddings $\Tilde{h}^t_{m,i}$ for all devices $m \in [M]$ and data samples $\xi_i \in \gD$. Due to the asynchronous nature of the algorithm, these copies may be stale, as they do not always reflect the most up-to-date model parameters of the devices. Let $\Tilde{t}_{m,i}$ denote the device time when the server last updated $\Tilde{h}^t_{m,i}$. The delay at server communication round $t$ can then be expressed as  
\(
\tau_{m}^t = t_m - \Tilde{t}_{m,i},    
\)
where the delayed model parameters are defined as:  
\begin{align}\label{eq:delay_def}
\Tilde{\chi}^t = \{w_1^{t_1 - \tau_1^t}, \dots, w_M^{t_M - \tau_M^t}\}, \quad \Tilde{\vw}^t = \{w_0^t, \Tilde{\chi}^t\}.
\end{align} 


\textbf{Local Embedding Finite Differences.} For each global iteration $t$, a device $m$ is activated, and it samples a mini-batch $\gB_{m}^{t_m}\in\gD$ and the corresponding IDs $\gI_{m}^{t_m}$. To approximate gradients via zeroth-order finite differences, device $m$ computes two perturbed local embeddings for the mini-batch, 
\begin{align*}
&\{h_{m,i}^{t_m+}\}_{i\in \gI_m^{t_m}}=\{h(w_m^{t_m}+\lambda_m \vu_{m}^{t_m};\xi_{m,i})\}_{i\in \gI_m^{t_m}},\\
&\{h_{m,i}^{t_m-}\}_{i\in \gI_m^{t_m}}=\{h(w_m^{t_m}-\lambda_m \vu_{m}^{t_m};\xi_{m,i})\}_{i\in \gI_m^{t_m}}\numbereq\label{eq:perturb}
\end{align*}
where $\vu_{m}^{t_m}$ is sampled uniformly at random from the Euclidean sphere $\sqrt{d_m}\sS^{d_m-1}$, and $\lambda_m$ is a smoothing parameter that controls the step size of perturbation. 
These two perturbed embeddings serve as “positive” and “negative” perturbations of its local parameters, which are forwarded to the server for further computation.

\begin{algorithm}[t]
\DontPrintSemicolon
   \caption{{\tt DPZV}: Differentially Private Zeroth-Order Vertical Federated Learning}
   \label{alg:DPZV_asyn}
   \SetKwInOut{Input}{Input}
   \SetKwInOut{Output}{Output}
   \Input{Data $\gD$, batch size $B$, learning rate $\eta_m$, total iteration $T$, smoothing parameter $\lambda_m>0$, clipping threshold $C>0$, privacy parameter $\sigma_{dp}$}
    \Output{Parameter $w_0, w_m$ for all parties $m\in[M]$}
    Initialize $w_0,w_m$ and set $t,t_m=0$ for all parties\;
    \For{$t = 1,\ldots,T$}{
    Sample ready-to-update device $m\in \{1, \ldots, M\}$.\;
    device $m$ samples a mini-batch $\gB_m^{t_m}$ with  corresponding IDs $\gI_m^{t_m}$.\;
    device $m$ computes perturbed local embeddings:
    \[\{h_{m,i}^{t_m+}\}_{i\in \gI_m^{t_m}}=\{h(w_m^{t_m}+\lambda_m \vu_{m}^{t_m};\xi_{m,i})\}_{i\in \gI_m^{t_m}},\]\[\{h_{m,i}^{t_m-}\}_{i\in \gI_m^{t_m}}=\{h(w_m^{t_m}-\lambda_m \vu_{m}^{t_m};\xi_{m,i})\}_{i\in \gI_m^{t_m}}\]
    \colorbox{lightred}{\textbf{(Forward)} Device transmits embeddings to server.}\;
    Server computes $\Delta_{m}^{t}$, and updates via ZO Optimization:
    \(\delta_{m,i}^{t,t_m} = \frac{\Tilde{f}(w_0,h_{m,i}^{t_m+};y_i)-\Tilde{f}(w_0,h_{m,i}^{t_m-};y_i)}{\lambda_m},\)
    \(\Delta_{m}^{t} = \frac{1}{B}\sum_{i \in \gI_{m}^{t_m}} \mathrm{clip}_C(\delta_{m,i}^{t,t_m})+z_{m}^{t}.\)

    \colorbox{lightgreen}{\textbf{(Backward)} Device receives $\Delta_{m}^{t}$. }\;
     
     Device performs local update:
     \[\vw_m \leftarrow \vw_m -\eta_m\Delta_{m}^{t}\vu_{m}^{t_m}.\]
     }
     Device update local time stamp $t_m = t$.\;
\end{algorithm}
\subsection{Server Update and Backward Communication}
The server updates the global model and transmit global updates to the device. We define the server-device communication as \textit{backward communication}.

\textbf{Server Side ZO Computation.} 
After the server receives the local embeddings $h_{m,i}^{t_m}$ from device $m$, it updates its embeddings copy and do computation. For each embedding pair $\{h_{m,i}^{t_m+}, h_{m,i}^{t_m-}\}$, it computes the difference in loss function $\gL$ caused by perturbation, divided by the smooth parameter $\lambda_m$. Specifically, we define\footnote{We slightly abuse the notation, and define $\Tilde{f}(w_0,h_{m,i}^{t_m\pm};y_i)=\gL(w_0, \Tilde{h}^t_{1,i}, \dots,h_{m,i}^{t_m\pm},\dots, \Tilde{h}^t_{M,i};y_i)$.
where we treat $\Tilde{f}$ as a function of the server parameter $w_0$ and the perturbed local embeddings.}:
\begin{align}\label{eq:delta}
\delta_{m,i}^{t,t_m} = \frac{\Tilde{f}(w_0,h_{m,i}^{t_m+};y_i)-\Tilde{f}(w_0,h_{m,i}^{t_m-};y_i)}{\lambda_m},
\end{align}
\textbf{Scalar DP Noise Injection.} To ensure controllable privacy, the server then clips each $\delta_{m,i}^{t}$ by a threshold $C$ to bound sensitivity:
\(
\mathrm{clip}_C(\delta_{m,i}^{t,t_m}) = \min \{\delta_{m,i}^{t,t_m},C\}.
\)
It then samples noise $z_{m}^{t_m}$ from a Gaussian distribution $\mathcal{N}(0,\sigma_{dp}^2)$. This noise is added to the mean of the per-sample-clipped updates, yielding a differentially private gradient-like quantity:
\begin{align}\label{eq:Delta}
\Delta_{m}^{t} = \frac{1}{B}\sum_{i \in \gI_{m}^{t_m}} \mathrm{clip}_C(\delta_{m,i}^{t,t_m})+z_{m}^{t}.
\end{align}

The server then performs a {backward communication} and sends $\Delta_{m}^{t}$ to device $m$. The server then updates its global model through two possible operations: $i)$ ZO Optimization, $\vw_0 \leftarrow \vw_0 - \eta_0 g(w_0;\xi_i)$ (defined in \eqref{eq:zero}); or $ii)$ stochastic gradient descent (SGD), $\vw_0 \leftarrow \vw_0 - \eta_0 \nabla_{w_0} F(\vw;\xi_i)$, 

depending on the constraints on computation resources. The adopted ZO update methodology is explained in Sec \ref{sec:background_and_motivation}.

On the device side, upon receiving $\Delta_{m}^{t}$, device $m$ updates its local parameter $\vw_m$ with learning rate $\eta_m$:
\begin{equation}
\vw_m \leftarrow \vw_m -\eta_m\Delta_{m}^{t}\vu_{m}^{t_m},
    \label{eq:local_gd}
\end{equation}
where $\vu_{m}^{t_m}$ is the same vector used for local perturbation. Our method is well-suited for resource-constrained environments, such as edge devices with limited VRAM or computation power, while still maintaining robust performance and scalability in VFL settings. We summarize the pipeline above in Algorithm~\ref{alg:DPZV_asyn}. 

\section{Convergence Analysis}
\label{sec:convergence}
In this section, we provide the convergence analysis for {\tt DPZV}. For brevity, we define the following notations: $F^t = F(\vw^t):=F(\vw^t;\gD,\mY)$, and $f(\vw;\xi_i)$ as defined in \eqref{eq:f}.
We make the following standard assumptions \footnote{Assumption \ref{assum:Lip} and \ref{assum:bound} are standard in VFL and ZO literature \cite{wang2024unified, castiglia2023flexible}. We follow \cite{zhang2024dpzero} to make the $\ell$-Lipschitz assumption in order to bound the probability of clipping. Assumption \ref{assum:ind_part} is common when dealing with asynchronous participation \cite{chen2020vafl}, and can be satisfied when the activations of devices follow independent Poisson processes.}:

\begin{assumption}[Properties of loss function]\label{assum:Lip}
    The VFL objective function $F$ is bounded from below, the function $f(\vw;\xi_i)$ is $\ell$-Lipschitz continuous and $L$-Smooth for every $\xi_i\in\gD$. 
\end{assumption}

\begin{assumption}[System boundedness]\label{assum:bound}
The following system dynamics are bounded:
1) \textit{Stochastic Noise:}
The variance of the stochastic first order gradient is upper bounded in expectation:
    \(\E\left[\norm{\nabla_{\vw}f(\vw;\xi)-\nabla_{\vw}F(\vw)}^2\right]\le \sigma_s^2.\)
2) \textit{Time Delay:}
    The parameter delay $\tau_m^t$ is upper bounded by a constant $\tau$:
    \( \tau^t \leq \tau,\quad \forall m,t.\)
\end{assumption}
\begin{assumption}[Independent Participation]\label{assum:ind_part}
    Under an asynchronous update system, the probability of one device participating in one communication round is independent of other devices and satisfies:
    \(\Prob(\text{device }m\text{ uploading}) = q_m .\)
\end{assumption}
\subsection{Convergence Guarantee and Discussion}
We now present the main theorem that provides convergence guarantee for {\tt DPZV}:
\begin{theorem}\label{thm:main}
Under assumption \ref{assum:Lip}-\ref{assum:ind_part}, define $\gF=\E[F^0-F^T]$. Denote $q_* = \min_m q_m$, $d_*=\max_m d_m$ where $d_m$ represent the dimension of model parameters on device $m$, let all step sizes satisfy: $\eta_0 = \eta_m = \eta\le \min\{\frac{1}{\sqrt{Td_*}}, \frac{B}{4L(B+8d_0)+8\gamma_1(2d_m+B)}\}$, let the smoothing parameter $\lambda$ satisfy:
\(
    \lambda \le \frac{1}{Ld\sqrt{T}}
\), and let the clipping level $C$ satisfy: $ C\geq \max\{0, \frac{1}{2}L\lambda d - \ell\sqrt{8\log(2\sqrt{2\pi})}\}.$
Then, for any given global iteration $T \geq 1$, we have the following upper bound on the gradient norm:
\begin{align*}
    &\frac{1}{T} \sum_{t=0}^{T-1}\E\left[\norm{\nabla_{\vw}F(\vw^t)}^2\right]
    \\
    \le&   \mathcal{O}\bigg(\frac{\gF\sqrt{d_*}}{\sqrt{T}}  + \frac{d_*}{T} +  \frac{(\sigma_s^2/B + \sigma_{dp}^2)\sqrt{d_*}}{\sqrt{T}}\\
    &+\frac{ C^2 \sqrt{d_*}}{(\exp (C^2)-1)B\sqrt{T}} \bigg),\numbereq\label{eq:main}
\end{align*}
\end{theorem}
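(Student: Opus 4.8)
The plan is to follow the standard descent-lemma route for smooth nonconvex optimization, but carried out jointly over the server and device blocks, with the asynchronous delays and the ZO/clipping/DP noise handled as separate perturbation terms. First I would fix a global round $t$ in which device $m$ is activated, and apply $L$-smoothness of $F$ (Assumption~\ref{assum:Lip}) to the full parameter vector $\vw^t$, splitting the update into the device-$m$ block move $\vw_m \leftarrow \vw_m - \eta \Delta_m^t \vu_m^{t_m}$ and the server block move on $w_0$. This gives $F^{t+1} \le F^t - \eta \langle \nabla_{\vw} F(\vw^t), (\text{update direction}) \rangle + \tfrac{L\eta^2}{2}\norm{\text{update}}^2$. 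Taking expectations, the key is to show the inner-product term is, up to controllable error, a negative multiple of $\norm{\nabla F(\vw^t)}^2$, and that the squared-norm term is $\mathcal{O}(\eta^2 d_* (\text{stuff}))$.

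The second step is to control the bias of the ZO estimator. Here I would use the two-point estimator bias bound (the standard fact that $\E_u[g(x;\xi)] = \nabla f_\lambda(x;\xi)$ for the smoothed function, with $\norm{\nabla f_\lambda - \nabla f} \le \tfrac{L\lambda d}{2}$), combined with the fact that $\delta_{m,i}^{t,t_m}$ is exactly the scalar $\langle \text{finite difference} \rangle$ so that $\Delta_m^t \vu_m^{t_m}$ is the ZO estimator of the $w_m$-block gradient of $\Tilde f$ at the delayed point, plus clipping error plus the zero-mean DP noise $z_m^t$. The clipping introduces bias, which is why the clipping threshold $C$ is chosen as in the statement: I would bound $\E[\,|\delta - \mathrm{clip}_C(\delta)|\,]$ using the $\ell$-Lipschitz assumption, which makes $\delta_{m,i}^{t,t_m}$ sub-Gaussian-ish (bounded by $2\ell\sqrt{d_m}\cdot$something after centering), and integrate the tail past $C$ — this is where the $\tfrac{C^2}{(\exp(C^2)-1)B}$ term in \eqref{eq:main} comes from, and where the $\sqrt{8\log(2\sqrt{2\pi})}$ constant in the condition on $C$ originates. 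The DP noise contributes only through the $\tfrac{L\eta^2}{2}\norm{\cdot}^2$ term, giving the $\sigma_{dp}^2$ variance term; note that because it is scalar, after multiplication by $\vu_m^{t_m}$ of squared norm $d_m$ it contributes $\eta^2 d_m \sigma_{dp}^2$ rather than the $\eta^2 d_m^2 \sigma_{dp}^2$ one would get from a vector mechanism — this is the quantitative payoff of scalar noise and should be emphasized.

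The third step is to handle the asynchrony: the gradient being estimated is $\nabla_{w_m} \Tilde f$ at the \emph{delayed} parameters $\Tilde{\vw}^t$, not at $\vw^t$. Using $L$-smoothness again plus the bounded-delay Assumption~\ref{assum:bound}(2), I would bound $\norm{\nabla F(\vw^t) - \nabla F(\Tilde{\vw}^t)}^2$ by a sum of recent update norms $\le L^2 \tau \sum_{s=t-\tau}^{t-1}\eta^2\norm{\text{update}_s}^2$, and fold these back into the telescoping sum — this is what forces the $\gamma_1(2d_m+B)$ piece inside the step-size cap, where $\gamma_1$ absorbs the delay/Lipschitz constants. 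The independent-participation Assumption~\ref{assum:ind_part} is used to turn the per-activated-device descent into a descent in expectation over the full objective, introducing the $q_* = \min_m q_m$ factor (which I expect to be hidden inside the $\mathcal{O}(\cdot)$ for the stated bound, or to appear as a $1/q_*$ multiplying $\gF$). The final step is routine: sum the per-round descent inequalities over $t = 0,\dots,T-1$, telescope $F^0 - F^T$, divide by $T$ and by the effective coefficient of $\norm{\nabla F}^2$ (a constant, given the step-size condition), and plug in $\eta = 1/\sqrt{Td_*}$ and $\lambda \le 1/(Ld\sqrt{T})$ to collapse the $\lambda$-bias terms into the $d_*/T$ and $\mathcal{O}(T^{-1/2})$ orders shown in \eqref{eq:main}.

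The main obstacle I anticipate is the clipping-bias analysis in step two: one needs a clean concentration statement for the scalar finite-difference $\delta_{m,i}^{t,t_m}$ so that the probability (and conditional expectation) of exceeding $C$ decays fast enough to yield the specific $\tfrac{C^2}{(\exp(C^2)-1)B\sqrt{T}}$ form, and this must be coordinated with the lower bound condition on $C$ so that the bias does not, in turn, destroy the negative-inner-product structure from step one. Getting the constants to line up — in particular showing the clipping-induced bias is dominated by the descent term rather than merely bounded — is the delicate part; everything else is bookkeeping over the three noise sources (stochastic gradient $\sigma_s^2/B$, DP $\sigma_{dp}^2$, ZO smoothing $\propto \lambda$) and the delay terms.
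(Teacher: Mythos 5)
Your proposal is correct and follows essentially the same route as the paper: a smoothness-based one-step descent, the smoothed-function bias bounds for the two-point estimator, a concentration bound on the scalar finite difference to control clipping (the paper bounds the clipping \emph{probability} $P$ and conditions on the clipping event, so the loss shows up as a multiplicative $(1-P)$ on the descent term plus an additive $PC^2 d_m$ variance term, rather than as an integrated tail bias), the $\eta^2 d_m\sigma_{dp}^2$ contribution of the scalar DP noise, and delay terms absorbed via a Lyapunov function $V^t = F(\vw^t)+\sum_i\gamma_i\norm{\chi^{t+1-i}-\chi^{t-i}}^2$ — which is exactly the formal device for your plan of ``folding the recent update norms back into the telescoping sum'' and is where $\gamma_1$ enters the step-size cap. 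The remaining differences are bookkeeping, not substance.
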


\textbf{Discussion.}
The first term is influenced by the model's initialization, $F^0$. This term also enjoys the same rate as ZO optimization in the centralized case \cite{ghadimi2013stochastic}. The second term $\mathcal{O}(\frac{d_*}{T})$ is a standard term for ZOO methods based on the usage of ZO estimator updates. The third term captures the impact of various noise sources in the learning system. Here, $\sigma_s^2/B$ represents the noise introduced by stochastic gradients, and $\sigma_{dp}^2$ corresponds to the variance of the injected DP noise. While reducing $\sigma_{dp}^2$ improves utility, it comes at the cost of weakening privacy guarantees. Consequently, this term encapsulates the fundamental trade-off between model performance, computational cost, and privacy budget. The fourth term, quantifies the impact of the gradient clipping operation on convergence. As $C$ increases, increases, this term diminishes, effectively recovering the non-private case in the limit. However, the noise variance $\sigma_{dp}^2$ also scales linearly with $C$ , which can impede overall convergence. This trade-off underscores the importance of carefully tuning the sensitivity level to balance privacy preservation and learning efficiency.

By combining the convergence analysis in Theorem~\ref{thm:main} with the privacy mechanism in Section~\ref{sec:privacy}, we obtain the following corollary, which characterizes the privacy-utility tradeoff of \texttt{DPZV}.

\begin{corollary}
\label{cor:privacy_utility}
Under the assumptions of Theorem~\ref{thm:main}, consider the differentially private mechanism described in Section~\ref{sec:privacy}, where the variance of the injected DP noise is chosen as
$\sigma_{\mathrm{dp}} = \frac{2C\sqrt{T}}{D\mu}.$
If the total number of global iterations is selected as
\begin{equation}
T = \gO\!\left(
\frac{D^{2}\mu^{2}\sqrt{d}}{C^{2}}
\left(
\gF + \frac{\sigma^{2}}{B} + \frac{C^{2}}{e^{C^{2}}-1}
\right)
\right),
\end{equation}
then the convergence bound in Theorem~\ref{thm:main} implies
\begin{equation}
\frac{1}{T}\sum_{t=0}^{T-1}
\mathbb{E}\!\left[\left\|\nabla \mathcal{F}(w^{t})\right\|^{2}\right]
= \gO\!\left(\frac{\sqrt{d}}{D\mu}\right).
\end{equation}
\end{corollary}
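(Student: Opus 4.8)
The plan is to derive Corollary~\ref{cor:privacy_utility} directly from Theorem~\ref{thm:main}: substitute the privacy-calibrated noise scale $\sigma_{\mathrm{dp}} = 2C\sqrt{T}/(D\mu)$ supplied by the mechanism of Section~\ref{sec:privacy} into the bound~\eqref{eq:main}, and then minimize the resulting expression over the iteration count $T$. After substitution, $\sigma_{\mathrm{dp}}^2 = 4C^2T/(D^2\mu^2)$, so the noise term $(\sigma_s^2/B + \sigma_{\mathrm{dp}}^2)\sqrt{d_*}/\sqrt{T}$ in~\eqref{eq:main} splits into a decaying piece $\sigma_s^2\sqrt{d_*}/(B\sqrt{T})$ and a new piece of order $C^2\sqrt{d_*}\sqrt{T}/(D^2\mu^2)$ that \emph{grows} with $T$. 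Thus~\eqref{eq:main} takes the shape of three $\Theta(1/\sqrt{T})$ terms (with coefficients $\gF$, $\sigma_s^2/B$, and $C^2/((e^{C^2}-1)B)$), one lower-order $\Theta(1/T)$ term with coefficient $d_*$, and a single $\Theta(\sqrt{T})$ term produced by the DP noise.

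The next step is to group the slowly-decaying terms: writing $A := \gF + \sigma_s^2/B + C^2/(e^{C^2}-1)$, the bound reads $\gO\!\big(A\sqrt{d_*}/\sqrt{T} + d_*/T + C^2\sqrt{d_*}\sqrt{T}/(D^2\mu^2)\big)$, and in the regime where $D\mu$ is large the middle term is dominated by the others and may be dropped. Minimizing $h(T) := A\sqrt{d_*}/\sqrt{T} + C^2\sqrt{d_*}\sqrt{T}/(D^2\mu^2)$ over $T>0$ is a one-line computation (AM--GM, or $h'(T)=0$): the optimizer has order $T^\star \asymp A D^2\mu^2/C^2$, which matches the order of $T$ prescribed in the corollary, and the optimal value has order $C\sqrt{A}\,\sqrt{d_*}/(D\mu)$. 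Treating $C$, and hence $A$, as an $\gO(1)$ constant and identifying $d_*$ with $d$ yields the claimed rate $\gO(\sqrt{d}/(D\mu))$.

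I would then check internal consistency, i.e.\ that once $T = T^\star$ is fixed the remaining hypotheses of Theorem~\ref{thm:main} are still satisfiable, so there is no circularity. This is immediate: the step-size and smoothing conditions only ask $\eta \le \min\{1/\sqrt{T d_*},\ldots\}$ and $\lambda \le 1/(Ld\sqrt{T})$, which we simply impose after $T$ is chosen; and the clipping condition $C \ge \max\{0,\, \tfrac12 L\lambda d - \ell\sqrt{8\log(2\sqrt{2\pi})}\}$ then holds for any fixed $C\ge 1$, since $\lambda \le 1/(Ld\sqrt{T})$ forces $\tfrac12 L\lambda d \le \tfrac12$. The privacy guarantee itself---that $\sigma_{\mathrm{dp}} = 2C\sqrt{T}/(D\mu)$ achieves the target $(\epsilon,\delta)$-level through the Section~\ref{sec:privacy} mechanism---is invoked as a black box.

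The main obstacle is not any single hard estimate but the bookkeeping of the $C$-dependence: $A$ contains the clipping contribution $C^2/(e^{C^2}-1)$ and $\sigma_{\mathrm{dp}}$ is proportional to $C$, so the constant absorbed into the final $\gO(\sqrt{d}/(D\mu))$ genuinely depends on $C$. The argument must therefore observe that $C$ is free to be chosen as an absolute constant---it only has to clear the (vanishing) lower bound of Theorem~\ref{thm:main}, which any fixed $C>0$ does---so that $C\sqrt{A} = \Theta(1)$ and the stated rate is clean. Apart from that, the proof is a routine substitution followed by a single-variable optimization.
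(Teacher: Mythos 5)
Your proposal is correct and follows exactly the route the paper intends (the paper states the corollary with no written proof, describing it only as "combining Theorem~\ref{thm:main} with the privacy mechanism"): substitute $\sigma_{\mathrm{dp}}^2 = 4C^2T/(D^2\mu^2)$ into \eqref{eq:main}, observe the DP term becomes $\Theta(C^2\sqrt{d_*}\sqrt{T}/(D^2\mu^2))$, and balance it against the $A\sqrt{d_*}/\sqrt{T}$ terms. One caveat: your balancing gives $T^\star \asymp A D^2\mu^2/C^2$, and you assert this "matches the order of $T$ prescribed in the corollary," but the corollary's displayed $T$ carries an extra factor of $\sqrt{d}$; with that factor included the DP term evaluates to $\Theta(d^{3/4}/(D\mu))$ rather than the claimed $\gO(\sqrt{d}/(D\mu))$. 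Your computation (without the $\sqrt{d}$) is the one that actually yields the stated rate, so this appears to be an error in the corollary's statement rather than in your argument, but it should be flagged rather than elided; the remaining bookkeeping (dropping $d_*/T$ in the regime $D\mu \gtrsim \sqrt{d_*}$, treating $C$ and hence $A$ as absolute constants, and verifying the step-size, smoothing, and clipping hypotheses after fixing $T$) is handled correctly.
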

\textbf{Discussion.}
Corollary \ref{cor:privacy_utility} shows that, under differential privacy constraints, the proposed zeroth-order method attains the same asymptotic privacy–utility tradeoff as differentially private first-order methods, such as DP-SGD \cite{chen2020understanding}. This result highlights an important contrast with the non-private setting: although zeroth-order optimization generally exhibits slower convergence than first-order methods, the proposed DP noise injection strategy reshapes the dominant error terms in the convergence analysis, allowing zeroth-order methods to achieve comparable asymptotic guarantees.

\subsection{Proof of Theoretical Results}

\label{appen:thm_conv}
In this part, we briefly describe the main steps behind the main convergence theorem \ref{thm:main}. 

The proof of the theorem follows the standard smoothness-
based descent framework for non-convex optimization. However, the convergence is affected by stochastic sampling, zeroth-order perturbations, differential privacy noise, gradient clipping and communication delays, and we need to bound them. The following lemmas solve these questions, and the full details are relegated to Appendix \ref{appen:prelim}.

We first introduce the following lemma, which bound how much the ZO estimator is biased from the ground truth gradient.

\begin{lemma}\label{lem:zero}
Let $g(\vw)$ be the zeroth-order gradient estimator defined as in \eqref{eq:zero_def}, with $f(\vw)$ being the loss function. We define the smoothed function $f_\lambda(\vw) = \E_{\vu}[f(\vw + \lambda \vu)]$, where $v$ is uniformly sampled from the Euclidean ball $\sqrt{d} \mathbb{B}^d = \{\vw \in \mathbb{R}^d \mid \norm{\vw} \leq \sqrt{d}\}$. The following properties hold:
\begin{enumerate}[label=$(\roman*)$]
    \item $f_\lambda(\vw)$ is differentiable and $\E_\vu[g(\vw)] = \nabla f_\lambda(\vw)$.
    \item If $f(\vw)$ is $L$-smooth, we have that
    \begin{align}
        \norm{\nabla f(\vw) - \nabla f_\lambda(\vw)} \le \frac{L}{2} \lambda d^{3/2},\label{eq:grad_diff}
    \end{align}
    \begin{align}
        | f(\vw) -  f_\lambda(\vw)| \le \frac{L}{2} \lambda^2 d,\label{eq:func_diff}
    \end{align}
    and
    \begin{align}
    \E_\vu[\norm{g_\lambda(\vw)}^2] \le 2d \cdot \norm{\nabla f(\vw)}^2 + \frac{L^2}{2} \lambda^2 d^3.\label{eq:zero_grad}
    \end{align}
\end{enumerate}
\end{lemma}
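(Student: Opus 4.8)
The plan is to establish Lemma~\ref{lem:zero} by reducing everything to properties of the \emph{smoothed function} $f_\lambda$, which is the standard Gaussian/ball-smoothing object in zeroth-order optimization (Nesterov--Spokoiny, Flaxman--Kalai--McMahan). First I would prove part $(i)$. The key observation is that the two-point estimator $g(\vw) = \frac{f(\vw+\lambda u) - f(\vw-\lambda u)}{2\lambda}u$, with $u$ uniform on the sphere $\sqrt d\,\sS^{d-1}$, is an unbiased estimator of the gradient of a ball-smoothed function. Concretely, using the identity $\E_u\!\left[\frac{f(\vw+\lambda u)}{2\lambda}u\right] = \E_u\!\left[-\frac{f(\vw-\lambda u)}{2\lambda}u\right]$ by symmetry $u \mapsto -u$, one gets $\E_u[g(\vw)] = \E_u\!\left[\frac{f(\vw+\lambda u)}{\lambda}u\right]$, and then Stokes'/divergence theorem on the sphere (relating a surface integral of $f\cdot u$ to a volume integral of $\nabla f$ over the ball) converts this into $\nabla f_\lambda(\vw)$ where $f_\lambda$ is the average of $f$ over the ball of radius $\lambda\sqrt d$. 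Differentiability of $f_\lambda$ follows because convolution with the (compactly supported, bounded) uniform density on a ball smooths $f$; one can differentiate under the integral sign since $f$ is locally Lipschitz. I would cite the relevant lemma in \cite{shamir2017optimal} or \cite{nesterov2017random} rather than redo the Stokes computation in full.

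Next, for part $(ii)$, the two bias bounds \eqref{eq:grad_diff} and \eqref{eq:func_diff} follow from $L$-smoothness and a Taylor expansion inside the averaging integral. For \eqref{eq:func_diff}: $f_\lambda(\vw) - f(\vw) = \E_u[f(\vw+\lambda u) - f(\vw)]$, and $L$-smoothness gives $|f(\vw+\lambda u) - f(\vw) - \langle \nabla f(\vw), \lambda u\rangle| \le \frac{L}{2}\lambda^2\|u\|^2 = \frac{L}{2}\lambda^2 d$; the linear term vanishes in expectation by symmetry of $u$, leaving $|f_\lambda(\vw) - f(\vw)| \le \frac{L}{2}\lambda^2 d$. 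For \eqref{eq:grad_diff}: $\nabla f_\lambda(\vw) - \nabla f(\vw) = \E_u[\nabla f(\vw+\lambda u) - \nabla f(\vw)]$ (again differentiating under the integral), and $L$-smoothness bounds each term in norm by $L\lambda\|u\| = L\lambda\sqrt d$; but the factor $d^{3/2}$ rather than $d^{1/2}$ in the claimed bound suggests the authors are using the surface-integral form of $\nabla f_\lambda$ (the $\frac{d}{\lambda}\E[f(\vw+\lambda u)u]$-type representation), where an extra factor of $d$ enters from the $d/\lambda$ prefactor acting on the $O(\lambda^2)$ Taylor remainder — I would follow that route to match the stated constant.

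Finally, for the second-moment bound \eqref{eq:zero_grad}: write $g(\vw) = \frac{f(\vw+\lambda u) - f(\vw-\lambda u)}{2\lambda}u$, so $\|g(\vw)\|^2 = \left(\frac{f(\vw+\lambda u) - f(\vw-\lambda u)}{2\lambda}\right)^2 \|u\|^2 = d\left(\frac{f(\vw+\lambda u)-f(\vw-\lambda u)}{2\lambda}\right)^2$. Then bound the scalar finite difference: by $L$-smoothness, $f(\vw+\lambda u) - f(\vw-\lambda u) = 2\lambda\langle\nabla f(\vw), u\rangle + R$ with $|R| \le L\lambda^2\|u\|^2 = L\lambda^2 d$, so $\left(\frac{f(\vw+\lambda u)-f(\vw-\lambda u)}{2\lambda}\right)^2 \le 2\langle\nabla f(\vw),u\rangle^2 + 2\left(\frac{L\lambda d}{2}\right)^2$ using $(a+b)^2 \le 2a^2+2b^2$. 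Taking $\E_u$ and using $\E_u[\langle\nabla f(\vw),u\rangle^2] = \|\nabla f(\vw)\|^2$ (since $u$ is isotropic on $\sqrt d\,\sS^{d-1}$ with $\E[uu^\top] = \mI$) yields $\E_u[\|g(\vw)\|^2] \le d(2\|\nabla f(\vw)\|^2 + \frac{L^2\lambda^2 d^2}{2}) = 2d\|\nabla f(\vw)\|^2 + \frac{L^2}{2}\lambda^2 d^3$, which is exactly \eqref{eq:zero_grad}. The main obstacle I anticipate is purely bookkeeping: getting the dimension powers and the constant factors to line up precisely with the stated bounds (especially the $d^{3/2}$ in \eqref{eq:grad_diff}), which hinges on being careful about whether one uses the ball-averaged representation of $f_\lambda$ or its surface-integral gradient representation — the two give the same gradient but expose different intermediate constants. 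None of the steps is conceptually deep; the care is entirely in matching normalizations of the sphere/ball sampling.
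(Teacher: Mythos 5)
Your proposal is correct, but note that the paper does not actually prove this lemma at all: immediately after stating it, the authors write that it is a standard result and defer entirely to Nesterov and Spokoiny \cite{nesterov2017random}. Your sketch supplies exactly the argument that citation points to — the $u\mapsto -u$ symmetry plus the Stokes/divergence identity relating the sphere-sampled estimator to the gradient of the ball-averaged $f_\lambda$ for part $(i)$, Taylor expansion under the expectation for the bias bounds, and the direct second-moment computation using $\E[\vu\vu^\top]=\mI$ for \eqref{eq:zero_grad}, whose constants you match exactly. Two small remarks. First, for \eqref{eq:grad_diff} you need not chase the surface-integral representation of $\nabla f_\lambda$ to recover the $d^{3/2}$ factor: differentiating under the integral gives $\nabla f_\lambda(\vw)=\E_\vu[\nabla f(\vw+\lambda\vu)]$ with $\vu$ in the ball $\sqrt{d}\,\sB^d$, whence $L$-smoothness yields $\norm{\nabla f(\vw)-\nabla f_\lambda(\vw)}\le L\lambda\sqrt{d}$, which already implies the stated (weaker) bound $\tfrac{L}{2}\lambda d^{3/2}$ for all $d\ge 2$; the $d^{3/2}$ in the statement is simply inherited from the Gaussian-smoothing form of the bound in \cite{nesterov2017random} and is not tight here. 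Second, do keep the sphere-versus-ball distinction explicit throughout: the estimator $g$ samples $\vu$ from the sphere $\sqrt{d}\,\sS^{d-1}$ while $f_\lambda$ averages over the ball $\sqrt{d}\,\sB^d$, and it is precisely this mismatch that the divergence-theorem step in your part $(i)$ reconciles — conflating the two would silently change the normalization in $\E[\vu\vu^\top]$ and hence the constants in \eqref{eq:zero_grad}.
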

This is the standard result of zeroth-order optimization. The proof of the Lemma is given by \cite{nesterov2017random}. In typical differential privacy algorithm, we need to clip the gradient to control the sensitivity, which is also the case in our algorithm. To control the effect of clipping, the following lemma evaluates the probability that clipping happens, so that we can take expectation w.r.t. the clipping effect in the future. 
\begin{lemma}\label{lem:Q}
    Let $Q$ be the event that clipping happened for a sample $\xi$, $d$ be the model dimension, and $L,\ell$ be the Lipschitz and smooth constant as defined in assumption \ref{assum:Lip}. For $\forall C_0>0$, if the clipping threshold $C$ follows $C\ge C_0+L\lambda d/2$, we have the following upper bound for the probability of clipping:
    \begin{align}
        P = \Prob(Q)\le \underbrace{2\sqrt{2\pi}\exp(-\frac{C_0^2}{8\ell^2})}_{\Xi}
    \end{align}
\end{lemma}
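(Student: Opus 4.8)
The plan is to bound the clipped scalar $\delta$ by a deterministic, $O(L\lambda d)$-small Taylor remainder plus a mean-zero \emph{linear} functional of the perturbation direction $u$, and then invoke the explicit concentration of a linear functional over the sphere. Fix the device $m$, the global round, and one sample $\xi_i$; I suppress these indices and abbreviate $d=d_m$, $\lambda=\lambda_m$, and write $u$ for the perturbation direction, uniform on $\sqrt d\,\sS^{d-1}$. Let $\phi$ denote the VFL loss $f(\,\cdot\,;\xi_i)$ regarded as a function of device $m$'s block alone, with $w_0$ and the remaining (possibly stale) device blocks held fixed; by Assumption~\ref{assum:Lip} this $\phi$ is $\ell$-Lipschitz and $L$-smooth. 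The clipped quantity is $\delta=\lambda^{-1}\bigl(\phi(w+\lambda u)-\phi(w-\lambda u)\bigr)$, so the clipping event satisfies $Q\subseteq\{|\delta|>C\}$ (the one-sidedness of $\mathrm{clip}_C$ only helps).

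First I would expand $\phi(w\pm\lambda u)$ around $w$ and invoke the quadratic bound from $L$-smoothness to write $\delta = 2\langle\nabla\phi(w),u\rangle + R$, with $R$ deterministic and $|R|\le \tfrac12 L\lambda d$ (the factor $d$ appears because $\|u\|^2=d$); the hypothesis $C\ge C_0+\tfrac12 L\lambda d$ is exactly calibrated to absorb $R$. Then on $Q$ one has $2|\langle\nabla\phi(w),u\rangle|\ge |\delta|-|R|> C-\tfrac12 L\lambda d\ge C_0$, so $Q\subseteq\bigl\{\,2|\langle\nabla\phi(w),u\rangle|> C_0\,\bigr\}$.

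Next I would handle the linear term by spherical symmetry. Writing $u=\sqrt d\,\theta$ with $\theta$ uniform on $\sS^{d-1}$, rotational invariance makes $\langle\nabla\phi(w),u\rangle$ distributed as $\sqrt d\,\|\nabla\phi(w)\|\,\theta_1$ for a single coordinate $\theta_1$, and $\|\nabla\phi(w)\|\le\ell$ since $\phi$ is $\ell$-Lipschitz. Hence $\Prob(Q)\le \Prob\!\bigl(2\ell\sqrt d\,|\theta_1|> C_0\bigr)=\Prob\!\bigl(|\theta_1|> C_0/(2\ell\sqrt d)\bigr)$. Finally I would apply the marginal-tail bound for a uniform spherical vector — using the closed form of the density of $\theta_1$ (proportional to $(1-t^2)^{(d-3)/2}$) together with a Gaussian comparison to get $\Prob(|\theta_1|>t)\le 2\sqrt{2\pi}\,\exp(-dt^2/2)$ — and substitute $t=C_0/(2\ell\sqrt d)$, which makes $dt^2/2=C_0^2/(8\ell^2)$ and yields $\Prob(Q)\le 2\sqrt{2\pi}\exp(-C_0^2/(8\ell^2))=\Xi$.

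The hard part will be the two bookkeeping points hidden above. First, the leading term must be an \emph{honest} linear functional of $u$, so the expansion has to be taken around the fixed point $w$ rather than around a $u$-dependent mean-value point, otherwise rotational invariance is lost; the cost is a remainder scaling like $L\lambda d$ (not $L\lambda$), which is precisely why the condition on $C$ in the statement carries an $L\lambda d$ term, and this constant must be tracked carefully against the $\tfrac12 L\lambda d$ in the hypothesis. Second, one needs an explicit constant in the spherical-cap inequality to land on the stated $2\sqrt{2\pi}$ prefactor and $1/(8\ell^2)$ rate; routing instead through the sub-Gaussianity of spherical coordinates or Lévy's isoperimetric inequality on $\sS^{d-1}$ would also work, but then the constants have to be carried to match. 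Everything else — the Taylor estimates and the containment $Q\subseteq\{\cdot\}$ — is routine.
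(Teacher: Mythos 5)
Your proof is correct and follows essentially the same route as the paper's: reduce the two-point difference via $L$-smoothness to the linear functional $u^{\top}\nabla f(\vw;\xi)$ plus a deterministic remainder of order $L\lambda d$ that is absorbed by the margin $C-C_0$, and then apply the sub-Gaussian tail of $u^{\top}a$ for $u$ uniform on $\sqrt{d}\,\sS^{d-1}$ together with $\|\nabla f\|\le\ell$ (the paper simply imports this concentration step as a packaged lemma from the DPZero paper rather than re-deriving the spherical marginal tail as you do). The only wrinkle is a factor-of-two bookkeeping issue in the normalization of $\delta$ ($\lambda^{-1}$ in your write-up versus $(2\lambda)^{-1}$ in the paper's own proof), which shifts the constant in the remainder bound and hence in the required hypothesis on $C$, but this mirrors an ambiguity already present in the paper and does not affect the structure or validity of the argument.
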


Furthermore, we have the following lemma that bounds the variance of the private zeroth-order gradient estimator, leveraging the auxiliary lemmas, Lemma \ref{lem:zero} and \ref{lem:Q}.
It addresses the effect of stochastic sampling, zeroth-order bias, gradient clipping and DP noise at the same time. 
\begin{lemma}Let $\breve{G}^t_{m}(\vw^t)$ be the Differential Private Zeroth-order Gradient without delay. We can bound the variance of $\breve{G}^t_{m}(\vw^t)$ in expectation, with the expectation taken on random direction $\vu$, DP noise $z$, and clipping event $Q$: 
\begin{align*}
&\Var(\breve{G}^t_{m}(\vw^t))\le\\
&\frac{1-P}{B}(2d_m \E\norm{\nabla_{w_m} F(\vw^t)}^2 + 2d_m\sigma_s^2+ \frac{L^2}{2} \lambda^2 d_m^3 )\\
&+\frac{P}{B} C^2 d_m-\frac{(1-P)^2}{B}\E\norm{\nabla_{w_m} F_\lambda(\vw^t)}^2+\sigma_{dp}^2 d_m
\numbereq\label{eq:G_var}
\end{align*}
\label{lem:var}
\end{lemma}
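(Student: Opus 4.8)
The object of study is $\breve G^t_m(\vw^t) = \Delta_m^t\,\vu_m^{t_m}$, the delay-free update direction device $m$ applies in one round, with $\Delta_m^t = \tfrac1B\sum_{i\in\gI_m^{t_m}}\mathrm{clip}_C(\delta_{m,i}^{t,t_m}) + z_m^t$, $\vu := \vu_m^{t_m}$ uniform on $\sqrt{d_m}\,\sS^{d_m-1}$, and $z_m^t\sim\gN(0,\sigma_{dp}^2)$ fresh and independent. The plan is to start from $\Var(\breve G^t_m) = \E\norm{\breve G^t_m}^2 - \norm{\E\breve G^t_m}^2$, with expectation over $\vu$, $z_m^t$, the minibatch draw, and hence the clipping event. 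Since $\norm{\vu}^2 = d_m$ deterministically and $z_m^t$ is zero-mean and independent of everything else, $\E\norm{\breve G^t_m}^2 = d_m\bigl(\E[(\tfrac1B\sum_i \mathrm{clip}_C(\delta_{m,i}^{t,t_m}))^2] + \sigma_{dp}^2\bigr)$ and $\E\breve G^t_m = \E[(\tfrac1B\sum_i\mathrm{clip}_C(\delta_{m,i}^{t,t_m}))\,\vu]$, so the DP noise contributes exactly the isolated term $\sigma_{dp}^2 d_m$ and decouples; what remains is to bound the second moment and the squared mean of the clipped, batch-averaged scalar finite difference.

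\textbf{Conditioning on the clipping event.} Next I would condition on the clipping event $Q$, whose probability $P=\Prob(Q)$ is controlled by the stated lower bound on $C$ through Lemma~\ref{lem:Q}. On $Q^c$ the clipped finite difference equals $\delta_{m,i}^{t,t_m}$, and (up to its fixed normalization) $\delta_{m,i}^{t,t_m}\vu$ is precisely the two-point estimator $g$ of \eqref{eq:zero} for the per-sample loss $f(\cdot;\xi_i)$ viewed as a function of $w_m$; on $Q$ the clipping caps the magnitude at $C$, so $\norm{\mathrm{clip}_C(\delta_{m,i}^{t,t_m})\vu}^2\le C^2 d_m$. Carrying the indicator of $Q^c$ through the second moment produces a $(1-P)$-weighted contribution from the unclipped ZO estimator plus a $P$-weighted contribution bounded by $\tfrac1B C^2 d_m$; carrying it through the \emph{squared} mean produces the subtracted $\tfrac{(1-P)^2}{B}\norm{\nabla_{w_m}F_\lambda(\vw^t)}^2$ term, once the $Q^c$-part of $\E\breve G^t_m$ has been identified with $\nabla_{w_m}F_\lambda(\vw^t)$.

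\textbf{Bounding the unclipped ZO estimator.} For the unclipped piece I would invoke Lemma~\ref{lem:zero} applied to $f(\cdot;\xi_i)$ with dimension $d_m$: part~(iii) gives $\E_\vu\norm{g_\lambda(w_m;\xi_i)}^2\le 2d_m\norm{\nabla_{w_m}f(\vw^t;\xi_i)}^2 + \tfrac{L^2}{2}\lambda^2 d_m^3$ and part~(i) gives $\E_\vu[g_\lambda(w_m;\xi_i)] = \nabla_{w_m}f_\lambda(\vw^t;\xi_i)$. Averaging over the minibatch $\gI_m^{t_m}$ (which supplies the $\tfrac1B$ prefactors) and over the data, and applying Assumption~\ref{assum:bound}(1), $\E_\xi\norm{\nabla_{w_m}f(\vw;\xi)-\nabla_{w_m}F(\vw)}^2\le\sigma_s^2$, to replace $\E_\xi\norm{\nabla_{w_m}f}^2$ by $\norm{\nabla_{w_m}F(\vw^t)}^2+\sigma_s^2$, yields the bracket $2d_m\norm{\nabla_{w_m}F(\vw^t)}^2 + 2d_m\sigma_s^2 + \tfrac{L^2}{2}\lambda^2 d_m^3$ and confirms that the $Q^c$-part of $\E\breve G^t_m$ equals $\nabla_{w_m}F_\lambda(\vw^t)$. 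Substituting back into the first step and collecting the four groups of terms gives \eqref{eq:G_var}.

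\textbf{Main obstacle.} The hard part will be the coupled bookkeeping in the last two steps: the clipping indicator must be propagated \emph{simultaneously} through the second moment and through the squared mean, so that the $(1-P)$ and $(1-P)^2$ weights emerge with the correct signs and without double counting, while one must also respect that every sample in the minibatch shares the \emph{single} random direction $\vu$ (so the per-sample finite differences are dependent) and that the Gaussian noise is injected once at the batch level rather than per sample. A secondary subtlety is absorbing the smoothed-function bias terms of Lemma~\ref{lem:zero}(ii) consistently with the lower bound required on $C$, which is exactly the role played by the $C_0$ appearing in Lemma~\ref{lem:Q}.
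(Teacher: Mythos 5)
Your proposal is correct and follows essentially the same route as the paper: the paper likewise writes the variance as a second moment minus the squared mean $(1-P)^2\norm{\nabla_{w_m}F_\lambda(\vw^t)}^2$, conditions on the clipping event via Lemma~\ref{lem:Q}, invokes Lemma~\ref{lem:zero} and Assumption~\ref{assum:bound} for the unclipped piece, bounds the clipped piece by $C^2 d_m$, and lets the independent zero-mean Gaussian contribute the isolated $\sigma_{dp}^2 d_m$ term; the only organizational difference is that the paper first proves a per-sample expectation/variance lemma and then sums over the batch, whereas you work directly at the batch level. The shared-direction dependence you flag as the main obstacle is a genuine subtlety, and it is worth noting that the paper's own proof sidesteps it by expanding $\E\bigl[\norm{\tfrac1B\sum_i(\breve g^t_{m,i}-(1-P)\nabla_{w_m}F_\lambda)+z\vu}^2\bigr]$ into $\tfrac{1}{B^2}\sum_i$ of per-sample variances, i.e.\ by dropping the cross terms between samples that share the single $\vu$.
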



Using the lemma above, we are ready to derive the proof for Theorem \ref{thm:main}.

\noindent\textbf{Proof for Theorem \ref{thm:main}.} We start the proof by using the $\ell$-Lipschitz property in Assumption \ref{assum:Lip}, which is the common practice in non-convex optimization. We then taking expectation w.r.t. the random sampling, clipping event and ZO perturbation, where we utilized Lemma \ref{lem:zero} and \ref{lem:Q}. Finally, we use the result in Lemma \ref{lem:var} to bound the variance terms. The result is the following one-step descent Lemma \ref{lem:main_1}, which quantifies the decrease of the objective function in a single communication round. 

\begin{lemma}[Model Update With Delay]\label{lem:main_1}
     Under Assumption \ref{assum:Lip} to \ref{assum:ind_part}, we have the following lemma:
     \begin{align*}
         &\E\left[F(\vw^{t+1})-F(\vw^{t}) \right] \le-\sum_{m=0}^M\bigg\{\\
         &q_m\eta_m(1-P)\left(\frac{1}{4}-\frac{\eta_m L (B+8d_m)}{2B}\right)\E\norm{\nabla_{w_{m}}F(\vw^t)}^2\bigg\}\\
         &+\sum_{m=0}^{M}q_m\eta_m L \left(\frac{1}{2}+2\eta_m L^2\right)\E\left[\norm{\Tilde{\chi}^t-\chi^t}^2\right]+\gA_1\numbereq,
     \end{align*} where we denote
\begin{align*}
    \gA_1 &= \sum_{m=0}^M q_m\eta_m \frac{L^2}{8}\lambda^2 d_m^3+ \sum_{m=0}^M q_m \eta_m^2 L \left(\frac{4}{B}d_m\sigma_s^2\right.\\
    &\left.+ \frac{(4-B)L^2}{4B} \lambda^2 d_m^3 +\frac{2P}{B} C^2 d_m+2\sigma_{dp}^2 d_m\right) + L\lambda^2 d
\end{align*}
for the convenience of notation.
\end{lemma}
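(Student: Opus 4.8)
The plan is to instantiate the $L$-smooth descent inequality at the realized single-block update of round $t$ and then peel off, one at a time, the five error sources flagged just before the statement: stochastic sampling, zeroth-order smoothing bias, DP noise, clipping, and staleness. Since each $f(\cdot;\xi_i)$ is $L$-smooth (Assumption~\ref{assum:Lip}), so is $F$, and because at round $t$ only the server block $w_0$ and the single activated block $w_m$ move (with $\vw_m\leftarrow\vw_m-\eta_m\Delta_m^t\vu_m^{t_m}$),
\begin{align*}
F(\vw^{t+1})-F(\vw^t)\le\langle\nabla F(\vw^t),\,\vw^{t+1}-\vw^t\rangle+\tfrac{L}{2}\norm{\vw^{t+1}-\vw^t}^2.
\end{align*}
First I would take the conditional expectation over the round randomness: which device is active (Assumption~\ref{assum:ind_part}, weighting block $m$ by $q_m$), the mini-batch, the ZO direction $\vu_m^{t_m}$, the clipping event $Q$, and the Gaussian DP noise $z_m^t$. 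Since $z_m^t$ is mean-zero and independent of $\vu_m^{t_m}$, it drops from the linear term; by Lemma~\ref{lem:zero}(i) the unclipped ZO step has conditional mean $\nabla F_\lambda$ evaluated at the stale iterate $\Tilde{\vw}^t$, and by Lemma~\ref{lem:Q} clipping occurs with probability at most $P$, so the linear term reduces, up to a residual supported on $Q$, to essentially $-q_m\eta_m(1-P)\langle\nabla_{w_m}F(\vw^t),\,\nabla_{w_m}F_\lambda(\Tilde{\vw}^t)\rangle$. It is cleanest to run the entire descent on the smoothed potential $F_\lambda$ and pay $|F-F_\lambda|\le\tfrac{L}{2}\lambda^2 d$ twice at the endpoints via~\eqref{eq:func_diff}, which accounts for the stray $L\lambda^2 d$ term in $\gA_1$.

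Next I would turn this inner product into the stated negative term by passing from $\nabla_{w_m}F_\lambda(\Tilde{\vw}^t)$ to $\nabla_{w_m}F(\vw^t)$ through two Young/triangle steps: the ZO-smoothing gap bounded by~\eqref{eq:grad_diff} (its squared size $\tfrac{L^2}{4}\lambda^2 d_m^3$ feeding the $\lambda^2 d_m^3$ pieces of $\gA_1$), and the staleness gap $\norm{\nabla_{w_m}F(\Tilde{\vw}^t)-\nabla_{w_m}F(\vw^t)}\le L\norm{\Tilde{\chi}^t-\chi^t}$ from $L$-smoothness, whose Young residual — together with the analogous residual from the quadratic term — builds the $\sum_m q_m\eta_m L(\tfrac12+2\eta_m L^2)\E\norm{\Tilde{\chi}^t-\chi^t}^2$ summand. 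For the quadratic term I would use $\norm{\Delta_m^t\vu_m^{t_m}}^2=d_m(\Delta_m^t)^2$, decompose the expectation into variance plus squared mean, and apply Lemma~\ref{lem:var} after splitting off the staleness discrepancy from the no-delay surrogate $\breve{G}_m^t$: once multiplied by $\tfrac{L}{2}\eta_m^2 q_m$, the $2d_m\E\norm{\nabla_{w_m}F(\vw^t)}^2$ contribution is what supplies the $\tfrac{\eta_m L(B+8d_m)}{2B}$ correction subtracted inside the parentheses, the nonpositive $-\tfrac{(1-P)^2}{B}\norm{\nabla_{w_m}F_\lambda}^2$ term is simply discarded, and the stochastic-noise, ZO-bias, clipped-mass, and DP-noise pieces of Lemma~\ref{lem:var} accumulate into the remaining constants of $\gA_1$. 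The $m=0$ block follows the same chain, with $q_0$ as its activation weight and the ZO-bias/clipping pieces either absent (SGD update) or identical (ZO update).

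The step I expect to be the main obstacle is the clipping bookkeeping inside the linear term: the clipped per-sample quantity is a biased, truncated version of the ZO estimator, so its conditional mean equals $\nabla F_\lambda$ only up to a factor that must be controlled through the event $Q$. One has to keep this factor $(1-P)$ explicit so it matches the coefficient in the statement, invoke Lemma~\ref{lem:Q} with the prescribed $C\ge C_0+L\lambda d/2$ so the leftover clipped-mass contribution is $\mathcal{O}(P\,C^2 d_m)$, and simultaneously stay consistent with the $(1-P)$-versus-$(1-P)^2$ weighting that emerges from Lemma~\ref{lem:var} — all while the ZO-smoothing bias and the staleness error are threaded through the same inner product. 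The remaining delicate point is verifying that, after every Young cross term has been absorbed, the step-size condition of Theorem~\ref{thm:main} is exactly what keeps the parenthetical factor $\tfrac14-\tfrac{\eta_m L(B+8d_m)}{2B}$ nonnegative; beyond that, the argument is repeated use of Young's inequality and collecting constants.
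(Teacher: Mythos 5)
Your proposal is correct and follows essentially the same route as the paper's proof: an $L$-smooth descent step on the smoothed potential $F_\lambda$ (paying $L\lambda^2 d$ to convert back to $F$ via \eqref{eq:func_diff}), expectation over device activation, mini-batch, ZO direction, clipping event, and DP noise, with the linear term handled through the $(1-P)\nabla F_\lambda$ conditional mean plus Young/smoothness splits for the staleness gap, and the quadratic term handled via Lemma~\ref{lem:var}. Your accounting of where each piece of $\gA_1$, the $(B+8d_m)$ correction, and the $\E\norm{\Tilde{\chi}^t-\chi^t}^2$ coefficient originate matches the paper's derivation of the terms $\gE_1$--$\gE_4$.
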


Since this analysis does not account for the effect of delayed communication, we subsequently address it in Lemma~\ref{lem:main_2}. The detailed proof of these two lemmas can be found in Appendix \ref{appen:lemma}.
We account for asynchronous device participation and bound the communication delay term $\E\norm{\Tilde{\chi}^t-\chi^t}^2$ by introducing the following Lyapunov function that captures the discrepancy between delayed and current model parameters: 
\begin{align}\label{eq:lyap}
    V^t = F(\vw^t)+\sum_{i=1}^{\tau}\gamma_i\norm{\chi^{t+1-i}-\chi^{t-i}}^2
\end{align}
    with $\gamma_i$ to be determined later. 

Utilizing the Lyapunov function, we can derive Lemma \ref{lem:main_2}, which quantifies the descent of the Lyapunov function in each communication round. This lemma successfully quantifies the effect of communication delay, while the descent of Lyapunov function provides an upper bound for the descent of the objective function.
\begin{lemma}
\label{lem:main_2}
    Under Assumption \ref{assum:Lip}-\ref{assum:ind_part}, and assume the device delay $\tau_m$ is uniformly upper bounded by $\tau$, we have the following lemma:
    \begin{align*}
        \E\left[V^{t+1}-V^{t}\right]&\le-\frac{1-P}{8}\min_m\{q_m\eta_m\} \E\left[\norm{\nabla_{\vw}F(\vw^t)}^2\right]\\
        &+\gA_1 + \gA_2\numbereq\label{eq:interm},
    \end{align*}
    where we denote 
\begin{align*}
    \gA_2
    =&\sum_{m=1}^{M}q_m\eta_m^2\gamma_1\bigg(\frac{L^2}{4}\lambda^2 d_m^3+\frac{2}{B}d_m\sigma_s^2 + \\
    &\frac{L^2}{2B} \lambda^2 d_m^3 +\frac{P}{B} C^2 d_m+\sigma_{dp}^2 d_m\bigg)
\end{align*}
for the ease of notation.
\end{lemma}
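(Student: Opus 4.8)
The plan is to prove Lemma~\ref{lem:main_2} by telescoping the Lyapunov function \eqref{eq:lyap} and using Lemma~\ref{lem:main_1} to control the one-step decrease of $F$. First I would write $\E[V^{t+1}-V^t] = \E[F(\vw^{t+1})-F(\vw^t)] + \sum_{i=1}^{\tau}\gamma_i\left(\E\norm{\chi^{t+2-i}-\chi^{t+1-i}}^2 - \E\norm{\chi^{t+1-i}-\chi^{t-i}}^2\right)$, and reindex the second sum so that it becomes a single fresh term $\gamma_1\E\norm{\chi^{t+1}-\chi^t}^2$ plus a sum $\sum_{i=2}^{\tau+1}\gamma_i\E\norm{\chi^{t+1-i}\cdots}$ that can absorb / cancel against the $\sum_{i=1}^{\tau}\gamma_i\norm{\cdots}^2$ term from $V^t$, provided the $\gamma_i$ are chosen to be decreasing in a controlled way (e.g. $\gamma_i$ satisfying a recursion of the form $\gamma_{i+1} \le \gamma_i - (\text{something positive})$, which forces $\gamma_i$ to telescope down and makes the leftover delay terms nonpositive).

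Next I would substitute the bound from Lemma~\ref{lem:main_1} for $\E[F(\vw^{t+1})-F(\vw^t)]$. The key coupling is that the fresh term $\gamma_1\E\norm{\chi^{t+1}-\chi^t}^2$ coming from the Lyapunov shift must be expressed in terms of the device updates: since $\chi^{t+1}-\chi^t$ differs from $\chi^t$ only in the coordinates of whichever device $m$ was activated, and that update is $-\eta_m\Delta_m^t\vu_m^{t_m}$, we have $\E\norm{\chi^{t+1}-\chi^t}^2 = \sum_{m=1}^M q_m\eta_m^2\,\E[\,(\Delta_m^t)^2\norm{\vu_m^{t_m}}^2\,]$. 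Here $\norm{\vu_m^{t_m}}^2 = d_m$ by construction, and $\E[(\Delta_m^t)^2]$ is bounded by combining the variance bound of Lemma~\ref{lem:var} (which handles clipping, stochastic noise, ZO bias, and DP noise) with the squared-mean term $\norm{\nabla_{w_m}F_\lambda}^2$, itself controlled via \eqref{eq:grad_diff} and the descent term already present in Lemma~\ref{lem:main_1}. This is where the constant $\gamma_1$ enters the coefficients of $\gA_2$: each piece of the variance bound gets multiplied by $q_m\eta_m^2\gamma_1 d_m$, reproducing exactly the displayed expression for $\gA_2$.

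Then I would handle the delay term $\sum_{m=0}^M q_m\eta_m L(\tfrac12 + 2\eta_m L^2)\E\norm{\tilde\chi^t - \chi^t}^2$ appearing in Lemma~\ref{lem:main_1}. Under Assumption~\ref{assum:bound} (delay bounded by $\tau$), I would bound $\norm{\tilde\chi^t-\chi^t}^2 \le \tau\sum_{j=t-\tau}^{t-1}\norm{\chi^{j+1}-\chi^j}^2$ by Cauchy--Schwarz over the at most $\tau$ intermediate steps, and observe that each $\norm{\chi^{j+1}-\chi^j}^2$ is precisely one of the Lyapunov penalty terms (up to the $\gamma_i$ weights). Matching these against the negative leftover terms produced by the telescoping in the first paragraph is what pins down the recursion for $\{\gamma_i\}$; choosing $\gamma_i$ to decay geometrically (roughly $\gamma_i \propto$ a ratio slightly below 1) makes the delay contribution subsumed, leaving only the $\gamma_1$-dependent fresh term. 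Finally, with the step-size restriction $\eta \le \frac{B}{4L(B+8d_0)+8\gamma_1(2d_m+B)}$ from Theorem~\ref{thm:main}, the coefficient $q_m\eta_m(1-P)(\tfrac14 - \cdots)$ of $\E\norm{\nabla_{w_m}F}^2$ stays at least $\tfrac{1-P}{8}q_m\eta_m$, and summing over $m$ and lower-bounding by $\min_m\{q_m\eta_m\}$ gives the stated $-\frac{1-P}{8}\min_m\{q_m\eta_m\}\E\norm{\nabla_\vw F(\vw^t)}^2$ coefficient.

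The main obstacle I anticipate is the bookkeeping around the $\{\gamma_i\}$ recursion: one must simultaneously (i) keep the telescoped Lyapunov remainder nonpositive, (ii) ensure the delay term $\E\norm{\tilde\chi^t-\chi^t}^2$ from Lemma~\ref{lem:main_1} is fully absorbed by the negative $\norm{\chi^{j+1}-\chi^j}^2$ contributions, and (iii) end up with a finite, explicit $\gamma_1$ that is compatible with the step-size bound in the theorem statement. Getting a self-consistent choice (as opposed to a circular constraint) requires carefully ordering the inequalities — bounding the delay term first, then solving the resulting linear recursion for $\gamma_i$, then verifying the step-size condition closes the loop — and this is the step where the constants have to line up exactly to produce the clean coefficient $\tfrac{1-P}{8}$ and the stated form of $\gA_1+\gA_2$.
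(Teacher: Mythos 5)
Your proposal follows essentially the same route as the paper's proof: telescope the Lyapunov sum to isolate a fresh $\gamma_1\E\norm{\chi^{t+1}-\chi^{t}}^2$ term, substitute Lemma~\ref{lem:main_1}, bound the fresh increment via $\E\norm{\breve{G}^t_{m}}^2$ (equivalently $\E[(\Delta_m^t)^2]\norm{\vu_m}^2$) using Lemma~\ref{lem:var}, bound $\E\norm{\Tilde{\chi}^t-\chi^t}^2$ by Cauchy--Schwarz over at most $\tau$ increments, and pick the $\gamma_i$ recursion plus the step-size condition to absorb the delay terms and land on the $-\frac{1-P}{8}\min_m\{q_m\eta_m\}$ coefficient. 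The only cosmetic difference is that the paper uses an arithmetically decreasing $\gamma_i$ (fixed decrement $\tau\cdot\mathrm{const}$, with $\gamma_1$ given explicitly in \eqref{eq:gamma1}) rather than your suggested geometric decay, which does not change the argument.
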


By combining the descent inequality derived from Lemma 
\ref{lem:main_2} with appropriate choices of step size and smoothing parameters, we obtain a bound on the average gradient norm over $T$ iterations, which establishes the convergence rate as stated in Theorem \ref{thm:main}. 
We start from the result of Lemma \ref{lem:main_2}. 

    Re-arrange the terms in \eqref{eq:interm}, and take average over $0,1,\dots, T-1$:
\begin{align*}
    &\frac{1-P}{8T}\min_m\{q_m\eta_m\} \sum_{t=0}^{T-1}\E\left[\norm{\nabla_{\vw}F(\vw^t)}^2\right]\\
    \le& \frac{1}{T}\E\left[V^{0}-V^{T}\right] + \gA_1 + \gA_2\\
    \le& \frac{1}{T}\E\left[F^{0}-F^{T}\right] + \gA_1 + \gA_2,
\end{align*}
where the second inequality follows from the definition of $V^t$ in \eqref{eq:lyap}.

Dividing $\alpha =\frac{1}{8}\min_m\{q_m\eta_m\}$ from both sides, and plugging in $\gA_1$ and $\gA_2$, we have:
\begin{align*}
    &\frac{1-P}{T} \sum_{t=0}^{T-1}\E\left[\norm{\nabla_{\vw}F(\vw^t)}^2\right]\\
    \le &\frac{1}{\alpha T}\E\left[F^{0}-F^{T}\right] + \frac{1}{\alpha}\sum_{m=0}^M q_m\eta_m \frac{L^2}{8}\lambda^2 d_m^3\\
    &+ \frac{1}{\alpha}\sum_{m=0}^M q_m \eta_m^2 L \left(\frac{4}{B}d_m\sigma_s^2\right. \\
    &\left.+ \frac{(4-B)L^2}{4B} \lambda^2 d_m^3 +\frac{2P}{B} C^2 d_m+2\sigma_{dp}^2 d_m\right)\\
    &+ \frac{1}{\alpha}L\lambda^2 d +\frac{1}{\alpha}\sum_{m=1}^{M}q_m\eta_m^2\gamma_1\left(\frac{L^2}{4}\lambda^2 d_m^3+\frac{2}{B}d_m\sigma_s^2\right. \\
    &\left.+ \frac{L^2}{2B} \lambda^2 d_m^3 +\frac{P}{B} C^2 d_m+\sigma_{dp}^2 d_m\right)
\end{align*}
For the simplicity of analysis, we let $\eta_0 = \eta_m = \eta$, $q_* = \min_m q_m$, then $\alpha =\frac{\eta}{8}q_*$. Let $d_*=\max_m d_c$, and $\lambda \le \frac{1}{Ld\sqrt{T}}$. Thus, 
\begin{align*}
    &\frac{1-P}{T} \sum_{t=0}^{T-1}\E\left[\norm{\nabla_{\vw}F(\vw^t)}^2\right]\\
    \le &\frac{8}{q_*\eta T}\E\left[F^{0}-F^{T}\right] + \frac{2d_*^3/d^2}{q_* T}+ \frac{16\eta L}{q_*}  \left(\frac{4}{B}d_*\sigma_s^2 \right.\\
    &\left.+ \frac{(4-B)d_*^3/d^2}{4BT}  +\frac{2P}{B} C^2 d_*+2\sigma_{dp}^2 d_*\right)\\
    +& \frac{8}{q_*Ld\eta T} +\frac{16\eta \gamma_1}{q_*}\left(\frac{2}{B}d_*\sigma_s^2 + \frac{(B+2)d_*^3/d^2}{4BT}\right. \\
    &\left.+\frac{P}{B} C^2 d_*+\sigma_{dp}^2 d_m\right)\\
    \le & \frac{8}{q_*\eta T}\E\left[F^{0}-F^{T}\right] + \frac{2d_*}{q_* T}+ \frac{8}{q_*Ld\eta T}\\
    &+\frac{16\eta (4L+2\gamma_1)d_*\sigma_s^2}{q_* B}+\frac{4\eta \left((4-B)L+(B+2)\gamma_1\right)d_*}{q_*B T}\\
    +&\frac{16\eta (2L+\gamma_1)P C^2 d_*}{q_* B}+ \frac{16\eta (2L+\gamma_1)\sigma_{dp}^2 d_*}{q_*}
\end{align*}
where in the last step, we use the fact that $d > d_*$.\\
If we choose $\eta = \frac{1}{\sqrt{Td_*}}$ and use Lemma \ref{lem:Q}., we can get the convergence rate:
\begin{align*}
    &\frac{1}{T} \sum_{t=0}^{T-1}\E\left[\norm{\nabla_{\vw}F(\vw^t)}^2\right]\\
    \le &\frac{1}{1-\Xi}\left\{\frac{8\sqrt{d_*}}{q_* T^{1/2}}\E\left[F^{0}-F^{T}\right] + \frac{2d_*}{q_* T}+ \frac{8\sqrt{d_*}}{q_*Ld T^{1/2}}\right.\\
    &+\frac{16 (4L+2\gamma_1)\sqrt{d_*}\sigma_s^2}{q_* BT^{1/2}}
    +\frac{4 \left((4-B)L+(B+2)\gamma_1\right)\sqrt{d_*}}{q_*B T^{1/2}}\\
    &\left.+\frac{16 (2L+\gamma_1)\Xi C^2 \sqrt{d_*}}{q_* BT^{1/2}}+ \frac{16 (2L+\gamma_1)\sigma_{dp}^2 
    \sqrt{d_*}}{q_*T^{1/2}}\right\}
\end{align*}

We thus conclude that the convergence rate is 
\[\gO\left(\frac{d_*^{1/2}\E\left[F^{0}-F^{T}\right]+d_*^{1/2}\sigma_s^2+d_*^{1/2}\sigma_{dp}^2}{T^{1/2}}\right),\]
where constant factors are absorbed into the $O(\cdot)$-notation for simplicity.

\section{Privacy Analysis}
\label{sec:privacy}

In this section, we talk about how our algorithm protects privacy under threats of label inference attacks and feature inference attacks.
\subsection{Threat Model and Risk Evaluation}
\textbf{Threat Model. }
We consider the \textit{honest-but-curious} (HBC) threat model, where parties correctly follow the training protocol but may attempt to infer sensitive information from exchanged messages. In paticular, we assume that the server, which holds the labels and performs loss computation, is trusted, while devices, who holds partial data at local, may attempt to infer private information from intermediate results exchanged during training~\cite{papernot2018sok}. 

This assumption is motivated by typical real-world deployment scenarios of VFL, such as finance, healthcare, and online advertising,  where the server corresponds to a centralized service provider (e.g., a bank, hospital, or platform operator) that owns the prediction task, maintains the training infrastructure, and is subject to strict regulatory and contractual obligations. In contrast, devices are often data holders who contribute feature sets but do not control the overall learning objective. Similar trust assumptions have been widely adopted in prior VFL works \cite{cheng2021secureboost,wu2020privacy,xu2021fedv,ye2025vertical}, where attacking techniques and privacy protection is primarily designed for feature-holding parties.

\textbf{Training-Time Privacy Risks in VFL.}
Under the HBC model, privacy risks arise from information exchanged during training. Among existing attacks, \textbf{1)} label inference attacks, which aim to recover sensitive labels by exploiting gradients or other backward information received during training, and \textbf{2)} feature inference attacks, which attempt to infer private feature values of a party, are recognized as the two dominant threats in VFL. Existing studies have shown that these two attack categories capture the primary privacy risks in VFL settings. Therefore, in this work, we focus exclusively on label inference and feature inference attacks during training.

\textbf{Implications for Privacy Mechanism Design.}
Given this threat model, we argue that privacy protection is unnecessary for forward feature transmissions, which are sent from passive parties to the trusted server (as in VAFL\cite{chen2020vafl}).

Since the server is assumed to be non-adversarial, protecting forward features does not mitigate the considered privacy risks and would unnecessarily distort the information required for accurate loss computation. In contrast, backward information, such as gradients or gradient-related signals, is transmitted from the server to potentially curious passive parties and has been shown to be the primary carrier of both label information and sensitive feature correlations. Consequently, backward messages constitute the main attack surface for both label inference and feature inference attacks under the HBC model. Therefore, we apply privacy mechanisms \textbf{exclusively to the backward information}, which effectively mitigates the dominant training-time privacy risks while preserving model utility.

\subsection{Theoretical Differential Privacy Guarantee}
In this part, we give rigorous proof for the differential privacy guarantee in Theorem \ref{thm:dp}. 

We first introduce the definition of Gaussian differential privacy (GDP)~\cite{dong2022gaussian} which will be useful in the proof for Theorem \ref{thm:dp}. Compared with traditional DP defined in \eqref{eq:DP}, this notion of privacy provides a much tighter composition theorem, thus requiring less noise to be injected each round for a given privacy level.
\begin{definition}[Gaussian Differential Privacy]\label{def:GDP}
    Let $G_\mu:=T(\gN(0,1),\gN(\mu,1))$ for $\mu\ge0$, where the trade-off function $T(P,Q):[0,1]\rightarrow [0,1]$ is defined as $T(P,Q)(\alpha)=\inf(\beta_\phi: \alpha_\phi<\alpha)$.
    A mechanism $M$ is said to satisfy $\mu$-Gaussian Differential Privacy if it satisfies
    \[T(M(X),M(X'))\ge G_\mu\] 
    For all neighboring dataset $X,X'\in \gX^n$.
\end{definition}
We construct our DP algorithm based on the Gaussian Mechanism. Specifically, Gaussian mechanism ensures that, by injecting a designed Gaussian noise into a random algorithm, the algorithm is differentially private, and it provides a guidance to how large the noise should be for reaching a specific privacy level. Before presenting our results, we first present several existing results from prior works. We begin with the Gaussian mechanism under GDP, which is summarized in the following result:
\begin{result}[Gaussian Mechanism for Gaussian Differential Privacy~\cite{dong2022gaussian}]\label{thm:gm}
    Define the Gaussian mechanism that operates on a statistic $\theta$ as $M(\theta)=\theta(X)+\sigma$, where $\sigma\sim\gN(0,r^2C_{\theta}^2/\mu^2)$, $r$ is the sample rate for a single datum, and $C_{\theta}$ is the $L_2$ sensitivity of $X$. Then, $M$ is $\mu$-GDP.
\end{result}

The repeated application of a Gaussian mechanism is known as composition. In this setting, privacy loss accumulates over iterations: adding Gaussian noise multiple times leads to a weaker aggregate privacy guarantee than a single noise injection. Accurately quantifying the resulting privacy level of iterative Gaussian mechanisms, which is exactly the scenario encountered in neural network training, necessitates the following composition theorem.
\begin{result}[Composition of Gaussian Differential Privacy~\cite{dong2022gaussian}]\label{thm:composition}
The $T$-fold composition of $\mu$-GDP mechanisms is $\sqrt{T}\mu$-GDP
\end{result}

Due to differences in their respective notational frameworks, it is not straightforward to directly compare the privacy guarantees of Gaussian DP and the conventional $(\epsilon,\delta)$-DP, which hinders fair comparison with baseline methods. To address this issue and present our results in a commonly adopted $(\epsilon,\delta)$-DP form, we leverage the following lossless conversion theorem to translate Gaussian DP guarantees into equivalent:
\begin{result}[Conversion from Gaussian Differential Privacy to $(\epsilon, \delta)$-Differential Privacy~\cite{dong2022gaussian}]\label{thm:conversion}
    A mechanism is $\mu$-GDP iff it is $(\epsilon, \delta(\epsilon))$-DP for all $\epsilon\ge0$, where
    \begin{align}
        \delta(\epsilon)=\Phi(-\frac{\epsilon}{\mu}+\frac{\mu}{2})-e^\epsilon\Phi(-\frac{\epsilon}{\mu}-\frac{\mu}{2})\label{eq:conversion}
    \end{align}
\end{result}

Using the above results, we are now ready to prove the following theorem, which establishes that our algorithm is differentially private and quantifies its privacy level.
\begin{theorem}
\label{thm:dp}
Under assumption \ref{assum:Lip}-\ref{assum:ind_part}, suppose the privacy parameter $\sigma_{dp}$ is
\(\sigma_{dp} = \frac{ 2C\sqrt{T}}{D\mu}, \)
where $D$ denotes the volume of the dataset, $T$ defines total iterations, and $\mu>0$ controls the privacy level.
The training process of Alg \ref{alg:DPZV_asyn} is seen to be $(\epsilon,\delta(\epsilon))$-differential private for $\forall \epsilon>0$, where 
\begin{align}\label{eq:dp_delta}
\delta(\epsilon)=\Phi(-\frac{\epsilon}{\mu}+\frac{\mu}{2})-e^\epsilon\Phi(-\frac{\epsilon}{\mu}-\frac{\mu}{2})\end{align}
\end{theorem}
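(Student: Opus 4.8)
\textbf{Proof proposal for Theorem~\ref{thm:dp}.}
The plan is to identify the per-round release mechanism, compute its $L_2$ sensitivity, invoke the Gaussian mechanism for GDP (Result~\ref{thm:gm}) to get a per-round $\mu$-GDP guarantee, apply the composition theorem (Result~\ref{thm:composition}) across the $T$ global iterations, and finally convert to $(\epsilon,\delta)$-DP via Result~\ref{thm:conversion}. Concretely, the only message leaving the trusted server and reaching a (curious) device is the scalar $\Delta_m^t = \frac{1}{B}\sum_{i\in\gI_m^{t_m}}\mathrm{clip}_C(\delta_{m,i}^{t,t_m}) + z_m^t$, so the object whose privacy we must argue is exactly this clipped-and-noised scalar average. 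The first step is therefore to observe that each summand $\mathrm{clip}_C(\delta_{m,i}^{t,t_m})$ lies in $[-C,C]$ (the clipping is one-sided in the algorithm text, $\min\{\delta,C\}$, but combined with the lower bound implied by the clipping-level condition on $C$ in Theorem~\ref{thm:main}, or by reading $\mathrm{clip}_C$ as the symmetric truncation used in the convergence proof, we get $|\mathrm{clip}_C(\delta)|\le C$); hence changing a single individual's datum $\xi_i$ changes the unnormalized sum by at most $2C$ in $\ell_1$, which for a scalar equals the $\ell_2$ change, and after dividing by the sampling-rate bookkeeping this gives $L_2$ sensitivity governed by the constant $C$ and the sample rate $r = B/D$ (equivalently, one uses the per-datum sensitivity $2C$ with sampling rate $1/D$).

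Step two: match noise to sensitivity. We chose $z_m^t\sim\gN(0,\sigma_{dp}^2)$ with $\sigma_{dp} = 2C\sqrt{T}/(D\mu)$. Plugging $r$ and $C_\theta$ into Result~\ref{thm:gm}, the noise level required for a single round to be $\mu_0$-GDP is $\sigma \sim \gN(0, r^2 C_\theta^2/\mu_0^2)$; solving for $\mu_0$ with our choice of $\sigma_{dp}$ yields a per-round guarantee of $\mu_0 = \mu/\sqrt{T}$ (the $\sqrt{T}$ in $\sigma_{dp}$ is placed precisely so that each of the $T$ rounds gets a $1/\sqrt{T}$-scaled budget). Step three: by Result~\ref{thm:composition}, the $T$-fold composition of $(\mu/\sqrt{T})$-GDP mechanisms is $\sqrt{T}\cdot(\mu/\sqrt{T}) = \mu$-GDP, so the entire training trajectory — the full sequence of scalars $\{\Delta_m^t\}_{t=1}^T$ — is $\mu$-GDP. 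A point to address carefully here is that the device also updates $\vw_m$ using $\vu_m^{t_m}$, but $\vu_m^{t_m}$ is locally generated randomness independent of the labels/loss, so post-processing (which GDP, like all DP notions, is closed under) covers the local update; likewise the asynchronous schedule and the staleness of $\tilde h$ only affect which datum indices enter which round, not the per-round sensitivity, so Assumptions~\ref{assum:Lip}--\ref{assum:ind_part} are invoked only to justify that clipping keeps summands in $[-C,C]$. Step four is the purely mechanical conversion: apply Result~\ref{thm:conversion} to the $\mu$-GDP guarantee to conclude $(\epsilon,\delta(\epsilon))$-DP with $\delta(\epsilon)=\Phi(-\epsilon/\mu+\mu/2)-e^\epsilon\Phi(-\epsilon/\mu-\mu/2)$ for all $\epsilon>0$, which is exactly~\eqref{eq:dp_delta}.

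The main obstacle I anticipate is not any single inequality but getting the sensitivity/sampling-rate bookkeeping to exactly produce the stated $\sigma_{dp} = 2C\sqrt{T}/(D\mu)$ rather than, say, a version with $B$ in it: one must be consistent about whether the released statistic is the sum or the mean over the minibatch, and whether the "sample rate $r$" in Result~\ref{thm:gm} is $B/D$ (minibatch) or $1/D$ (single datum), since these choices trade off against the $1/B$ normalization in $\Delta_m^t$ and against the per-datum sensitivity $2C$ versus mean-sensitivity $2C/B$. The clean way is to treat each global round as a subsampled Gaussian mechanism on the sum with per-datum $\ell_2$-sensitivity $2C$ and subsampling rate $1/D$ over the full dataset, note that $\Delta_m^t$ is a (deterministic, hence post-processing) rescaling of that noised sum, and verify that Result~\ref{thm:gm} with $r=1/D$, $C_\theta=2C$, and target $\mu_0=\mu/\sqrt T$ forces variance $(2C)^2/(D^2(\mu/\sqrt T)^2) = 4C^2T/(D^2\mu^2)=\sigma_{dp}^2$ — consistent with the theorem statement. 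A secondary subtlety worth a sentence in the proof is the one-sided versus symmetric clipping: if one insists on the literal $\mathrm{clip}_C(\delta)=\min\{\delta,C\}$, then $\delta$ is also lower-bounded with high probability by the Lipschitz/smoothness argument behind Lemma~\ref{lem:Q} and the clipping-threshold hypothesis, giving an effective two-sided bound and hence sensitivity $2C$; otherwise one simply reads $\mathrm{clip}_C$ as the symmetric truncation to $[-C,C]$ as is done implicitly in the convergence proof where $|\mathrm{clip}_C(\delta)|\le C$ is used. Everything else is a direct chain of the three cited results.
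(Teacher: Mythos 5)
Your proposal is correct and follows essentially the same route as the paper's proof: bound the per-round $L_2$ sensitivity of the clipped scalar average by $2C/B$ (equivalently, per-datum sensitivity $2C$ with sampling rate $1/D$), apply the Gaussian mechanism of Result~\ref{thm:gm} to get a per-round $(\mu/\sqrt{T})$-GDP guarantee, compose over $T$ rounds via Result~\ref{thm:composition}, invoke post-processing for the device-side update, and convert to $(\epsilon,\delta)$-DP via Result~\ref{thm:conversion}. Your remarks on the sum-versus-mean bookkeeping and on the one-sided $\min\{\delta,C\}$ notation are careful points the paper glosses over, but they do not change the argument.
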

\begin{proof}
    First recall the definition of $\Delta_{m}^{t}$ defined in \eqref{eq:Delta}:
    \[\Delta_{m}^{t} = \frac{1}{B}\sum_{i \in \gI_{m}^{t_m}} \mathrm{clip}_C(\delta_{m,i}^{t,t_m})+z_{m}^{t}\]
    For a pair of neighboring dataset $X,X'$ differing in only one entry of data, the $L_2$ sensitivity $C_{L}$ of $\frac{1}{B}\sum_{i \in \gI_{m}^{t_m}} \mathrm{clip}_C(\delta_{m,i}^{t,t_m})$ follows by
    \begin{align*}
       C_L =  \norm{\frac{1}{B}\sum_{i \in \gI_{m}^{t_m}} \mathrm{clip}_C(\delta_{m,i}^{t,t_m})}_2
        \le\frac{1}{B}\norm{\mathrm{clip}_C(\delta_{m,i}^{t,t_m})}_2
        \le\frac{2C}{B}
    \end{align*}
    The sample rate $r$ of a single data is seen to be the batch size $B$ divided by the size of the dataset $D$:
    \[r = \frac{B}{D}\]
    Note that in Theorem \ref{thm:dp}, the standard deviation of $z_m^t$ is given by 
    \begin{align*}
        \sigma_{dp} = \frac{ 2C\sqrt{T}}{D\mu}=\frac{ (B/D) (2C/B) \sqrt{T}}{\mu}=\frac{rC_L}{(\mu/\sqrt{T})}
    \end{align*}
    By Result \ref{thm:gm}, the mechanism conducted in \eqref{eq:Delta} satisfies $(\mu/\sqrt{T})$-Gaussian Differential Privacy. Further applying the composition of GDP in Result \ref{thm:composition}, and by the post-processing~\cite{dwork2014algorithmic} of differential privacy, we have that the whole training process of Algorithm \ref{alg:DPZV_asyn} is $\mu$-GDP. We complete the proof by converting $\mu$-GDP to $(\epsilon,\delta)$-DP according to \eqref{eq:conversion}.
\end{proof}
\begin{figure*}[t!]
\centering
        \includegraphics[width=\textwidth]{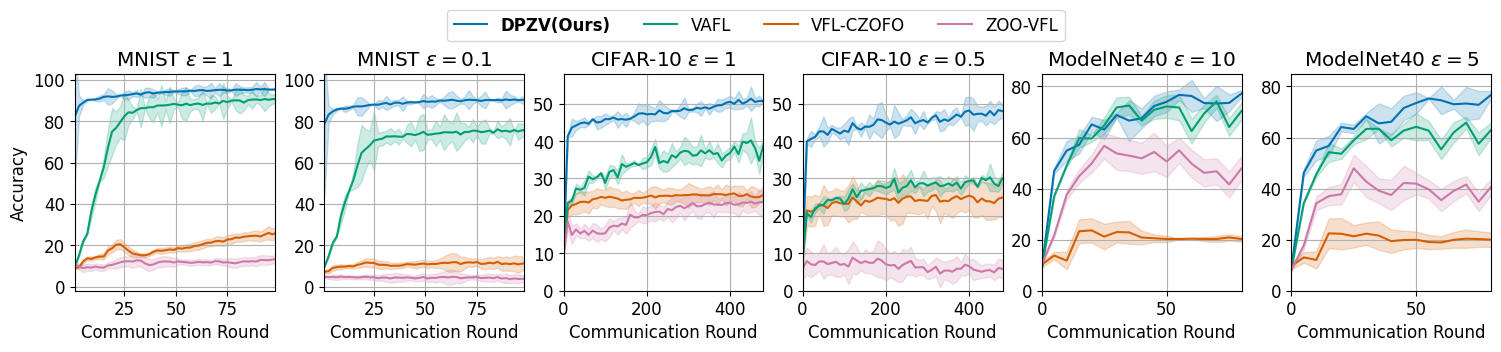}
        \vspace{-1mm}
    \caption{Test Accuracy of VFL Methods on image classification tasks under DP constraints. $\delta$ is set to $1\times 10^{-3}$. 
{\tt DPZV} outperforms first-order VFL methods on two datasets and surpasses all other ZO-based methods across all three datasets, showing both a higher accuracy and a faster convergence rate.
    }
    \label{fig:main}
\end{figure*}

\begin{figure}[t!]
\centering
    \includegraphics[width=0.5\textwidth]{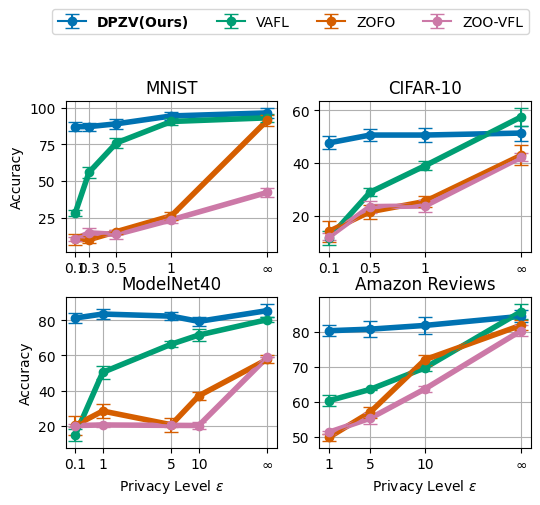}  
        \vspace{-3mm}
    \caption{Privacy-Accuracy tradeoff across different datasets and algorithms. We use a constant level of $\delta=1\times 10^{-3}$ and vary $\epsilon$ to simulate different privacy levels. Our algorithm consistently outperforms baselines under tight privacy budget, showing a slower decay in performance than baselines as $\epsilon$ decreases.}
    \label{fig:PA_TO}
\end{figure}
\textbf{Discussion.}
The theorem provides privacy guarantee under the ``honest-but-curious" threat model, where one or a few malicious devices try to do inference attacks by collecting information of the system. By providing only the differentially private ZO information $\Delta_{m}^{t}$, the algorithm protects for labels~\cite{fu2022label} and per-record influence, because the attacker cannot differentiate a single datum in the dataset $\gD$ and label set $\mY$. 
The differential privacy parameter $\mu$ is related to $(\epsilon, \delta)$ through the relationship defined in \eqref{eq:dp_delta}. Given any two of these parameters, the third can be determined by solving \eqref{eq:dp_delta}, allowing full flexibility in controlling the privacy level.

\section{Experiments}
\label{sec:exp}

\subsection{Dataset and Baselines}
We consider four datasets: 
image dataset 
MNIST~\cite{deng2012mnist}, CIFAR-10~\cite{krizhevsky2009learning}, semantic dataset Amazon Review Polarity~\cite{mcauley2013hidden}, and multi-view dataset ModelNet40~\cite{wu20153d}. For each dataset, we conduct a grid search on learning rates and other hyperparameters. We run 100 epochs on each method and select the best validation model. We run each algorithm under three random seeds and compute the sample variance for the generosity of our result. Additional information on the selection of dataset, data processing and model architecture can be found in Appendix \ref{appen:dataset}.

We compare our algorithm against several SotA VFL methods: 1) {\tt VAFL}~\cite{chen2020vafl} 2) {\tt ZOO-VFL}~\cite{zhang2021desirable}, 3) {\tt VFL-CZOFO}~\cite{wang2024unified}. All methods assume that the server holds the labels, and concatenates the embeddings of devices as the input of the server. {\tt VAFL} updates its model through first-order optimization in an asynchronous manner, and achieves DP by adding random noise to the output of each local embedding. We use {\tt VAFL} as a first-order baseline of VFL method. Contrary to {\tt VAFL}, {\tt ZOO-VFL} and {\tt VFL-CZOFO} both adopt ZO optimization in their training procedure. {\tt ZOO-VFL} substitutes the first order optimization by ZO optimization in common VFL methods. {\tt VFL-CZOFO} uses a cascade hybrid optimization method that computes the intermediate gradient via ZOO, while keeping the back propagation on both server and device. To enforce DP and compare all methods fairly, we follow the approach in {\tt VAFL}~\cite{chen2020vafl}, applying the same DP mechanism to both {\tt VFL-CZOFO} and {\tt ZOO-VFL}, which involves clipping the embeddings and adding calibrated vector noise.
In this paper, we only focus on the same update behavior as {\tt VAFL} which updates server and device once for every communication round. In addition, model delay has been manually adjusted for all methods based on the per-batch computation time on a single device to simulate device heterogeneity and ensure a fair comparison.
\subsection{Analysis of Results}
\textbf{Accelerated convergence of {\tt DPZV} with privacy guarantees.}
Figure~\ref{fig:main} presents the performance evaluation of {\tt DPZV} against all baselines on both image classification and language tasks under certain DP levels, where two different privacy budgets $\epsilon$ are evaluated for each dataset. On MNIST and CIFAR-10 with strict privacy constraints ($\epsilon = 1$ and $\epsilon = 0.5$, with $\delta = 1 \times 10^{-3}$), {\tt DPZV} consistently achieves the highest test accuracy and faster convergence across communication rounds.  As the privacy budget becomes tighter, first-order baseline {\tt VAFL} exhibits a pronounced accuracy drop due to the injection of high-dimensional noise, while the zeroth-order baseline {\tt ZOO-VFL} also suffers from severe performance degradation and unstable training dynamics caused by the combined effect of zeroth-order bias and DP noise.In contrast, {\tt DPZV} remains comparatively robust to increasingly stringent privacy constraints, demonstrating a significantly improved privacy--accuracy tradeoff.

On the more challenging ModelNet40 task, where training is more challenging due to larger models and a higher degree of device heterogeneity, we moderately relax the privacy budget ($\epsilon = 10$ and $\epsilon = 5$), and observe that {\tt DPZV} still maintains a competitive advantage over other ZO-based methods while achieving accuracy comparable to the first-order baseline {\tt VAFL}. The scalar noise injected in {\tt DPZV} effectively constrains the noise magnitude, mitigating the instability typically introduced by noisy zeroth-order gradient estimators. This leads to more stable training dynamics than other ZO based methods. Moreover, in contrast to the high-dimensional vector noise used in first-order baseline {\tt VAFL}, scalar noise incurs significantly less optimization distortion, enabling {\tt DPZV} to achieve a more favorable privacy--accuracy tradeoff and, consequently, fewer communication rounds to reach a target accuracy.

 \begin{figure}[t]
\begin{center}
\centerline{\includegraphics[width=0.5\textwidth]{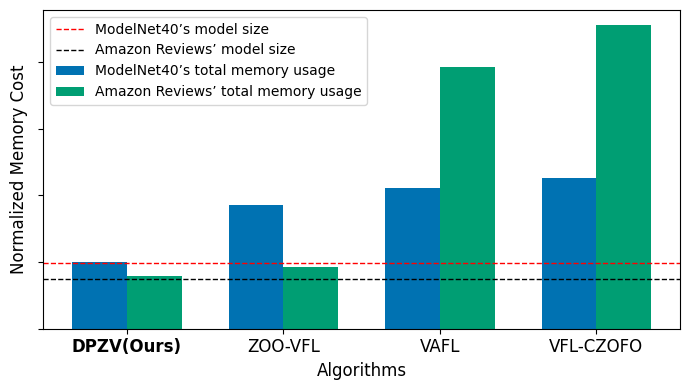}}
\caption{Normalized memory cost in training for each method. {\tt DPZV} requires the smallest memory allocation in both datasets, almost the same as model memory itself. This shows the memory efficiency of {\tt DPZV}, allowing superior performance on large-scale neural networks.}
\label{fig:mem_cost}
\end{center}
\end{figure}

\textbf{{\tt DPZV} elevates privacy-utility tradeoff.}
Figure~\ref{fig:PA_TO} presents the accuracy-privacy tradeoff across four benchmark datasets: MNIST, CIFAR-10, ModelNet40, and Amazon Reviews. We evaluate the robustness of our proposed method \texttt{DPZV} under various privacy budgets $\epsilon$, while fixing the failure probability $\delta=1\times 10^{-3}$. 
Across all tasks, \texttt{DPZV} consistently achieves the highest accuracy under tight privacy regimes ($\epsilon \leq 1$), indicating its ability to maintain model utility even with stringent differential privacy constraints. Notably, on MNIST and ModelNet40, \texttt{DPZV} shows only a marginal drop in accuracy as $\epsilon$ decreases, while the competing methods suffer substantial degradation. For instance, on CIFAR-10 at $\epsilon = 0.1$, \texttt{DPZV} maintains over 90\% accuracy, whereas \texttt{VFL-CZOFO} and \texttt{ZOO-VFL} fall below 30\%.
Similar trends are observed on Amazon Reviews, where \texttt{DPZV} consistently outperforms baselines under strict privacy, especially at $\epsilon = 1$ and $\epsilon = 5$. These results validate the effectiveness of our scalar-noise-based DP mechanism in balancing utility and privacy, and \textit{highlight its advantage in real-world privacy-sensitive federated learning scenarios}.

\begin{table}[t!]
\caption{Total communication cost incurred by each method until reaching a fixed target accuracy for each dataset. The target accuracy for each dataset is 90\%, 40\%, 50\%, and 80\%}
\label{tab:commu_energy}
\resizebox{\linewidth}{!}{
\begin{tabular}{lcccc}
\toprule
Dataset & DPZV(Ours) & VAFL & CZOfirst-order & ZOO-VFL \\
\midrule
MNIST(MB)
& 576.0 \com{31.7} 
& 1209.6 \com{98.9} 
& 4492.8 \com{261.7} 
& $\infty$
\\
CIFAR-10(GB)
& 2.74 \com{0.16} 
& 2.26 \com{0.12} 
& 20.02 \com{1.94} 
& 23.90 \com{3.25} 
\\
ModelNet40(MB)
& 302.4 \com{17.7} 
& 423.36 \com{22.0} 
& 604.8 \com{28.2} 
& 5927.04 \com{328.5} 
\\
Amazon Reviews(GB)
& 13.27 \com{0.78} 
& 16.59 \com{0.86} 
& 36.59 \com{3.70} 
& 69.67 \com{3.86} 
\\
\bottomrule
\end{tabular}
}
\end{table}

\textbf{{\tt DPZV} mitigates memory overhead.} Figure \ref{fig:mem_cost} 
compares the GPU memory consumption, with memory values normalized for readability. We compare the memory cost on larger models, where we use ResNet for image classification and BERT for sequence classification. We record the highest memory peak to show the total required memory for each method on training large models. 
We observe that {\tt DPZV} requires memory approximately equal to the model size, whereas the first-order method {\tt VAFL} demands more than twice the model size. Compared to {\tt ZOO-VFL}, {\tt DPZV} achieves further memory savings by leveraging {\tt MeZO}. 
\textit{These results highlight the scalability of {\tt DPZV}, making it well-suited for deploying large pretrained language models in VFL scenarios}.

\subsection{Communication Efficiency Evaluation}
\label{ssec:com_cost_setting}
We quantify the communication volume of the network by
\begin{equation}
\mathcal{V}_T \;=\; \sum_{t=1}^{T}\sum_{m=1}^{M} \bigl( N_{t,m}^{\mathrm{up}} + N_{t,m}^{\mathrm{down}} \bigr),\label{eq:VT}
\end{equation}
where \(N_{t,m}^{\mathrm{up}}\) and \(N_{t,m}^{\mathrm{down}}\) denote the sizes (in bytes) of the tensors transmitted in the uplink and downlink, respectively, between the server and device \(m\) at communication round \(t\). This metric measures the total communication cost as the cumulative number of bytes exchanged over all devices and rounds.

\textbf{Evaluation on Total Communication Cost.}
Table~\ref{tab:commu_energy} reports the total communication cost $\mathcal{V}_T$ required to reach a target accuracy, which is set to be 90\%, 40\%, 50\%, and 80\% for each dataset, respectively. The metric~\eqref{eq:VT} captures the end-to-end communication cost over the entire training process, reflecting both convergence speed and stability under differential privacy constraints.

Across the evaluated datasets, {\tt DPZV} consistently achieves low total communication cost and often outperforms both first-order and ZO baselines. In particular, {\tt DPZV} significantly reduces communication cost compared to existing ZO methods, while remaining competitive with the first-order baseline {\tt VAFL}. In contrast, ZO baselines such as {\tt VFL-CZOFO} and {\tt ZOO-VFL} incur substantially higher transmitted bytes, and in some cases fail to reach the target accuracy within the training budget. These results underscore the importance of an improved privacy--accuracy tradeoff, as poor convergence under privacy constraints directly translates into excessive communication overhead. By converging in fewer rounds under privacy constraints, {\tt DPZV} reduces the total communication cost required to attain a desired accuracy.

\begin{figure}[t!]
\centering
        \includegraphics[width=\linewidth]{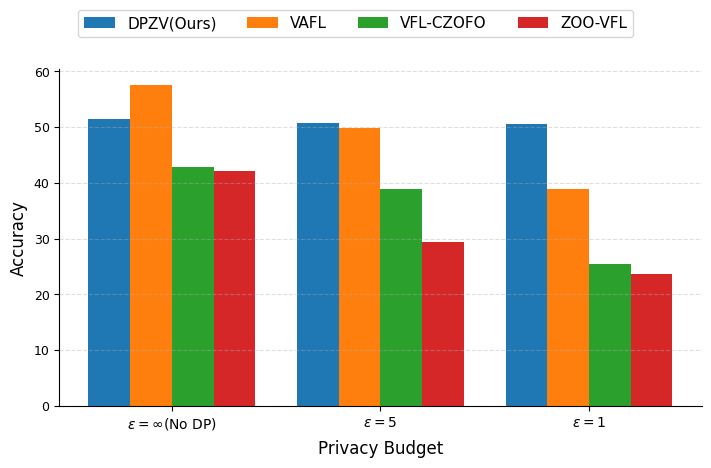}
        \vspace{-1mm}
    \caption{Achieved accuracy under fixed communication cost on CIFAR-10 under different privacy budget. $\delta$ is set to $1\times 10^{-3}$.
    }
    \label{fig:privacy_ablation}
\end{figure}
\textbf{Communication Efficiency under Differential Privacy Budget.}
Figure~\ref{fig:privacy_ablation} evaluates model accuracy under a fixed total communication budget, highlighting how differential privacy alters the effectiveness of each transmitted byte. In the non-private setting ($\epsilon=\infty$), the first-order baseline {\tt VAFL} achieves the highest accuracy, which is consistent with standard optimization theory: in the absence of privacy constraints, first-order methods benefit from exact gradient information and therefore typically outperform ZO alternatives. In contrast, ZO methods, including {\tt DPZV}, rely on gradient estimation from function evaluations and may incur a modest performance gap when no privacy noise is present.

However, this advantage of {\tt VAFL} diminishes rapidly as privacy constraints are introduced. As the privacy budget decreases to $\epsilon=5$ and further to $\epsilon=1$, {\tt VAFL} experiences a substantial accuracy drop, despite transmitting the same number of bytes. This is because enforcing differential privacy requires injecting high-dimensional noise into gradient updates, causing a large fraction of communicated information to be dominated by noise rather than optimization signal. The degradation is even more pronounced for ZO baselines such as {\tt ZOO-VFL}. While {\tt VFL-CZOFO} exhibits improved robustness relative to {\tt ZOO-VFL}, it still incurs significant performance loss under tight privacy budgets.
In contrast, {\tt DPZV} maintains consistently strong performance across all privacy regimes, with only marginal degradation as the privacy budget tightens. By injecting scalar noise rather than high-dimensional vector noise, {\tt DPZV} effectively limits variance amplification. As a result, {\tt DPZV} achieves a markedly superior privacy–communication–accuracy tradeoff, enabling more effective utilization of the communication budget and requiring fewer rounds to reach a target accuracy under strict privacy constraints. From a communication perspective, DPZV preserves a higher signal-to-noise ratio per transmission, enabling faster accuracy recovery under the same communication budget.
\begin{figure}[t]
\begin{minipage}[t]{0.49\textwidth}
\centering
\includegraphics[width=\linewidth]{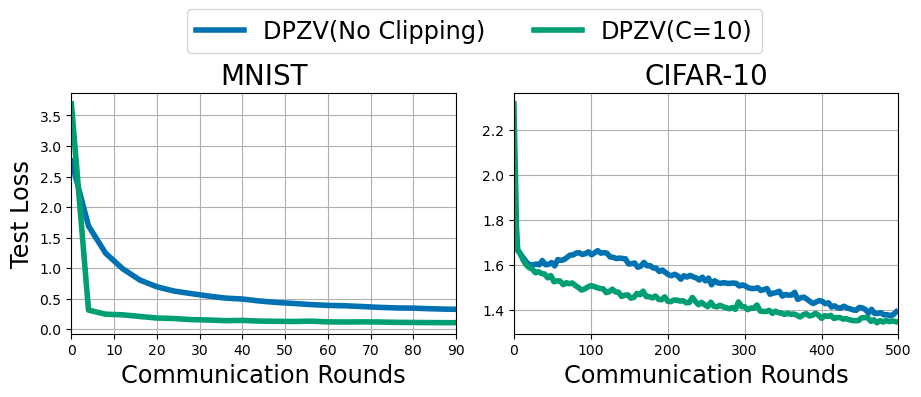}    
        \caption{Effect of gradient clipping on {\tt DPZV} under non-private settings. We compare models trained with and without clipping of the ZO information. Across both MNIST and CIFAR-10, clipping accelerates convergence and stabilizes training.}
        \label{fig:clip}
        \end{minipage}
    \hspace{10pt}
    \begin{minipage}[t]{0.49\textwidth}
        \centering

            \includegraphics[width=\linewidth]{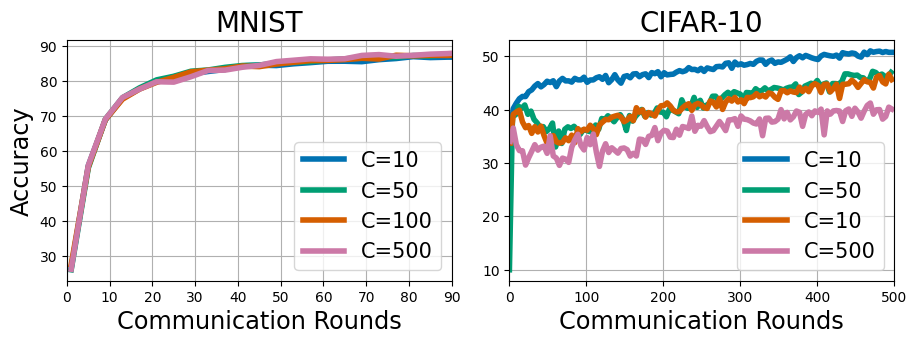}
        \caption{Effect of clipping threshold on {\tt DPZV} under differential privacy constraint $\epsilon=1$. While all clipping levels perform similarly on MNIST, smaller thresholds ($C=10$) significantly improve accuracy and convergence on CIFAR-10. }
        \label{fig:clip_levels}
    \end{minipage}
\end{figure}
\subsection{Ablation Study}
\textbf{Clipping benefits convergence.}
We observe in Figure~\ref{fig:PA_TO} that even under no privacy constraints ($\epsilon=\infty$), {\tt DPZV} outperforms other ZO based algorithms. A key distinction lies in {\tt DPZV}'s use of scalar clipping on ZO information, while originally introduced for differential privacy, also acts as a form of gradient regularization. This regularization effect has been shown to improve convergence in prior work~\cite{zhang2019gradient}, and we observe similar benefits here.
Figure~\ref{fig:clip} verifies our insight by comparing the convergence behavior of {\tt DPZV} with and without gradient clipping under a non-private setting. The plots show test loss versus communication rounds. In both datasets, applying clipping to the ZO information significantly improves convergence speed. For MNIST, clipped {\tt DPZV} reaches low test loss much faster and stabilizes more smoothly. On CIFAR-10, the clipped version also demonstrates consistently lower loss throughout training. These results suggest that clipping not only stabilizes training but also enhances convergence efficiency, even when privacy is not enforced.

\textbf{Impact of Sensitivity Level.}
Figure~\ref{fig:clip_levels} presents the performance of {\tt DPZV} under different sensitivity levels, which is controlled by the clipping threshold $C$. We apply a fixed privacy budget of $\epsilon=1, \delta=1\times 10^{-3}$ on both MNIST and CIFAR-10. In the MNIST setting, all clipping levels achieve similar convergence and final accuracy, suggesting the model is robust to the choice of $C$ in simple tasks. In contrast, the CIFAR-10 results highlight a pronounced impact: smaller clipping values (e.g., $C=10$) yield better accuracy and stability over training. Larger thresholds, such as $C=500$ degrade performance, likely due to the excessive noise required to satisfy DP constraints. These results emphasize the importance of careful clipping calibration in more complex settings to ensure a good privacy-utility tradeoff.


\section{Conclusion and Future Work}


In this paper, we proposed {\tt DPZV}, a differentially private zeroth-order vertical federated learning framework designed for privacy-critical and bandwidth-constrained settings. By eliminating explicit gradient transmission and injecting calibrated scalar noise at the server, {\tt DPZV} achieves tunable $(\epsilon,\delta)$-differential privacy while avoiding the variance amplification associated with high-dimensional noise in existing VFL methods, thereby mitigating key privacy risks such as label inference. We established convergence guarantees under asynchronous updates and bounded delays, highlighting an underexplored advantage of zeroth-order optimization for private federated learning. Extensive experiments on image and language tasks demonstrate that {\tt DPZV} consistently outperforms both first-order and zeroth-order VFL baselines across a wide range of privacy budgets, achieving improved accuracy, stability, and scalability under strict privacy constraints.

Future work includes extending {\tt DPZV} to stronger adversarial models and adaptive privacy mechanisms, broadening the applicability of zeroth-order methods in privacy-preserving collaborative learning.



\bibliography{references}
\bibliographystyle{IEEEtran}

 




\vfill
%

\newpage
\onecolumn




\appendix
\subsection{Preliminaries and Outline}
\label{appen:prelim}
We first define the following notation table to facilitate the proof: 
\begin{table}[ht]
    \centering
    \begin{tabular}{p{0.4\textwidth} p{0.5\textwidth}}
    \toprule
    \textbf{Notation} & \textbf{Description} \\
    \midrule
    $\vw=[w_0,w_1,w_2,\dots,w_M]$ & All learnable parameters\\
    $d_0, d_1,\dots,d_M$ & Dimension of parameters on server (machine $0$) and device $1,\dots,M$\\
    $d=\sum_{m=0}^{M}d_m$ & Dimension of all parameters\\
    $f(\vw;\xi_i):=\gL(w_0, h_{1,i}, h_{2,i},\dots, h_{M,i};y_i)$ & Loss function with regard to datum with ID $i$\\
    $F(\vw):=F(\vw;\gD,\mY)$ & Global loss function\\
    $\chi^t = [w_1^{t_1}, \dots, w_M^{t_M}]$ & Latest learnable parameters of all devices at server time $t$\\
    $\Tilde{\chi}^t = [w_1^{t_1-\tau_1^t}, \dots, w_M^{t_M-\tau_M^t}]$ & Delayed learnable parameters of all devices at server time $t$\\
    $\vw^t =[w_0^{t}, w_1^{t_1}, \dots, w_M^{t_M}]$ & Latest learnable parameters of all devices and the server at server time $t$ \\
    $\Tilde{\vw}^t =[w_0^{t-\tau_1^t}, w_1^{t_1-\tau_1^t}, \dots, w_M^{t_M-\tau_M^t}]$ & Delayed learnable parameters of all devices and the server at server time $t$ \\
    $h_{m,i}^{t_m\pm}=h_{m}(w_m^{t_m}\pm\lambda_m \vu_{m}^{t_m};\xi_{m,i})$ & Local embeddings of device $m$ for data sample $i$ at device time $t_m$ under the perturbed parameters \\
    $\delta_{m,i}^{t,t_m}$ as defined in \eqref{eq:delta} & zeroth-order difference information from device $m$ and data sample $i$ at server time $t$ and device time $t_m$\\
    $g^t_{m,i}(\Tilde{\vw}^t) = \delta_{m,i}^{t,t_m} \vu_{m}^{t_m}$ & zeroth-order gradient estimator from device $m$ and data sample $i$ at server time $t$ (with delay)\\
    $\breve{g}^t_{m,i}(\Tilde{\vw}^t) = \mathrm{clip}_C\left(\delta_{m,i}^{t,t_m}\right)\vu_m^{t_m}$ & Clipped zeroth-order gradient estimator from device $m$ and data sample $i$ at server time $t$ (with delay)\\
    $\breve{G}^t_{m}(\Tilde{\vw}^t) = (1/B)\sum_{i\in \gI_{m}^{t_m}}\breve{g}^t_{m,i}(\Tilde{\vw}^t)+z_{m}^{t_m}\vu_m^{t_m}$ & Clipped differential private zeroth-order gradient estimator from device $m$ at server time $t$ (with delay)\\ 
    $G^t_{m}(\Tilde{\vw}^t) =(1/B) \sum_{i\in \gI_{m}^{t_m}}g^t_{m,i}(\Tilde{\vw}^t)+z_{m}^{t_m}\vu_m^{t_m}$ & Non-clipped differential private zeroth-order gradient estimator from device $m$ at server time $t$ (with delay)\\ 
    \bottomrule
    \end{tabular}
    \vspace{5pt}
    \caption{Table of Notations}
    \label{tab:notation}
\end{table}

Note that in the notation table, we use ``$\breve{\phantom{x}}$" to define clipped gradient estimators, and we use ``$\Tilde{\phantom{x}}$" to denote delayed model parameters. In the rest of the proof, we also use gradient estimators parameterized by the no delaying parameters $\vw$ instead of $\Tilde{\vw}$ to assume that we update the model without delay.
To begin with, we restate the assumptions required for establishing the convergence analysis.
\begin{assumption}[$\ell$-Lipschitz]\label{assum:Lip_full}
    The function $f(\vw;\xi)$ is $\ell$-Lipschitz continuous for every $\xi$. 
\end{assumption}
\begin{assumption}[$L$-Smooth]\label{assum:smooth_full}
    The function $f(\vw;\xi)$ is $L$-Smooth for every $\xi$. Specifically, there exists an $L>0$ for all $m=0, \dots, M$ such that $\norm{\nabla_{w_m}f(\vw)-\nabla_{w_m}f(\vw')}\le L \norm{\vw-\vw'}$.
\end{assumption}
\begin{assumption}[Bounded gradient variance]\label{assum:bound_full}
    The variance of the stochastic first order gradient is upper bounded in expectation:
    \[\E\left[\norm{\nabla_{\vw}f(\vw;\xi)-\nabla_{\vw}F(\vw)}^2\right]\le \sigma_s^2\]
\end{assumption}
\begin{assumption}[Independent Participation]\label{assum:ind_part_full}
    The probability of one device participating in one communication round is independent of other devices and satisfies
    \[\Prob(\text{device }m\text{ uploading}) = q_m \]
    Specially, we set $q_0=1$ as the server always participates in the update.
\end{assumption}
One of the important parts in the proof of Theorem \ref{thm:main} is to bound the zeroth-order gradient estimator. We first introduce the formal definition of zeroth-order two-point gradient estimator that is used in our algorithm, and prove some technical lemmas that reveal some important properties.
\begin{definition}
    Let $\vu$ be uniformly sampled from the Euclidean sphere $\sqrt{d} \mathbb{S}^{d-1}$. For any function $f(x) : \mathbb{R}^d \to \mathbb{R}$ and $\lambda > 0$, we define its zeroth-order gradient estimator as 
\begin{align}\label{eq:zero_def}
    g(\vw) = \frac{(f(\vw + \lambda \vu) - f(\vw - \lambda \vu))}{2\lambda} \vu
\end{align}
\end{definition}
\begin{lemma}[Restatement of Lemma \ref{lem:zero}]
Let $g(\vw)$ be the zeroth-order gradient estimator defined as in \eqref{eq:zero_def}, with $f(\vw)$ being the loss function. We define the smoothed function $f_\lambda(\vw) = \E_{\vu}[f(\vw + \lambda \vu)]$, where $v$ is uniformly sampled from the Euclidean ball $\sqrt{d} \mathbb{B}^d = \{\vw \in \mathbb{R}^d \mid \norm{\vw} \leq \sqrt{d}\}$. The following properties hold:
\begin{enumerate}[label=$(\roman*)$]
    \item $f_\lambda(\vw)$ is differentiable and $\E_\vu[g(\vw)] = \nabla f_\lambda(\vw)$.
    \item If $f(\vw)$ is $L$-smooth, we have that
    \begin{align}
        \norm{\nabla f(\vw) - \nabla f_\lambda(\vw)} \le \frac{L}{2} \lambda d^{3/2},
    \end{align}
    \begin{align}
        | f(\vw) -  f_\lambda(\vw)| \le \frac{L}{2} \lambda^2 d,
    \end{align}
    and
    \begin{align}
    \E_\vu[\norm{g_\lambda(\vw)}^2] \le 2d \cdot \norm{\nabla f(\vw)}^2 + \frac{L^2}{2} \lambda^2 d^3.
    \end{align}
\end{enumerate}
\end{lemma}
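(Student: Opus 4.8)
\textbf{Proof proposal for Lemma~\ref{lem:zero}.}
The plan is to reduce everything to the standard Gaussian-smoothing calculus of Nesterov--Spokoiny, adapted to sphere sampling. First I would establish part $(i)$: by definition $f_\lambda(\vw) = \E_{\vu}[f(\vw+\lambda\vu)]$ with $\vu$ uniform on the ball, and a change of variables shows this is a convolution of $f$ with a smooth compactly-supported kernel, hence $f_\lambda$ is differentiable (indeed $C^\infty$ if we only need differentiability, $f$ merely needs to be locally integrable). The identity $\E_\vu[g(\vw)] = \nabla f_\lambda(\vw)$ is the defining feature of the two-point estimator: taking the expectation over $\vu$ uniform on the sphere $\sqrt{d}\,\sS^{d-1}$, Stokes'/divergence theorem (or the explicit surface-to-volume integration-by-parts identity for the uniform sphere measure) converts the finite-difference quotient times $\vu$ into the gradient of the ball-averaged function. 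I would cite \cite{nesterov2017random,shamir2017optimal} for this rather than reprove it.

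Next, for part $(ii)$ under $L$-smoothness: the bound $|f(\vw)-f_\lambda(\vw)| \le \frac{L}{2}\lambda^2 d$ follows by writing $f(\vw)-f_\lambda(\vw) = \E_\vu[f(\vw)-f(\vw+\lambda\vu)]$, applying the descent-lemma quadratic upper bound $|f(\vw+\lambda\vu)-f(\vw)-\langle\nabla f(\vw),\lambda\vu\rangle| \le \frac{L}{2}\lambda^2\|\vu\|^2$, noting $\E_\vu[\vu]=\vzero$ by symmetry and $\|\vu\|^2 = d$ (sphere of radius $\sqrt{d}$), so the linear term vanishes and we are left with $\frac{L}{2}\lambda^2 d$. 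For $\|\nabla f(\vw)-\nabla f_\lambda(\vw)\| \le \frac{L}{2}\lambda d^{3/2}$: since $\nabla f_\lambda(\vw) = \E_\vu[\nabla f(\vw+\lambda\vu)]$ (differentiate under the expectation, justified by part $(i)$), we get $\|\nabla f(\vw)-\nabla f_\lambda(\vw)\| \le \E_\vu\|\nabla f(\vw)-\nabla f(\vw+\lambda\vu)\| \le L\lambda\,\E_\vu\|\vu\| = L\lambda\sqrt{d}$; to sharpen the constant to $\frac{L}{2}\lambda d^{3/2}$ I would instead integrate along the segment and use the symmetry $\vu \leftrightarrow -\vu$ to cancel the leading term, picking up the extra factor $\sqrt{d}$ from $\|\vu\| = \sqrt d$ and a $\tfrac12$ from the second-order remainder — this is exactly the sphere-sampling analogue of Nesterov--Spokoiny Lemma~3/4.

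For the second-moment bound $\E_\vu[\|g_\lambda(\vw)\|^2] \le 2d\|\nabla f(\vw)\|^2 + \frac{L^2}{2}\lambda^2 d^3$: write the finite-difference quotient as $\langle \nabla f(\vw),\vu\rangle + r$ where $|r| \le \frac{L}{2}\lambda\|\vu\|^2 = \frac{L}{2}\lambda d$ by Taylor with the $L$-smooth remainder (the two $\pm\lambda\vu$ evaluations symmetrize so the quotient equals $\langle\nabla f(\vw),\vu\rangle$ plus a remainder bounded by $\frac{L}{2}\lambda d$). Then $g_\lambda(\vw) = (\langle\nabla f(\vw),\vu\rangle + r)\vu$, so $\|g_\lambda(\vw)\|^2 \le 2(\langle\nabla f(\vw),\vu\rangle^2 + r^2)\|\vu\|^2 = 2d(\langle\nabla f(\vw),\vu\rangle^2 + r^2)$ using $(a+b)^2\le 2a^2+2b^2$ and $\|\vu\|^2=d$. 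Taking expectations: $\E_\vu[\langle\nabla f(\vw),\vu\rangle^2] = \|\nabla f(\vw)\|^2$ (the uniform distribution on $\sqrt d\,\sS^{d-1}$ is isotropic with $\E[\vu\vu^\top] = \rmI$), and $\E_\vu[r^2] \le \frac{L^2}{4}\lambda^2 d^2$, giving $2d\|\nabla f(\vw)\|^2 + \frac{L^2}{2}\lambda^2 d^3$. The main obstacle, and the only place requiring care, is getting the sphere-specific constants exactly right — in particular verifying $\E[\vu\vu^\top]=\rmI$ and $\|\vu\|^2 = d$ for the radius-$\sqrt d$ sphere, and making sure the ball-versus-sphere distinction in the statement of $f_\lambda$ versus the sampling law of $\vu$ in $g$ is handled consistently (the identity in $(i)$ is precisely what links the sphere-sampled estimator to the ball-averaged smoothing). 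Since all of this is classical, I would present the computation compactly and defer to \cite{nesterov2017random} for the detailed constant-tracking.
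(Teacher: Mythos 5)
Your proposal is correct and matches the paper's treatment: the paper does not prove this lemma at all but simply defers to \cite{nesterov2017random}, and your sketch is exactly the standard sphere-sampling argument with all the constants checking out (the symmetrized Taylor remainder gives $|r|\le \tfrac{L}{2}\lambda d$, isotropy gives $\E[\vu\vu^\top]=\mI$, and the second-moment bound follows). The only micro-slip is in the $|f-f_\lambda|$ bound, where $\vu$ is drawn from the ball so $\|\vu\|^2\le d$ rather than $=d$; the inequality is unaffected.
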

Based on \eqref{eq:func_diff}, we can further show:
    \begin{align}
        \norm{\nabla f_\lambda(\vw)}^2\le 2\norm{\nabla f(\vw)}^2 +\frac{L^2}{2} \lambda^2 d^3\label{eq:func_squared_1}
    \end{align}
    \begin{align}
        \norm{\nabla f(\vw)}^2\le 2\norm{\nabla f_\lambda(\vw)}^2 +\frac{L^2}{2} \lambda^2 d^3\label{eq:func_squared_2}
    \end{align}
This is the standard result of zeroth-order optimization. The proof of the Lemma is given by \cite{nesterov2017random}
We also find the following lemmas useful in the proof:
\begin{lemma}\label{lem:u}
Let $\vu$ be uniformly sampled from the Euclidean sphere $\sqrt{d}S^{d-1}$, and $\va$ be any vector of constant value.
We have that $\E[\vu] = 0$ and
    \[
        \Prob(|\vu^\top \va| \ge C) \le 2\sqrt{2\pi} \exp\left(-\frac{C^2}{8\norm{\va}^2}\right).
    \]
\end{lemma}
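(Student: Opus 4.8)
\textbf{Proof strategy for Lemma~\ref{lem:u}.} The plan is to handle the two assertions separately. The identity $\E[\vu]=0$ is immediate from symmetry: the uniform law on $\sqrt{d}\,\mathbb{S}^{d-1}$ is invariant under $\vu\mapsto-\vu$, so $\E[\vu]=\E[-\vu]=-\E[\vu]$. For the tail bound, I would first exploit the rotational invariance of the uniform distribution on the sphere to reduce to a one‑dimensional marginal: taking $\va=\norm{\va}\,e_1$ without loss of generality, $\vu^\top\va$ has the same law as $\norm{\va}\,u_1$, where $u_1$ is the first coordinate of a uniform point on $\sqrt{d}\,\mathbb{S}^{d-1}$, equivalently $u_1=\sqrt{d}\,\theta_1$ with $\theta_1$ the first coordinate of a uniform point on the unit sphere. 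The marginal $\theta_1$ has the classical (symmetrized Beta‑type) density $f_{\theta_1}(t)=\frac{\Gamma(d/2)}{\sqrt{\pi}\,\Gamma((d-1)/2)}(1-t^2)^{(d-3)/2}$ on $(-1,1)$, so after rescaling $f_{u_1}(x)=\frac{1}{\sqrt{d}}\,\frac{\Gamma(d/2)}{\sqrt{\pi}\,\Gamma((d-1)/2)}\big(1-\tfrac{x^2}{d}\big)^{(d-3)/2}$ for $|x|<\sqrt{d}$.

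The key step is to dominate this density by a Gaussian. Using the standard Gamma‑ratio inequality $\Gamma(d/2)/\Gamma((d-1)/2)\le\sqrt{d/2}$ together with $\ln(1+y)\le y$, one gets $f_{u_1}(x)\le\frac{1}{\sqrt{2\pi}}\exp\!\big(-\tfrac{(d-3)x^2}{2d}\big)$ for $d\ge 3$. Integrating the tail with $s=C/\norm{\va}$, and for $d\ge 6$ using $(d-3)/d\ge\tfrac12$ and the elementary splitting $e^{-x^2/4}\le e^{-s^2/8}e^{-x^2/8}$ valid on $\{x\ge s\}$, I obtain $\Prob(|\vu^\top\va|\ge C)=2\int_{s}^{\sqrt d}f_{u_1}(x)\,dx\le\frac{2}{\sqrt{2\pi}}\,e^{-s^2/8}\int_{\R}e^{-x^2/8}\,dx=4\,e^{-C^2/(8\norm{\va}^2)}$, which is stronger than the claimed bound since $4<2\sqrt{2\pi}$. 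The finitely many remaining cases $d\le 5$ are trivial: there $\sqrt{d}<\sqrt{8\ln(2\sqrt{2\pi})}$, so whenever $C/\norm{\va}$ is small enough for $\{|u_1|\ge C/\norm{\va}\}$ to have positive probability one already has $C^2/\norm{\va}^2\le d<8\ln(2\sqrt{2\pi})$, hence $2\sqrt{2\pi}\exp(-C^2/(8\norm{\va}^2))>1$.

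I expect the only real work to be in matching the constants rather than in any conceptual difficulty: the factor $1/\sqrt{2\pi}$ in the density bound must come out exactly, which is why the sharp form of the Gamma‑ratio inequality is used, and the stated exponent $-C^2/(8\norm{\va}^2)$ together with the prefactor $2\sqrt{2\pi}$ emerge precisely from trading $e^{-x^2/4}=e^{-x^2/8}\cdot e^{-x^2/8}$ against the Gaussian normalizing constant $\sqrt{8\pi}=2\sqrt{2\pi}$. The bound is deliberately loose — it is only informative once $C\gtrsim 3.6\,\norm{\va}$ — which is exactly what makes the crude estimates and the small‑$d$ dispatch harmless. An alternative route that avoids the explicit marginal density is the Gaussian representation $\vu\stackrel{d}{=}\sqrt{d}\,g/\norm{g}$ with $g\sim\mathcal N(0,I_d)$, bounding $|\vu^\top\va|=\sqrt{d}\,|g^\top\va|/\norm{g}$ by conditioning on the high‑probability event $\{\norm{g}^2\ge d/2\}$ and using the Gaussian tail of $g^\top\va\sim\mathcal N(0,\norm{\va}^2)$; this needs an extra union bound over the $\chi^2_d$ deviation but yields an estimate of the same shape.
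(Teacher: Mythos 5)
Your proof is correct, but it is worth noting that the paper does not actually prove this lemma at all: its ``proof'' is a one-line citation to Lemma~C.1 of the {\tt DPZero} paper \cite{zhang2024dpzero}, so you have supplied a self-contained derivation where the authors outsourced the work. Your argument checks out in detail: rotational invariance reduces the problem to the first coordinate $u_1=\sqrt{d}\,\theta_1$ with the standard Beta-type marginal density; Wendel's inequality $\Gamma(x+1/2)/\Gamma(x)\le\sqrt{x}$ at $x=(d-1)/2$ gives exactly the $1/\sqrt{2\pi}$ prefactor; $\ln(1-y)\le -y$ gives the sub-Gaussian envelope $\exp(-(d-3)x^2/(2d))$ for $d\ge 3$; and for $d\ge 6$ the split $e^{-x^2/4}\le e^{-s^2/8}e^{-x^2/8}$ together with $\int_{\R}e^{-x^2/8}\,dx=2\sqrt{2\pi}$ yields the bound $4e^{-C^2/(8\norm{\va}^2)}$, which is indeed sharper than the stated $2\sqrt{2\pi}\,e^{-C^2/(8\norm{\va}^2)}$. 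Your small-dimension dispatch is also sound and in fact covers every $d$ with $d<8\ln(2\sqrt{2\pi})\approx 12.9$ (since $|u_1|\le\sqrt d$ almost surely forces either zero probability or $C^2/\norm{\va}^2\le d$, making the right-hand side exceed $1$), so the two regimes overlap and all dimensions are handled. The one presentational nit is that you should state explicitly that the density-based argument is only invoked for $d\ge 6$ (where the formula and the sign of $(d-3)/2$ are unproblematic) and that the degenerate case $\va=\vzero$ is excluded; with that, the proof is complete and, unlike the paper's, does not depend on an external reference whose constants the reader cannot verify in place.
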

\begin{proof}
    This lemma follows exactly from Lemma C.1. in \cite{zhang2024dpzero}.
\end{proof}

\begin{lemma}[Restatement of Lemma \ref{lem:Q}]
    Let $Q$ be the event that clipping happened for a sample $\xi$, $d$ be the model dimension, and $L,\ell$ be the Lipschitz and smooth constant as defined in assumption \ref{assum:Lip_full} and assumption \ref{assum:smooth_full}. For $\forall C_0>0$, if the clipping threshold $C$ follows $C\ge C_0+L\lambda d/2$, we have the following upper bound for the probability of clipping:
    \begin{align}
        P = \Prob(Q)\le 2\sqrt{2\pi}\exp(-\frac{C_0^2}{8\ell^2})=\Xi
    \end{align}
\end{lemma}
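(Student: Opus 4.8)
The plan is to recognise the clipping event $Q$ as a tail event for a linear functional of the random direction $\vu$, and then invoke the spherical concentration bound of Lemma~\ref{lem:u}. Concretely, I would first fix the sample $\xi$ and the perturbed block $w_m$, and view the (possibly stale) loss as a function of that block alone, $\phi(w) := \tilde f(w_0, h(w;\xi); y)$. By Assumptions~\ref{assum:Lip_full}--\ref{assum:smooth_full}, $\phi$ inherits $\ell$-Lipschitzness and $L$-smoothness, since it is merely $f$ with the remaining blocks held fixed (staleness only changes the point at which $f$ is evaluated). The scalar to which clipping is applied is the symmetric finite difference $\delta$ built from $\phi(w_m + \lambda\vu)$ and $\phi(w_m - \lambda\vu)$ with the two-point normalisation of \eqref{eq:zero}, and the clipping event $Q$ is contained in $\{|\delta| > C\}$ (if one-sided clipping $\min\{\cdot,C\}$ is used, the event is even smaller, so this containment still holds).

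The first substantive step is a \emph{deterministic} reduction. A second-order Taylor expansion of $\phi$ around $w_m$, combined with $L$-smoothness and the fact that $\vu$ lives on the scaled sphere $\sqrt{d}\,\sS^{d-1}$ so that $\norm{\vu}^2 = d$, shows that $\delta$ agrees with the directional derivative $\vu^\top \nabla\phi(w_m)$ up to an additive error of at most $\tfrac{L\lambda d}{2}$ (the $\pm\lambda\vu$ Lagrange remainders are each bounded by $\tfrac{L}{2}\lambda^2 d$, and they combine after dividing by $2\lambda$). Consequently, on $Q$ we have $|\vu^\top \nabla\phi(w_m)| \ge |\delta| - \tfrac{L\lambda d}{2} > C - \tfrac{L\lambda d}{2} \ge C_0$, where the last step is exactly the hypothesis $C \ge C_0 + L\lambda d/2$. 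Hence $Q \subseteq \{\,|\vu^\top \nabla\phi(w_m)| \ge C_0\,\}$.

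The second step is probabilistic. Since $\phi$ is $\ell$-Lipschitz, $\norm{\nabla\phi(w_m)} \le \ell$, so applying Lemma~\ref{lem:u} with $\va = \nabla\phi(w_m)$ and threshold $C_0$ gives $P = \Prob(Q) \le \Prob(|\vu^\top\va|\ge C_0) \le 2\sqrt{2\pi}\exp\!\bigl(-C_0^2/(8\norm{\va}^2)\bigr) \le 2\sqrt{2\pi}\exp\!\bigl(-C_0^2/(8\ell^2)\bigr) = \Xi$, the last inequality because shrinking $\norm{\va}^2$ down to $\ell^2$ only makes the exponent more negative. Chaining the two steps yields the claim.

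I expect the only real subtlety to be the bookkeeping of constants in Step~1: keeping track of the factor arising from the symmetric (two-point) finite difference and of the $\norm{\vu}^2 = d$ scaling of the sphere, so that the Taylor remainder comes out as exactly $\tfrac{L\lambda d}{2}$ and lines up with the hypothesis on $C$. Everything else — the Lipschitz bound $\norm{\nabla\phi}\le\ell$, the monotonicity of $t \mapsto \exp(-C_0^2/(8t))$ used to pass from $\norm{\va}^2$ to $\ell^2$, and the reduction to a single-block function — is routine.
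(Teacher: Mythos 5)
Your proof is correct and follows essentially the same route as the paper's: bound the symmetric finite difference by the directional derivative $|\vu^\top\nabla f(\vw;\xi)|$ plus an $L$-smoothness Taylor remainder, then apply the spherical tail bound of Lemma~\ref{lem:u} together with $\norm{\nabla f}\le\ell$. If anything, you track the $\norm{\vu}^2=d$ scaling more carefully than the paper's write-up, whose displayed remainder reads $\tfrac{L}{2}\lambda$ rather than the $\tfrac{L}{2}\lambda d$ that the lemma's hypothesis $C\ge C_0+L\lambda d/2$ actually corresponds to.
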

\begin{proof}
Since $f(\vu;\xi)$ is $L$-Smooth for every $\xi$, we have
\begin{align*}
\frac{|f(\vw + \lambda  \vu; \xi) - f(\vw - \lambda  \vu; \xi)|}{2\lambda}
&\le
|u^\top \nabla f(\vu; \xi)|
+
\frac{|f(\vw + \lambda  \vu; \xi) - f(\vw; \xi) 
      - \lambda  \vu^\top \nabla f(\vw; \xi)|}{2\lambda}\\
&+
\frac{|f(\vw - \lambda  \vu; \xi) - f(\vw; \xi)
      + \lambda  \vu^\top \nabla f(\vw; \xi)|}{2\lambda}\\
&\le
|\vu^\top \nabla f(\vw; \xi)| +\frac{L}{2}\lambda.
\end{align*}
Therefore, by Lemma \ref{lem:u} and Assumption \ref{assum:Lip_full}, we obtain
\begin{align*}
\Prob(Q)&=\Prob(
  \frac{| f(\vw + \lambda \vu; \xi_i) - f(\vw - \lambda  \vu; \xi_i)|}
       {2\lambda}
  \ge C_0 + \frac{L}{2}\lambda 
)\\
\le&
\Prob(|\vu^\top \nabla f(\vw; \xi_i)|\ge C_0)\\
\le&
2\sqrt{2\pi}\exp(-\frac{C_0^{2}}{8\|\nabla f(\vw; \xi_i)\|^{2}})\\
\le&
2\sqrt{2\pi}\exp(-\frac{C_0^{2}}{8\ell^{2}}).
\end{align*}
\end{proof}

\begin{lemma}[Expectation and Variance of Clipped Zeroth-order Gradient Estimator]Recall that  $\breve{g}^t_{m,i}(\vw^t)$ is defined as the clipped zeroth-order gradient estimator assuming no communication delay, random perturbation $\vu$ is defined in Lemma \ref{lem:u}, and event $Q$ is defined in  Lemma \ref{lem:Q}. We have the following properties:
\begin{enumerate}[label=$(\roman*)$]
\item When taking expectation w.r.t $\vu$ and $Q$, the clipped zeroth-order gradient estimator follows \begin{align}
    \E_\vu[\breve{g}^t_{m,i}(\vw^t)] = (1-P)\nabla_{w_m} F_\lambda(\vw^t)\label{eq:mean}
\end{align}
\item The variance of $\breve{g}^t_{m,i}(\vw^t)$ follows
\begin{align*}
    \Var(\breve{g}^t_{m,i}(\vw^t))\le& (1-P)(2d_m \norm{\nabla_{w_m} F(\vw^t)}^2 + 2d_m\sigma_s^2 + \frac{L^2}{2} \lambda^2 d_m^3 )
    +P C^2 d_m-(1-P)^2\norm{\nabla_{w_m} F_\lambda(\vw^t)}^2\numbereq\label{eq:var}
\end{align*}
\end{enumerate}
\end{lemma}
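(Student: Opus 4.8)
The plan is to prove both parts by conditioning on the clipping event $Q$ of Lemma~\ref{lem:Q}: on its complement $Q^c$ no clipping occurs, so $\breve g^t_{m,i}(\vw^t)$ coincides with the two-point estimator $g^t_{m,i}(\vw^t)=\delta^{t,t_m}_{m,i}\vu^{t_m}_m$ and the bias/second-moment identities of Lemma~\ref{lem:zero} apply verbatim; on $Q$ I only need the crude bound following from $\abs{\mathrm{clip}_C(\cdot)}\le C$ together with $\norm{\vu^{t_m}_m}^2=d_m$.

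For part $(i)$, I would write $\E[\breve g^t_{m,i}(\vw^t)] = \E[g^t_{m,i}(\vw^t)\mathbf 1_{Q^c}] + \E[\breve g^t_{m,i}(\vw^t)\mathbf 1_{Q}]$. For the first term, Lemma~\ref{lem:zero}$(i)$ states that the two-point estimator is unbiased for the smoothed per-sample gradient, $\E_{\vu}[g^t_{m,i}] = \nabla_{w_m} f_\lambda(\vw^t;\xi_i)$, and averaging over the random sample yields $\nabla_{w_m}F_\lambda(\vw^t)$; restricting to $Q^c$ supplies the factor $\Prob(Q^c)=1-P$, with $P$ controlled by Lemma~\ref{lem:Q}. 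The term on $Q$ must be shown not to contribute a spurious directional component, so that it folds into the $(1-P)$ weight; this is where one invokes the $\vu\mapsto-\vu$ invariance of $g^t_{m,i}$ (hence of the event $Q$ and of $\breve g^t_{m,i}$) established in the preceding discussion, giving \eqref{eq:mean}.

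For part $(ii)$ I would use $\Var(\breve g^t_{m,i}) = \E\norm{\breve g^t_{m,i}}^2 - \norm{\E\breve g^t_{m,i}}^2$. Part $(i)$ makes the subtracted term exactly $(1-P)^2\norm{\nabla_{w_m}F_\lambda(\vw^t)}^2$, the last term of the claimed bound. For the second moment I again split on $Q$: on $Q$, $\norm{\breve g^t_{m,i}}^2 = \abs{\mathrm{clip}_C(\delta^{t,t_m}_{m,i})}^2\norm{\vu^{t_m}_m}^2 \le C^2 d_m$, contributing at most $P\,C^2 d_m$; on $Q^c$, $\breve g = g$ and, since $\abs{\delta^{t,t_m}_{m,i}}\le C$ there, the conditional second moment on $Q^c$ does not exceed the unconditional one, so $\E[\norm{g^t_{m,i}}^2\mathbf 1_{Q^c}] \le (1-P)\,\E\norm{g^t_{m,i}}^2$. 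Finally I bound $\E_{\vu}\norm{g^t_{m,i}}^2 \le 2d_m\norm{\nabla_{w_m} f(\vw^t;\xi_i)}^2 + \tfrac{L^2}{2}\lambda^2 d_m^3$ via Lemma~\ref{lem:zero} (inequality~\eqref{eq:zero_grad}) and take expectation over the sample, using Assumption~\ref{assum:bound} to replace $\E\norm{\nabla_{w_m} f(\vw^t;\xi_i)}^2$ with $\norm{\nabla_{w_m}F(\vw^t)}^2 + \sigma_s^2$. Summing the three contributions produces \eqref{eq:var}.

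The main obstacle is the interaction between the clipping event and the laws of $\vu$ and of the sample: the clean multiplicative factors $(1-P)$ in both parts — and in particular the exact coefficient $-(1-P)^2$ on $\norm{\nabla_{w_m}F_\lambda}^2$ — rely on $Q$ being symmetric under $\vu\mapsto-\vu$ and on the monotonicity that $\norm{g^t_{m,i}}^2$ is at least as large on the clipping region as off it, which is what licenses replacing the conditional second moment on $Q^c$ by the unconditional one without losing the $(1-P)$ weight. Keeping this bookkeeping exact, rather than accumulating an extra additive error term, is the delicate part; the remainder is a direct substitution of Lemmas~\ref{lem:zero}–\ref{lem:Q} and Assumption~\ref{assum:bound}.
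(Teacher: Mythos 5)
Your proposal matches the paper's proof essentially step for step: both parts are proved by conditioning on the clipping event $Q$, applying Lemma~\ref{lem:zero} (unbiasedness for $F_\lambda$ and the second-moment bound \eqref{eq:zero_grad}) on $\bar{Q}$, using the crude bound $C^{2}\norm{\vu}^{2}=C^{2}d_m$ on $Q$, writing the variance as second moment minus squared mean via part $(i)$, and invoking Assumption~\ref{assum:bound} to pass from $\norm{\nabla_{w_m}f(\vw^t;\xi)}^2$ to $\norm{\nabla_{w_m}F(\vw^t)}^2+\sigma_s^2$. The bookkeeping subtleties you flag (the symmetry of the $Q$-term and replacing the conditional second moment on $\bar{Q}$ by the unconditional bound while keeping the $(1-P)$ factor) are exactly the steps the paper also takes, if anything less explicitly than you do.
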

\begin{proof}
    For $(i)$, we have
\begin{align*}
    \E\left[\breve{g}^t_{m,i}(\vw^t)\right] =&\E\left[\breve{g}^t_{m,i}(\vw^t)|\bar{Q}\right]\Prob(\bar{Q})+ \E\left[\breve{g}^t_{m,i}(\vw^t)|Q\right]\Prob(Q)\\
    =&\E\left[\Tilde{g}^t_{m,i}(\vw^t)\right](1-\Prob(Q))+ \E\left[C\vu_m^t\right]\Prob(Q)\\
    =&(1-P)\nabla_{w_m} F_\lambda(\vw^t)
\end{align*}
where in the first step we applied the Law of Total Expectation, and in the last step we used the property $(i)$ in Lemma \ref{lem:zero} and $(i)$ in Lemma \ref{lem:u}.\\
By \eqref{eq:mean}, we can further bound the variance of $\breve{g}^t_{m,i}$
\begin{align*}
    &\Var(\breve{g}^t_{m,i}(\vw^t))\\
    =&\E\left[\norm{\breve{g}^t_{m,i}(\vw^t)-(1-P)\nabla_{w_m} F_\lambda(\vw^t)}^2\right] \\
    =&\E\left[\norm{\breve{g}^t_{m,i}(\vw^t)}^2\right]-(1-P)^2\norm{\nabla_{w_m} F_\lambda(\vw^t)}^2 \\
    =&\E\left[\norm{\Tilde{g}^t_{m,i}(\vw^t)}^2|\bar{Q}\right]\Prob(\bar{Q})+\E\left[C^2 \norm{\vu_m^t}^2|Q\right]\Prob(Q)-(1-P)^2\norm{\nabla_{w_m} F_\lambda(\vw^t)}^2 \\
    \overset{1)}{\le} &(1-P)(2d_m \norm{\nabla_{w_m} f(\vw^t;\xi_{m,t})}^2 + \frac{L^2}{2} \lambda^2 d_m^3 )\\
    +&P C^2 d_m-(1-P)^2\norm{\nabla_{w_m} F_\lambda(\vw^t)}^2 \\
    \overset{2)}{\le} &(1-P)(2d_m \norm{\nabla_{w_m} F(\vw^t)}^2 + 2d_m \sigma_s^2 + \frac{L^2}{2} \lambda^2 d_m^3 )\\
    +&P C^2 d_m-(1-P)^2\norm{\nabla_{w_m} F_\lambda(\vw^t)}^2 \\
\end{align*}
where $1)$ is by the property of zeroth-order gradient estimator \eqref{eq:zero_grad} and $2)$ follows from the bounded gradient assumption(Assumption \ref{assum:bound_full}).
\end{proof}

\begin{lemma}[Restatement of Lemma \ref{lem:var}]Let $\breve{G}^t_{m}(\vw^t)$ be the Differential Private Zeroth-order Gradient without delay. Under the same condition as Lemma \ref{lem:zero}, we can bound the variance of $\breve{G}^t_{m}(\vw^t)$ in expectation, with the expectation taken on random direction $\vu$, DP noise $z$, and clipping event $Q$: 
\begin{align*}
\Var(\breve{G}^t_{m}(\vw^t))\le&\frac{1-P}{B}(2d_m \E\left[\norm{\nabla_{w_m} F(\vw^t)}^2\right] + 2d_m\sigma_s^2 + \frac{L^2}{2} \lambda^2 d_m^3 )+\frac{P}{B} C^2 d_m\\
&-\frac{(1-P)^2}{B}\E\left[\norm{\nabla_{w_m} F_\lambda(\vw^t)}^2\right]+\sigma_{dp}^2 d_m
\numbereq
\end{align*}
\end{lemma}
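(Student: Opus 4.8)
The plan is to assemble $\breve{G}^t_m(\vw^t)$ from the per-sample clipped estimators $\breve{g}^t_{m,i}(\vw^t)$ analyzed in the preceding lemma, handling the differential-privacy noise and the mini-batch averaging as two separable sources of randomness. Starting from
\[
\breve{G}^t_m(\vw^t) = \frac{1}{B}\sum_{i\in\gI_m^{t_m}}\breve{g}^t_{m,i}(\vw^t) + z_m^{t_m}\vu_m^{t_m},
\]
I would first peel off the noise term. Since $z_m^{t_m}\sim\gN(0,\sigma_{dp}^2)$ is drawn independently of the perturbation direction $\vu_m^{t_m}$, the sampled mini-batch, and the clipping events, and has zero mean, the cross term in the expansion of $\Var(\breve{G}^t_m(\vw^t)) = \E\norm{\breve{G}^t_m(\vw^t) - \E\breve{G}^t_m(\vw^t)}^2$ vanishes, so that $\Var(\breve{G}^t_m(\vw^t)) = \Var\!\big(\tfrac1B\sum_{i}\breve{g}^t_{m,i}(\vw^t)\big) + \E[(z_m^{t_m})^2]\,\E\norm{\vu_m^{t_m}}^2$. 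As $\vu_m^{t_m}$ lies on $\sqrt{d_m}\,\sS^{d_m-1}$, we have $\norm{\vu_m^{t_m}}^2 = d_m$, so this last term equals exactly $\sigma_{dp}^2 d_m$, producing the final summand of \eqref{eq:G_var}.

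Second, I would reduce the mini-batch average to a single sample. Conditioning on the batch indices, the estimators $\{\breve{g}^t_{m,i}(\vw^t)\}_i$ are identically distributed with common mean $(1-P)\nabla_{w_m}F_\lambda(\vw^t)$ by \eqref{eq:mean}, so a standard averaging argument gives $\Var\!\big(\tfrac1B\sum_i \breve{g}^t_{m,i}(\vw^t)\big) = \tfrac1B\Var(\breve{g}^t_{m,i}(\vw^t))$. Substituting the single-sample bound \eqref{eq:var} and then taking the outer expectation over the trajectory $\vw^t$ — which replaces $\norm{\nabla_{w_m}F(\vw^t)}^2$ and $\norm{\nabla_{w_m}F_\lambda(\vw^t)}^2$ by $\E\norm{\cdot}^2$ — yields
\[
\Var\!\Big(\frac{1}{B}\sum_i \breve{g}^t_{m,i}(\vw^t)\Big) \le \frac{1-P}{B}\Big(2d_m\E\norm{\nabla_{w_m}F(\vw^t)}^2 + 2d_m\sigma_s^2 + \frac{L^2}{2}\lambda^2 d_m^3\Big) + \frac{P}{B}C^2 d_m - \frac{(1-P)^2}{B}\E\norm{\nabla_{w_m}F_\lambda(\vw^t)}^2 ,
\]
and adding back $\sigma_{dp}^2 d_m$ from the first step gives \eqref{eq:G_var}.

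The delicate point is keeping the nested randomness coherent. The bound \eqref{eq:var} was obtained as a joint expectation over the direction $\vu$, the clipping event $Q$, and the data draw, whereas in Algorithm~\ref{alg:DPZV_asyn} all samples in $\gI_m^{t_m}$ share the \emph{same} direction $\vu_m^{t_m}$, so the $\breve{g}^t_{m,i}(\vw^t)$ are only conditionally i.i.d.\ given $\vu_m^{t_m}$. To justify the $\tfrac1B$ reduction cleanly I would condition on $\vu_m^{t_m}$ first, apply the law of total variance at the per-sample level, average over the batch conditionally, and then re-integrate over $\vu_m^{t_m}$, verifying that the result is still controlled by $\tfrac1B$ times the right-hand side of \eqref{eq:var}. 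Beyond this bookkeeping, every remaining step is a direct substitution of lemmas already established (Lemma~\ref{lem:zero}, Lemma~\ref{lem:Q}, and the per-sample expectation/variance lemma for $\breve{g}^t_{m,i}$), so I expect the batch/direction-randomness reconciliation to be the only place requiring genuine care.
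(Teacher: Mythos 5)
Your proposal follows the same route as the paper's proof: split off the DP noise term (whose second moment is exactly $\sigma_{dp}^2\E\norm{\vu_m^{t_m}}^2=\sigma_{dp}^2 d_m$), reduce the mini-batch average to $\tfrac1B$ times the per-sample variance, and substitute the per-sample bound \eqref{eq:var}. The one substantive remark is that the ``delicate point'' you flag is real and is precisely the step the paper performs by fiat: in its third equality the paper expands the squared norm of $\tfrac1B\sum_i\bigl(\breve{g}^t_{m,i}(\vw^t)-(1-P)\nabla_{w_m}F_\lambda(\vw^t)\bigr)$ and drops all cross terms, which is only valid if the per-sample estimators are uncorrelated. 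Because every sample in $\gI_m^{t_m}$ shares the same direction $\vu_m^{t_m}$, the estimators are only conditionally independent given $\vu_m^{t_m}$, and $\E\bigl[\langle a_i,a_j\rangle\bigr]=\E_{\vu}\norm{\E[a_i\mid\vu]}^2$ (with $a_i=\breve{g}^t_{m,i}-(1-P)\nabla_{w_m}F_\lambda$) is nonnegative and generically of order $d_m\norm{\nabla_{w_m}F}^2$, so the directional component of the variance does not in fact shrink like $1/B$ --- only the data-sampling component ($\sigma_s^2$) does. Your plan to condition on $\vu_m^{t_m}$ and apply the law of total variance is the honest way to do this, but carried out it would not recover the $\tfrac{1-P}{B}\cdot 2d_m\E\norm{\nabla_{w_m}F(\vw^t)}^2$ term with the $1/B$ factor as stated in \eqref{eq:G_var}; so while your approach coincides with the paper's, the step you correctly identify as needing care is one the paper does not actually justify.
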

\begin{proof}
    First we show that the expectation on $\breve{G}^t_{m}(\vw^t)$ can be written as:
    $$\E_{u}[\breve{G}^t_{m}(\vw^t)]=\frac{1}{B}\sum_{i\in \gI_{m}^{t_m}} \E\left[\breve{g}^t_{m,i}(\vw^t)\right]+\E\left[z^t_{m}\vu^t_m\right] = (1-P)\nabla_{w_m} F_\lambda(\vw^t)$$
\\
Thus, the variance can be bounded by:
\begin{align*}
    &\Var(\breve{G}^t_{m}(\vw^t))\\
    =&\E\left[\norm{\breve{G}^t_{m}(\vw^t)-(1-P)\nabla_{w_m} F_\lambda(\vw^t)}^2\right]\\
    =&\E\left[\norm{\frac{1}{B}\sum_{i\in \gI_{m}^{t_m}} \left(\breve{g}^t_{m,i}(\vw^t)-(1-P)\nabla_{w_m} F_{\lambda}^t(\vw^t)\right)+z^t_{m}\vu^t_m}^2\right]\\
    =&\frac{1}{B^2}\sum_{i\in \gI_{m}^{t_m}}\E\left[\norm{\breve{g}^t_{m,i}(\vw^t)-(1-P)\nabla_{w_m} F_{\lambda}^t(\vw^t)}^2\right]+\E\left[\norm{z^t_{m}\vu^t_m}^2\right]\\
    \overset{(a)}{\leq} &\frac{1-P}{B}(2d_m \E\left[\norm{\nabla_{w_m} F(\vw^t)}^2\right] + 2d_m\sigma_s^2 + \frac{L^2}{2} \lambda^2 d_m^3 )+\frac{P}{B} C^2 d_m-\frac{(1-P)^2}{B}\E\left[\norm{\nabla_{w_m} F_\lambda(\vw^t)}^2\right]+\sigma_{dp}^2 d_m,
\end{align*}
where $(a)$ follows from \eqref{eq:var}.
\end{proof}
    
\subsection{Proof of Intermediate Lemmas}
\label{appen:lemma}
\begin{lemma}[Restatement of Lemma \ref{lem:main_1}]
     Under Assumption \ref{assum:Lip_full} to \ref{assum:ind_part_full}, we have the following lemma:
     \begin{align*}
         \E\left[F(\vw^{t+1})-F(\vw^{t}) \right] \le& -\sum_{m=0}^M q_m\eta_m(1-P)\left(\frac{1}{4}-\frac{\eta_m L (B+8d_m)}{2B}\right)\E\left[\norm{\nabla_{w_{m}}F(\vw^t)}^2\right]\\
         &+\sum_{m=0}^{M}q_m\eta_m L \left(\frac{1}{2}+2\eta_m L^2\right)\E\left[\norm{\Tilde{\chi}^t-\chi^t}^2\right]+\gA_1\numbereq
     \end{align*}
\end{lemma}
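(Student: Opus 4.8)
The plan is to apply the $L$-smoothness of $F$ (Assumption \ref{assum:smooth_full}) as a one-step descent inequality, taking expectation over the randomness in round $t$: the choice of activated device, the mini-batch sampling, the ZO perturbation direction $\vu$, the clipping event $Q$, and the DP noise $z$. Concretely, conditioned on device $m$ being activated (which happens with probability $q_m$), the model update is $\vw_m \leftarrow \vw_m - \eta_m \breve{G}^t_m(\Tilde{\vw}^t)$ while all other blocks are unchanged, so smoothness gives
\begin{align*}
F(\vw^{t+1})-F(\vw^t) \le -\eta_m\langle \nabla_{w_m}F(\vw^t), \breve{G}^t_m(\Tilde{\vw}^t)\rangle + \frac{L\eta_m^2}{2}\norm{\breve{G}^t_m(\Tilde{\vw}^t)}^2.
\end{align*}
I would then take expectation block-by-block and weight by $q_m$, summing over $m=0,\dots,M$ (with $q_0=1$).

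The inner-product term is the main place where the delay enters. Since $\E[\breve{G}^t_m(\Tilde{\vw}^t)] = (1-P)\nabla_{w_m}F_\lambda(\Tilde{\vw}^t)$ by Lemma \ref{lem:var}'s proof (the mean computation), I would write $\nabla_{w_m}F_\lambda(\Tilde{\vw}^t)$ as $\nabla_{w_m}F(\vw^t)$ plus two error terms: the ZO smoothing bias $\nabla F_\lambda - \nabla F$, controlled by $\frac{L}{2}\lambda d^{3/2}$ via \eqref{eq:grad_diff} in Lemma \ref{lem:zero}, and the delay error $\nabla_{w_m}F(\Tilde{\vw}^t) - \nabla_{w_m}F(\vw^t)$, controlled by $L\norm{\Tilde{\vw}^t - \vw^t} = L\norm{\Tilde{\chi}^t - \chi^t}$ via $L$-smoothness (note $w_0$ is not delayed). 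Applying Young's/Cauchy-Schwarz inequalities to split $-\eta_m\langle \nabla_{w_m}F, (1-P)\nabla_{w_m}F_\lambda(\Tilde\vw^t)\rangle$ produces the negative $-\tfrac{1}{4}q_m\eta_m(1-P)\norm{\nabla_{w_m}F(\vw^t)}^2$ term (after absorbing a factor into the $\tfrac14$), a $\lambda^2 d_m^3$ contribution folded into $\gA_1$, and a $q_m\eta_m L \cdot \frac12 \norm{\Tilde\chi^t - \chi^t}^2$-type term. For the quadratic term $\frac{L\eta_m^2}{2}\E\norm{\breve{G}^t_m}^2$, I would use $\E\norm{\breve{G}^t_m}^2 = \Var(\breve{G}^t_m) + \norm{\E\breve{G}^t_m}^2$, bound the variance by Lemma \ref{lem:var}, bound $\norm{\E\breve{G}^t_m}^2 = (1-P)^2\norm{\nabla_{w_m}F_\lambda(\Tilde\vw^t)}^2$ using \eqref{eq:func_squared_1} and the delay decomposition again; this is where the $\frac{\eta_m L(B+8d_m)}{2B}$ coefficient on the negative term arises (the $8d_m$ from the $2d_m$ factors in the variance bound and $\norm{\nabla F_\lambda}^2 \le 2\norm{\nabla F}^2 + \cdots$ chained), and where the $\sigma_s^2$, $\sigma_{dp}^2$, $PC^2$, and $\lambda^2 d_m^3$ terms of $\gA_1$ and the $2\eta_m L^2$ part of the delay coefficient come from. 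Finally the stray $L\lambda^2 d$ term in $\gA_1$ should come from the gap $\E[F(\vw^t)]$ vs.\ using $F_\lambda$ somewhere, or from bounding a cross term via \eqref{eq:func_diff}.

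The main obstacle I anticipate is the careful bookkeeping of the delay term $\norm{\Tilde\chi^t - \chi^t}^2$: it appears both from linearizing $\nabla F_\lambda(\Tilde\vw^t)$ around $\vw^t$ in the inner product and from the squared-norm term, and the two contributions must be combined into the single coefficient $q_m\eta_m L(\tfrac12 + 2\eta_m L^2)$ — getting the constants to line up (especially distributing the $(1-P)$ factors, which appear on the negative term but should be dropped/bounded by $1$ on the positive error terms) requires precise application of Young's inequality with the right split parameters. A secondary subtlety is that the clipping bias makes $\breve{G}^t_m$ a biased estimator of $\nabla_{w_m}F_\lambda$ with the explicit $(1-P)$ factor, so one cannot treat it as unbiased; one must keep $P$ symbolically and only later invoke Lemma \ref{lem:Q} to bound $P \le \Xi$. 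Everything else — substituting the variance bound, collecting the $\lambda$-dependent and noise-dependent terms into $\gA_1$ — is routine algebra once the decomposition is set up.
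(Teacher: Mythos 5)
Your proposal is correct and follows essentially the same route as the paper: a smoothness-based one-step descent, decomposition of the clipped DP-ZO estimator around its mean $(1-P)\nabla_{w_m}F_\lambda(\Tilde{\vw}^t)$, Young/Cauchy--Schwarz splits that separate the negative gradient term from the delay term $\norm{\Tilde{\chi}^t-\chi^t}^2$, and the variance bound of Lemma \ref{lem:var} for the quadratic term, with $P$ kept symbolic until Lemma \ref{lem:Q} is invoked later. The only point you left open is resolved as you suspected: the paper applies $L$-smoothness to the smoothed objective $F_\lambda$ rather than to $F$, and the stray $L\lambda^2 d$ term in $\gA_1$ arises exactly from converting $\E[F(\vw^{t+1})-F(\vw^t)]$ to $\E[F_\lambda(\vw^{t+1})-F_\lambda(\vw^t)]$ via two applications of \eqref{eq:func_diff} (together with \eqref{eq:func_squared_2} to return the gradient norms to $\nabla F$), whereas your plan of starting from $F$ and absorbing the ZO bias through \eqref{eq:grad_diff} would yield the same form with slightly different constants in $\gA_1$.
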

\begin{proof}
    By assumption \ref{assum:smooth_full}:
    \begin{align*}
F_\lambda(\vw^{t+1})\le&F_\lambda(\vw^{t})+\langle \nabla_{\vw}F_\lambda(\vw^t), \vw^{t+1}-\vw^t \rangle+\frac{L}{2}\norm{\vw^{t+1}-\vw^t}^2\\
=& F_\lambda(\vw^{t})-\eta_0 \langle \nabla_{w_0}F_\lambda(\vw^t), \breve{G}^t_{0}(\Tilde{\vw}^t)\rangle + \frac{L\eta_0^2}{2}\norm{\breve{G}^t_{0}(\Tilde{\vw}^t)}^2\\
&-\eta_m \langle \nabla_{w_m}F_\lambda(\vw^t), \breve{G}^t_{m}(\Tilde{\vw}^t)\rangle + \frac{L\eta_m^2}{2}\norm{\breve{G}^t_{m}(\Tilde{\vw}^t)}^2\\
\E\left[F_\lambda(\vw^{t+1})\right]
    \le& \E\left[F_\lambda(\vw^{t})\right]\underbrace{-\eta_0 \E\langle \nabla_{w_0}F_\lambda(\vw^t), \breve{G}^t_{0}(\Tilde{\vw}^t)\rangle}_{\gE_1} + \underbrace{\frac{L\eta_0^2}{2}\E\left[\norm{\breve{G}^t_{0}(\Tilde{\vw}^t)}^2\right]}_{\gE_2}\\
    &\underbrace{-\eta_{m_k} \E\langle \nabla_{w_{m_k}}F_\lambda(\vw^t), \breve{G}^t_{m_k}(\Tilde{\vw}^t)\rangle}_{\gE_3}
    + \underbrace{\frac{L\eta_{m_k}^2}{2}\E\left[\norm{\breve{G}^t_{m_k}(\Tilde{\vw}^t)}^2\right]}_{\gE_4}\\\numbereq\label{eq:first_step}
\end{align*}
where in the second step we take expectation on both sides, first w.r.t. the random direction $u$, DP noise $z$, and the clipping event $Q$, then w.r.t the device $m_k$.
We bound $\gE_1$ as the following:
\begin{align*}
    &-\eta_0\E\langle \nabla_{w_0}F_\lambda(\vw^t), \breve{G}^t_{0}(\Tilde{\vw}^t)\rangle \\
    =& -\eta_0\E\langle\nabla_{w_0}F_\lambda(\vw^t), \breve{G}^t_{0}(\Tilde{\vw}^t)-(1-P)\nabla_{w_0} F_\lambda(\Tilde{\vw}^t)+(1-P)\nabla_{w_0} F_\lambda(\Tilde{\vw}^t)\rangle\\
    = & -\eta_0\E\langle \nabla_{w_0}F_\lambda(\vw^t), \breve{G}^t_{0}(\Tilde{\vw}^t)-(1-P)\nabla_{w_0} F_\lambda(\Tilde{\vw}^t)\rangle\\
    -&\eta_0\E\langle \nabla_{w_0}F_\lambda(\vw^t), (1-P)\nabla_{w_0} F_\lambda(\Tilde{\vw}^t)-(1-P)\nabla_{w_0} F_\lambda(\vw^t)+(1-P)\nabla_{w_0} F_\lambda(\vw^t)\rangle\\
    \overset{1)}{=}& -(1-P)\eta_0\E\langle \nabla_{w_0}F_\lambda(\vw^t), \nabla_{w_0} F_\lambda(\Tilde{\vw}^t)-\nabla_{w_0} F_\lambda(\vw^t)\rangle\\
    -&\eta_0\E\langle \nabla_{w_0}F_\lambda(\vw^t),(1-P)\nabla_{w_0} F_\lambda(\vw^t)\rangle\\
    \overset{2)}{\le} & \frac{(1-P)\eta_0}{2}\E\left[\norm{\nabla_{w_0}F_\lambda(\vw^t)}^2\right] +  \frac{(1-P)\eta_0}{2} \E\left[\norm{\nabla_{w_0} F_\lambda(\Tilde{\vw}^t)-\nabla_{w_0} F_\lambda(\vw^t)}^2\right]-(1-P)\eta_0\E\left[\norm{\nabla_{w_0}F_\lambda(\vw^t)}^2\right] \\
    \overset{3)}{\le} & -\frac{(1-P)\eta_0}{2}\E\left[\norm{\nabla_{w_0}F_\lambda(\vw^t)}^2\right] +  \frac{\eta_0 L}{2} \E\left[\norm{\Tilde{\chi}^t-\chi^t}^2\right]\numbereq\label{eq:E1}
\end{align*}
where in $1)$ we use the fact that $\E\left[\breve{G}^t_{0}(\Tilde{\vw}^t)-(1-P)\nabla_{x_0} F_\lambda(\Tilde{\vw}^t)\right]=0$, in $2)$ we applied the Cauchy–Schwarz inequality,
and $3)$ follows by the smoothness of $F_\lambda$ and the fact that $1-P \le 1$

For $\gE_2$, we can further bound it based on Assumption~\ref{assum:smooth_full}:
\begin{align*}
    &\frac{1}{2}\E\left[\norm{\breve{G}^t_{0}(\Tilde{\vw}^t)}^2\right]\\
    =& \frac{1}{2}\E\left[\norm{\breve{G}^t_{0}(\Tilde{\vw}^t)-(1-P)\nabla_{x_0} F_\lambda(\vw^t)+(1-P)\nabla_{x_0} F_\lambda(\vw^t)}^2\right]\\
    \overset{1)}{\le}&\E\left[\norm{\breve{G}^t_{0}(\Tilde{\vw}^t)-(1-P)\nabla_{x_0} F_\lambda(\vw^t)}^2\right]+(1-P)^2\E\left[\norm{\nabla_{x_0} F_\lambda(\vw^t)}^2\right]\\
    =&\E\left[\norm{\breve{G}^t_{0}(\Tilde{\vw}^t)-(1-P)\nabla_{x_0} F_\lambda(\Tilde{\vw}^t)+(1-P)\nabla_{x_0} F_\lambda(\Tilde{\vw}^t)-(1-P)\nabla_{x_0} F_\lambda(\vw^t)}^2\right]\\
    &+(1-P)^2\E\left[\norm{\nabla_{x_0} F_\lambda(\vw^t)}^2\right]\\
    \overset{2)}{\le}&2\E\left[\norm{\breve{G}^t_{0}(\Tilde{\vw}^t)-(1-P)\nabla_{x_0} F_\lambda(\Tilde{\vw}^t)}^2\right]+2(1-P)^2\E\left[\norm{\nabla_{x_0} F_\lambda(\Tilde{\vw}^t)-\nabla_{x_0} F_\lambda(\vw^t)}^2\right]\\
    &+(1-P)^2\E\left[\norm{\nabla_{x_0} F_\lambda(\vw^t)}^2\right]\\
    \overset{3)}{\le}& \frac{2(1-P)}{B}(2d_0 \E\left[\norm{\nabla_{x_0} F(\vw^t)}^2\right] + 2d_0\sigma_s^2 + \frac{L^2}{2} \lambda^2 d_0^3 )+\frac{2P}{B} C^2 d_0\\
    &-\frac{2(1-P)^2}{B}\E\left[\norm{\nabla_{x_0} F_\lambda(\vw^t)}^2\right]+2\sigma_{dp}^2 d_0 +2(1-P)^2L^2\E\left[\norm{\Tilde{\chi}^t-\chi^t}^2\right]+(1-P)^2\E\left[\norm{\nabla_{x_0} F_\lambda(\vw^t)}^2\right]\\
    \overset{4)}{\le}&\frac{4(1-P)d_0}{B}\E\left[\norm{\nabla_{x_0} F(\vw^t)}^2\right]+(1-P)\E\left[\norm{\nabla_{x_0} F_\lambda(\vw^t)}^2\right]+2 L^2\E\left[\norm{\Tilde{\chi}^t-\chi^t}^2\right]+\gG_{0}\numbereq\label{eq:E2}
\end{align*}
where in $1)$ and $2)$ we applied the Cauchy–Schwarz inequality,
and in $3)$ we substitute \eqref{eq:G_var} in and use the $L$-smoothness of $F_\lambda$, and in $(iv)$ we use the fact that $1-P \le 1$ and let 
\[\gG_{0} = \frac{4}{B}d_0\sigma_s^2 + \frac{L^2}{B} \lambda^2 d_0^3 +\frac{2P}{B} C^2 d_0+2\sigma_{dp}^2 d_0\]

Similarly, For $\gE_3$:
\begin{align}
    -\eta_{m_k}\E\langle \nabla_{w_{m_k}}F_\lambda(\vw^t), \breve{G}^t_{m}(\Tilde{\vw}^t)\rangle\le -\frac{(1-P)\eta_{m_k}}{2}\E\left[\norm{\nabla_{w_{m_k}}F_\lambda(\vw^t)}^2\right] +  \frac{\eta_{m_k} L}{2} \E\left[\norm{\Tilde{\chi}^t-\chi^t}^2\right]\label{eq:E3}
\end{align}

And for $\gE_4$:
\begin{align}
    \frac{1}{2}\E\left[\norm{\breve{G}^t_{m_k}(\Tilde{\vw}^t)}^2\right]
    \le&\frac{4(1-P)d_{m}}{B}\E\left[\norm{\nabla_{w_{m_k}} F(\vw^t)}^2\right]+(1-P)\E\left[\norm{\nabla_{w_{m_k}} F_\lambda(\vw^t)}^2\right]\\
    &+2L^2\E\left[\norm{\Tilde{\chi}^t-\chi^t}^2\right]+\gG_{m}\label{eq:E4}
\end{align}
where we let 
\begin{align}
\gG_{m} = \frac{4}{B}d_m\sigma_s^2 + \frac{L^2}{B} \lambda^2 d_m^3 +\frac{2P}{B} C^2 d_m+2\sigma_{dp}^2 d_m\label{eq:gm}
\end{align}
Substituting \eqref{eq:E1}, \eqref{eq:E2}, \eqref{eq:E3}, and \eqref{eq:E4} into \eqref{eq:first_step}, we have
\begin{align*}
    &\E\left[F(\vw^{t+1})-F(\vw^{t}) \right]\\
    \le&\E\left[F_\lambda(\vw^{t})\right]-\frac{(1-P)\eta_0}{2}\E\left[\norm{\nabla_{w_0}F_\lambda(\vw^t)}^2\right] +  \frac{\eta_0 L}{2} \E\left[\norm{\Tilde{\chi}^t-\chi^t}^2\right]\\
    +&\frac{4(1-P)d_0 L \eta_0^2}{B}\E\left[\norm{\nabla_{w_0} F(\vw^t)}^2\right]+(1-P) L \eta_0^2 \E\left[\norm{\nabla_{w_0} F_\lambda(\vw^t)}^2\right]+2 L^3\eta_0^2\E\left[\norm{\Tilde{\chi}^t-\chi^t}^2\right]+L\eta_0^2\gG_{0}\\
    -&\frac{(1-P)\eta_{m_k}}{2}\E\left[\norm{\nabla_{w_{m_k}}F_\lambda(\vw^t)}^2\right] +  \frac{\eta_{m_k} L}{2} \E\left[\norm{\Tilde{\chi}^t-\chi^t}^2\right]\\
    +&\frac{4(1-P)d_{m_k} L\eta_{m_k}^2}{B}\E\left[\norm{\nabla_{w_{m_k}} F(\vw^t)}^2\right]+(1-P)L\eta_{m_k}^2\E\left[\norm{\nabla_{w_{m_k}} F_\lambda(\vw^t)}^2\right]\\
    +&2L^3\eta_{m_k}^2\E\left[\norm{\Tilde{\chi}^t-\chi^t}^2\right]+L \eta_{m_k}^2\gG_{m}\\
    \le&\E\left[F_\lambda(\vw^{t})\right]-\eta_0(1-P)\left(\frac{1}{2}-\eta_0 L\right)\E\left[\norm{\nabla_{w_0}F_\lambda(\vw^t)}^2\right] + \eta_0 L \left(\frac{1}{2}+2\eta_0 L^2\right)\E\left[\norm{\Tilde{\chi}^t-\chi^t}^2\right]\\
    +&\eta_0^2\frac{4(1-P)d_0 L }{B}\E\left[\norm{\nabla_{w_0} F(\vw^t)}^2\right]+\eta_0^2L\gG_{0}\\
    -&\sum_{m=1}^M q_m\eta_{m}(1-P)\left(\frac{1}{2}-\eta_{m} L\right)\E\left[\norm{\nabla_{w_m}F_\lambda(\vw^t)}^2\right] + \sum_{m=1}^M q_m\eta_{m} L \left(\frac{1}{2}+2\eta_{m} L^2\right)\E\left[\norm{\Tilde{\chi}^t-\chi^t}^2\right]\\
    +&\sum_{m=1}^{M} q_m \eta_{m}^2 \frac{4(1-P)d_m L}{B}\E\left[\norm{\nabla_{w_m} F(\vw^t)}^2\right]+\sum_{m=1}^M q_m \eta_{m}^2 L \gG_{m}\numbereq\label{eq:combine}
\end{align*}
where in the last inequality we further take expectation w.r.t. device $M$ and combine similar terms.

From \eqref{eq:combine}, we utilize the properties of the smooth function \eqref{eq:func_diff} and \eqref{eq:func_squared_2} to turn all the smooth function $F_\lambda$ into the true loss function $F$:
\begin{align*}
    &\E\left[F(\vw^{t+1})-F(\vw^{t}) \right]\overset{1)}{\le} \E\left[F_\lambda(\vw^{t+1})-F_\lambda(\vw^{t})\right] + L\lambda^2 d\\
    \overset{2)}{\le}&- \eta_0(1-P)\left(\frac{1}{2}-\eta_0 L\right)\left(\frac{1}{2}\E\left[\norm{\nabla_{x_{0}}F(\vw^t)}^2\right]-\frac{L^2}{4}\lambda^2 d_0^3\right) +  \eta_0 L \left(\frac{1}{2}+2\eta_0 L^2\right)\E\left[\norm{\Tilde{\chi}^t-\chi^t}^2\right]\\
    +&\eta_0^2\frac{4(1-P)d_0 L}{B}\E\left[\norm{\nabla_{x_{0}} F(\vw^t)}^2\right]+ \eta_0^2 L \gG_{0}\\
    -&\sum_{m=1}^M q_m\eta_m(1-P)\left(\frac{1}{2}-\eta_m L\right)\left(\frac{1}{2}\E\left[\norm{\nabla_{w_{m}}F(\vw^t)}^2\right]-\frac{L^2}{4}\lambda^2 d_m^3\right) \\
    +&  \sum_{m=1}^{M}q_m\eta_m L \left(\frac{1}{2}+2\eta_m L^2\right)\E\left[\norm{\Tilde{\chi}^t-\chi^t}^2\right]\\
    +&\sum_{m=1}^{M} q_m \eta_m^2\frac{4(1-P)d_m L}{B}\E\left[\norm{\nabla_{w_{m}} F(\vw^t)}^2\right]+\sum_{m=1}^M q_m\eta_m^2 L \gG_{m} + L\lambda^2 d\\
    \overset{3)}{\le}&-\sum_{m=0}^M q_m\eta_m(1-P)\left(\frac{1}{4}-\frac{\eta_m L (B+8d_m)}{2B}\right)\E\left[\norm{\nabla_{w_{m}}F(\vw^t)}^2\right]\\
    + & \sum_{m=0}^{M}q_m\eta_m L \left(\frac{1}{2}+2\eta_m L^2\right)\E\left[\norm{\Tilde{\chi}^t-\chi^t}^2\right]+\sum_{m=0}^M q_m\eta_m\left(\frac{1}{2}-\eta_m L\right)\frac{L^2}{4}\lambda^2 d_m^3\\
    +&\sum_{m=0}^M q_m \eta_m^2 L \gG_{m} + L\lambda^2 d\\
    \overset{4)}{\le}&-\sum_{m=0}^M q_m\eta_m(1-P)\left(\frac{1}{4}-\frac{\eta_m L (B+8d_m)}{2B}\right)\E\left[\norm{\nabla_{w_{m}}F(\vw^t)}^2\right]\\
    +&\sum_{m=0}^{M}q_m\eta_m L \left(\frac{1}{2}+2\eta_m L^2\right)\E\left[\norm{\Tilde{\chi}^t-\chi^t}^2\right]\\
    + &\sum_{m=0}^M q_m\eta_m \frac{L^2}{8}\lambda^2 d_m^3+ \sum_{m=0}^M q_m \eta_m^2 L \left(\frac{4}{B}d_m\sigma_s^2 + \frac{(4-B)L^2}{4B} \lambda^2 d_m^3 +\frac{2P}{B} C^2 d_m+2\sigma_{dp}^2 d_m\right) + L\lambda^2 d\\
    \overset{5)}{\le}&-\sum_{m=0}^M q_m\eta_m(1-P)\left(\frac{1}{4}-\frac{\eta_m L (B+8d_m)}{2B}\right)\E\left[\norm{\nabla_{w_{m}}F(\vw^t)}^2\right]\\
    + &\sum_{m=0}^{M}q_m\eta_m L \left(\frac{1}{2}+2\eta_m L^2\right)\E\left[\norm{\Tilde{\chi}^t-\chi^t}^2\right]+\gA_1\numbereq\label{eq:main_1}
\end{align*}
where $1)$ and $2)$ follows from equation \eqref{eq:func_diff} and \eqref{eq:func_squared_2} in Lemma \ref{lem:zero} respectively. In $3)$, we let $q_0=1$ and combine similar terms. In $4)$, we substitute in \eqref{eq:gm}. Lastly, in $5)$, we denote
\begin{align}\label{eq:A1}
    \gA_1 = \sum_{m=0}^M q_m\eta_m \frac{L^2}{8}\lambda^2 d_m^3+ \sum_{m=0}^M q_m \eta_m^2 L \left(\frac{4}{B}d_m\sigma_s^2 + \frac{(4-B)L^2}{4B} \lambda^2 d_m^3 +\frac{2P}{B} C^2 d_m+2\sigma_{dp}^2 d_m\right) + L\lambda^2 d
\end{align}for the convenience of notation. 

We thus complete the proof.
\end{proof}

\begin{lemma}[Restatement of Lemma \ref{lem:main_2}]
    Under Assumption \ref{assum:Lip_full}-\ref{assum:ind_part_full}, and assume the device delay $\tau_m$ is uniformly upper bounded by $\tau$, we have the following lemma:
    \begin{align*}
        \E\left[V^{t+1}-V^{t}\right]\le-\frac{1-P}{8}\min_m\{q_m\eta_m\} \E\left[\norm{\nabla_{\vw}F(\vw^t)}^2\right]+\gA_1 + \gA_2
    \end{align*}
\end{lemma}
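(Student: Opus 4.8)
The plan is to unfold the Lyapunov increment, insert the one-step descent of $F$ from Lemma~\ref{lem:main_1}, and show that the remaining Lyapunov bookkeeping terms either cancel the delay term produced by Lemma~\ref{lem:main_1} or are absorbed into $\gA_2$. \textbf{Step 1 (telescope $V$).} Writing $V^t = F(\vw^t)+\sum_{i=1}^{\tau}\gamma_i\norm{\chi^{t+1-i}-\chi^{t-i}}^2$ and re-indexing the shifted sum gives
\begin{align*}
V^{t+1}-V^t &= F(\vw^{t+1})-F(\vw^t) + \gamma_1\norm{\chi^{t+1}-\chi^t}^2 \\
&\quad + \sum_{i=1}^{\tau-1}(\gamma_{i+1}-\gamma_i)\norm{\chi^{t+1-i}-\chi^{t-i}}^2 - \gamma_\tau\norm{\chi^{t+1-\tau}-\chi^{t-\tau}}^2 .
\end{align*}
Taking expectations and substituting Lemma~\ref{lem:main_1} for $\E[F(\vw^{t+1})-F(\vw^t)]$ leaves three things to control: the negative gradient-norm terms, the delay term $\E\norm{\Tilde\chi^t-\chi^t}^2$ carrying the coefficient $\sum_{m}q_m\eta_m L(\tfrac12+2\eta_m L^2)$, and the new displacement term $\gamma_1\E\norm{\chi^{t+1}-\chi^t}^2$ (note $\chi$ contains device blocks only, so $m=0$ contributes to the first and not to the last).

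\textbf{Step 2 (bound the one-step displacement).} Since only the activated device updates at round $t$, $\chi^{t+1}-\chi^t = -\eta_m\breve{G}_m^t(\Tilde\vw^t)$ in the activated block $m$, so $\E\norm{\chi^{t+1}-\chi^t}^2 = \sum_{m=1}^{M}q_m\eta_m^2\,\E\norm{\breve{G}_m^t(\Tilde\vw^t)}^2$. I would then apply the second-moment estimate implied by Lemma~\ref{lem:var} (variance plus the squared mean $(1-P)^2\norm{\nabla_{w_m}F_\lambda(\Tilde\vw^t)}^2$), convert $F_\lambda$ back to $F$ via \eqref{eq:func_squared_1}--\eqref{eq:func_squared_2}, and bound $\norm{\nabla_{w_m}F(\Tilde\vw^t)}^2$ by $\norm{\nabla_{w_m}F(\vw^t)}^2$ plus an $L^2\norm{\Tilde\chi^t-\chi^t}^2$ term. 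This splits $\gamma_1\E\norm{\chi^{t+1}-\chi^t}^2$ into (a) a gradient-norm contribution $\gamma_1\sum_{m}q_m\eta_m^2\,\gO(d_m/B+1)\,\E\norm{\nabla_{w_m}F(\vw^t)}^2$, (b) an extra delay contribution of order $\gamma_1\eta^2 L^2\tau$, and (c) a constant/noise contribution which, after collecting, is exactly $\gA_2$.

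\textbf{Step 3 (delays and choice of $\gamma_i$).} Since each device delay is at most $\tau$ and consecutive server rounds move $\chi$ in a single block, Cauchy--Schwarz gives $\E\norm{\Tilde\chi^t-\chi^t}^2 \le \tau\sum_{i=1}^{\tau}\E\norm{\chi^{t+1-i}-\chi^{t-i}}^2$. Substituting this for both the Lemma~\ref{lem:main_1} delay term and the leftover delay term from Step 2, the net coefficient of $\E\norm{\chi^{t+1-i}-\chi^{t-i}}^2$ becomes $\gamma_{i+1}-\gamma_i+c\tau$ for $i<\tau$ and $-\gamma_\tau+c\tau$ for $i=\tau$, where $c = \sum_{m}q_m\eta_m L(\tfrac12+2\eta_m L^2)+\gO(\gamma_1\eta^2 L^2)$. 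Choosing $\gamma_i$ by the backward recursion $\gamma_\tau\ge c\tau$, $\gamma_i\ge\gamma_{i+1}+c\tau$ (e.g.\ $\gamma_i=(\tau-i+1)c\tau$, so $\gamma_1=c\tau^2$) makes every such coefficient $\le 0$, so all displacement terms drop out. \textbf{Step 4 (collect the gradient term).} What remains is the negative term $-\sum_{m=0}^{M}q_m\eta_m(1-P)(\tfrac14-\tfrac{\eta_m L(B+8d_m)}{2B})\E\norm{\nabla_{w_m}F(\vw^t)}^2$ from Lemma~\ref{lem:main_1} plus the positive contribution (a). The step-size bound $\eta\le\frac{B}{4L(B+8d_0)+8\gamma_1(2d_m+B)}$ is designed so that the combined coefficient of each $\E\norm{\nabla_{w_m}F(\vw^t)}^2$ is $\le-\tfrac{1-P}{8}q_m\eta_m$; summing over $m$ and using $\sum_m q_m\eta_m\E\norm{\nabla_{w_m}F(\vw^t)}^2 \ge \min_m\{q_m\eta_m\}\,\E\norm{\nabla_{\vw}F(\vw^t)}^2$ yields the claimed coefficient, while the leftover constants are exactly $\gA_1$ (inherited from Lemma~\ref{lem:main_1}) plus $\gA_2$ (Step 2(c)).

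\textbf{Main obstacle.} The delicate part is the joint bookkeeping in Step~3: the same displacement terms $\norm{\chi^{t+1-i}-\chi^{t-i}}^2$ are fed simultaneously by the delay term of Lemma~\ref{lem:main_1}, by the delayed ZO estimator inside $\gamma_1\norm{\chi^{t+1}-\chi^t}^2$, and by the telescoped Lyapunov differences, so the $\gamma_i$ recursion and the step-size constraint must be made mutually consistent; in particular $\gamma_1$ depends on $\eta$ through $c$ while $\eta$ is constrained in terms of $\gamma_1$, and one has to verify that $\eta$ small enough makes $\gamma_1\eta=\gO(\eta^2)$ so that this circularity is harmless and the $\eta^2$ scaling of $\gA_2$ is legitimate.
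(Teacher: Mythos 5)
Your proposal follows essentially the same route as the paper: telescope the Lyapunov sum, insert Lemma~\ref{lem:main_1}, bound the one-step displacement $\E\norm{\chi^{t+1}-\chi^t}^2$ via the second moment of the clipped private ZO estimator, apply Cauchy--Schwarz to relate $\E\norm{\Tilde\chi^t-\chi^t}^2$ to the stored displacements, and pick the $\gamma_i$ by a linear backward recursion so the displacement coefficients are nonpositive. The circularity you flag between $\gamma_1$ and $\eta$ is exactly what the paper resolves by solving for $\gamma_1$ explicitly in \eqref{eq:gamma1} (moving the $\gamma_1$-dependent part of the coefficient to the left-hand side), so your plan is correct and complete in the same way.
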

\begin{proof}
Before we give the proof of Lemma \ref{lem:main_2}, we first provide some useful facts that reveal properties of the delayed parameters.

Recall that $\Tilde{\chi}^{t}$ denote the delayed parameters on all devices, and $\chi^{t}$ denote the non-delayed version.
Let $\gF_1 = \E\left[\norm{\chi^{t+1}-\chi^{t}}^2\right]$, $\gF_2 = \E\left[\norm{\Tilde{\chi}^t-\chi^t}^2\right]$. 

For $\gF_1$:
\begin{align*}
    &\E\left[\norm{\chi^{t+1}-\chi^{t}}^2\right]\\
    =&\eta_m^2\E\left[\norm{\breve{G}^t_{m_k}(\Tilde{\vw}^t)}^2\right]\\
    \le&\sum_{m=1}^{M}q_m\eta_m^2\frac{2(1-P)d_{m}}{B}\E\left[\norm{\nabla_{w_{m_k}} F(\vw^t)}^2\right]+\sum_{m=1}^{M}q_m\eta_m^2\frac{(1-P)}{2}\E\left[\norm{\nabla_{w_{m_k}} F_\lambda(\vw^t)}^2\right]\\
    +&\sum_{m=1}^{M}q_m\eta_m^2\left(L^2\E\left[\norm{\Tilde{\chi}^t-\chi^t}^2\right]+\frac{1}{2}\gG_{m}\right)\\
    \le&\sum_{m=1}^{M}q_m\eta_m^2\frac{2(1-P)d_{m}}{B}\E\left[\norm{\nabla_{w_{m_k}} F(\vw^t)}^2\right]+\sum_{m=1}^{M}q_m\eta_m^2\frac{(1-P)}{2}\left(2\E\left[\norm{\nabla_{w_{m_k}} F(\vw^t)}^2\right]+\frac{L^2}{2}\lambda^2 d_m^3\right)\\
    +&\sum_{m=1}^{M}q_m\eta_m^2\left(L^2\E\left[\norm{\Tilde{\chi}^t-\chi^t}^2\right]+\frac{1}{2}\gG_{m}\right)\\
     =&\sum_{m=1}^{M}q_m\eta_m^2\frac{(1-P)(2d_{m}+B)}{B}\E\left[\norm{\nabla_{w_{m_k}} F(\vw^t)}^2\right]+\sum_{m=1}^{M}q_m\eta_m^2\left(\frac{L^2}{4}\lambda^2 d_m^3+L^2\E\left[\norm{\Tilde{\chi}^t-\chi^t}^2\right]+\frac{1}{2}\gG_{m}\right)\numbereq\label{eq:gf1}
\end{align*}
For $\gF_2$, under uniformly bounded delay, we have
\begin{align*}
    &\E\left[\norm{\Tilde{\chi}^t-\chi^t}^2\right]\\
    \le &\E\left[\norm{\sum_{i=1}^{\tau}(\chi^{i+1}-\chi^i)}^2\right]\\
    \le &\tau\sum_{i=1}^{\tau}\E\left[\norm{\chi^{i+1}-\chi^i}^2\right]\numbereq\label{eq:gf2}
\end{align*}
where the last inequality follows by Cauchy-Schwarz Inequality.\\
By the definition of $V^t$:
\begin{align*}
    &\E\left[V^{t+1}-V^{t}\right]\\
    =&\E\left[ F(\vw^{t+1})+\sum_{i=1}^{\tau}\gamma_i\norm{\chi^{t+2-i}-\chi^{t+1-i}}^2\right]-\E\left[ F(\vw^{t})+\sum_{i=1}^{\tau}\gamma_i\norm{\chi^{t+1-i}-\chi^{t-i}}^2\right]\\
    =&\E\left[F(\vw^{t+1})-F(\vw^{t})\right]+\sum_{i=1}^{\tau}\gamma_i\E\left[\norm{\chi^{t+2-i}-\chi^{t+1-i}}^2\right]-\sum_{i=1}^{\tau}\gamma_i\E\left[\norm{\chi^{t+1-i}-\chi^{t-i}}^2\right]\\
    \overset{1)}{\le} & -\sum_{m=0}^M q_m\eta_m(1-P)\left(\frac{1}{4}-\frac{\eta_m L (B+8d_m)}{2B}\right)\E\left[\norm{\nabla_{w_{m}}F(\vw^t)}^2\right]\\
    + &\sum_{m=0}^{M}q_m\eta_m L \left(\frac{1}{2}+2\eta_m L^2\right)\E\left[\norm{\Tilde{\chi}^t-\chi^t}^2\right]+\gA_1\\
    +& \gamma_1\underbrace{\E\left[\norm{\chi^{t+1}-\chi^{t}}^2\right]}_{\gF_1}+\sum_{i=1}^{\tau-1}(\gamma_{i+1}-\gamma_{i})\E\left[\norm{\chi^{t+1-i}-\chi^{t-i}}^2\right]- \gamma_\tau\E\left[\norm{\chi^{t+1-\tau}-\chi^{t-\tau}}^2\right]\\
    \overset{2)}{\le} & -\sum_{m=0}^M q_m\eta_m(1-P)\left(\frac{1}{4}-\frac{\eta_m L (B+8d_m)}{2B}\right)\E\left[\norm{\nabla_{w_{m}}F(\vw^t)}^2\right]\\
    + &\sum_{m=0}^{M}q_m\eta_m L \left(\frac{1}{2}+2\eta_m L^2\right)\E\left[\norm{\Tilde{\chi}^t-\chi^t}^2\right]+\gA_1\\
    +&\sum_{m=1}^{M}q_m\eta_m^2\gamma_1\frac{(1-P)(2d_{m}+B)}{B}\E\left[\norm{\nabla_{w_{m_k}} F(\vw^t)}^2\right]+\sum_{m=1}^{M}q_m\eta_m^2\gamma_1\left(\frac{L^2}{4}\lambda^2 d_m^3+L^2\E\left[\norm{\Tilde{\chi}^t-\chi^t}^2\right]+\frac{1}{2}\gG_{m}\right)\\
    +& \sum_{i=1}^{\tau-1}(\gamma_{i+1}-\gamma_{i})\E\left[\norm{\chi^{t+1-i}-\chi^{t-i}}^2\right]- \gamma_\tau\E\left[\norm{\chi^{t+1-\tau}-\chi^{t-\tau}}^2\right]\\
    \overset{3)}{=} & -\eta_0(1-P)\left(\frac{1}{4}-\frac{\eta_0 L (B+8d_0)}{2B}\right)\E\left[\norm{\nabla_{x_{0}}F(\vw^t)}^2\right]\\
    -&\sum_{m=1}^M q_m\eta_m(1-P)\left(\frac{1}{4}-\frac{\eta_m L (B+8d_m)}{2B}-\frac{\eta_m\gamma_1(2d_{m}+B)}{B}\right)\E\left[\norm{\nabla_{w_{m}}F(\vw^t)}^2\right]\\
    + &\left\{\eta_0 L \left(\frac{1}{2}+2\eta_0 L^2\right)+\sum_{m=1}^{M}q_m\eta_m L \left(\frac{1}{2}+2\eta_m L^2+\eta_m \gamma_1 L\right)\right\}\underbrace{\E\left[\norm{\Tilde{\chi}^t-\chi^t}^2\right]}_{\gF_2}\\
    +&\gA_1 + \sum_{m=1}^{M}q_m\eta_m^2\gamma_1\left(\frac{L^2}{4}\lambda^2 d_m^3+\frac{1}{2}\gG_{m}\right)\\
    +& \sum_{i=1}^{\tau-1}(\gamma_{i+1}-\gamma_{i})\E\left[\norm{\chi^{t+1-i}-\chi^{t-i}}^2\right]- \gamma_\tau\E\left[\norm{\chi^{t+1-\tau}-\chi^{t-\tau}}^2\right]\\
    \overset{4)}{\le} & -\eta_0(1-P)\left(\frac{1}{4}-\frac{\eta_0 L (B+8d_0)}{2B}\right)\E\left[\norm{\nabla_{x_{0}}F(\vw^t)}^2\right]\\
    -&\sum_{m=1}^M q_m\eta_m(1-P)\left(\frac{1}{4}-\frac{\eta_m L (B+8d_m)}{2B}-\frac{\eta_m\gamma_1(2d_{m}+B)}{B}\right)\E\left[\norm{\nabla_{w_{m}}F(\vw^t)}^2\right]\\
    + & \sum_{i=1}^{\tau-1}\left(\gamma_{i+1}-\gamma_{i}+\tau\left(\eta_0 L \left(\frac{1}{2}+2\eta_0 L^2\right)+\sum_{m=1}^{M}q_m\eta_m L \left(\frac{1}{2}+2\eta_m L^2+\eta_m \gamma_1 L\right)\right)\right)\E\left[\norm{\chi^{t+1-i}-\chi^{t-i}}^2\right]\\
    -& \left(\gamma_\tau - \tau\left(\eta_0 L \left(\frac{1}{2}+2\eta_0 L^2\right)+\sum_{m=1}^{M}q_m\eta_m L \left(\frac{1}{2}+2\eta_m L^2+\eta_m \gamma_1 L\right)\right)\right)\E\left[\norm{\chi^{t+1-\tau}-\chi^{t-\tau}}^2\right]\\
    +&\gA_1 + \gA_2\numbereq.\label{eq:vcomplex}
\end{align*}
Above, we used Lemma \ref{lem:main_1} in step $(1)$,  substituted in \eqref{eq:gf1} for $\gF_1$ in step $(2)$, substituted in \eqref{eq:gf2} for $\gF_2$ in step (3), and  defined 
\begin{align*}
    \gA_2 :=& \sum_{m=1}^{M}q_m\eta_m^2\gamma_1\left(\frac{L^2}{4}\lambda^2 d_m^3+\frac{1}{2}\gG_{m}\right)\\
    =&\sum_{m=1}^{M}q_m\eta_m^2\gamma_1\left(\frac{L^2}{4}\lambda^2 d_m^3+\frac{2}{B}d_m\sigma_s^2 + \frac{L^2}{2B} \lambda^2 d_m^3 +\frac{P}{B} C^2 d_m+\sigma_{dp}^2 d_m\right).\numbereq\label{eq:A2}
\end{align*}

From \eqref{eq:vcomplex}, we choose the following relationship for $\gamma_1, \gamma_2, \dots, \gamma_m$:
\begin{align}\label{eq:gamma1}
\gamma_{1} = \frac{\tau^2\left(\eta_0 L \left(\frac{1}{2}+2\eta_0 L^2\right)+\sum_{m=1}^{M}q_m\eta_m L \left(\frac{1}{2}+2\eta_m L^2\right)\right)}{1-\tau^2\sum_{m=1}^{M}q_m\eta_m^2 L^2 }
\end{align}
\[\gamma_{i+1}=\gamma_{i}-\tau\left(\eta_0 L \left(\frac{1}{2}+2\eta_0 L^2\right)+\sum_{m=1}^{M}q_m\eta_m L \left(\frac{1}{2}+2\eta_m L^2+\eta_m \gamma_1 L\right)\right)\]
and we can verify that 
\[\gamma_{\tau}-\tau\left(\eta_0 L \left(\frac{1}{2}+2\eta_0 L^2\right)+\sum_{m=1}^{M}q_m\eta_m L \left(\frac{1}{2}+2\eta_m L^2+\eta_m \gamma_1 L\right)\right)\ge 0\]
We further let
\[\eta_0\le \frac{B}{4L(B+8d_0)}, \eta_m\le \frac{B}{4L(B+8d_0)+8\gamma_1(2d_m+B)},\]
and we finally have 
\begin{align*}
     \E\left[V^{t+1}-V^{t}\right]
    &\le-\frac{\eta_0}{8}(1-P)\E\left[\norm{\nabla_{x_{0}}F(\vw^t)}^2\right]
    -\sum_{m=1}^M \frac{q_m\eta_m}{8}(1-P)\E\left[\norm{\nabla_{w_{m}}F(\vw^t)}^2\right]+\gA_1 + \gA_2\\
    &\le-\frac{1-P}{8}\min_m q_m\eta_m \E\left[\norm{\nabla_{\vw}F(\vw^t)}^2\right]+\gA_1 + \gA_2\numbereq\label{eq:main_2}
\end{align*}
which completes the proof.
\end{proof}
\subsection{Additional Details on Experiment}\label{appen:dataset}
In this section, we would like to give a brief introduction of the datasets and model structures and further justify the choice of the experimental designs below. We follow the experimental settings of existing works\cite{chen2020vafl,xie2024improving,castiglia2023flexible} for a fair comparison.

\textbf{Dataset} The choices of datasets cover a large range of tasks, including:
\begin{itemize}
    \item MNIST and CIFAR-10: standard image benchmarks in machine learning.
    \item ModelNet40: multi-view 3D object classification dataset; each object has 12 views from different angles, which naturally lends itself to feature partitioning across devices in VFL.
    \item Amazon Reviews: sentiment analysis task, which we include to evaluate our algorithm in the NLP domain.
\end{itemize}
\textbf{Data partition and model selection}
For MNIST, we use a two-layer CNN model, for VFL's data partitioning, we split the images by row evenly into 7 sub-images and assign them to 7 devices. For CIFAR-10 we use a four-layer CNN model, and partition each image into 2×2, 4 patches of the same size for 4 devices. For ModelNet40, we use a ResNet-18 model, and partition each object into 12 different camera views and allocate them to 12 devices. For Amazon Reviews, we use a pre-trained BERT~\cite{devlin2018bert} model, and split the tokenized data input into 3 paragraphs of the same number of tokens and distributed them across 3 devices.
For all four datasets, we use a fully connected model of two linear layers with ReLU activations as the server model. 

\end{document}